\documentclass{article}

\usepackage{microtype}
\usepackage{graphicx}
\usepackage{subcaption}
\usepackage{booktabs} 
\usepackage{enumitem}

\usepackage{hyperref}
\usepackage{xurl}

\usepackage[preprint]{icml2026}

\usepackage{amsmath}
\usepackage{amssymb}
\usepackage{mathtools}
\usepackage{amsthm}
\usepackage{algorithm}
\usepackage{algorithmic}
\usepackage{placeins}

\usepackage{xcolor}

\newcommand{\algstep}[1]{%
  {\color{black!60}\textit{#1}}%
}

\usepackage{natbib}

\usepackage[capitalize,noabbrev]{cleveref}

\newcommand{\paren}[1]{\left( #1 \right)}
\newcommand{\sqbrac}[1]{\left[ #1 \right]}
\newcommand{\set}[1]{\left\{ #1 \right\}}
\newcommand{\abs}[1]{\left\vert #1 \right\vert}
\newcommand{\norm}[2]{\left\Vert #1 \right\Vert_{#2}}

\newcommand{\calD}{\mathcal{D}}
\newcommand{\calL}{\mathcal{L}}
\newcommand{\calO}{\mathcal{O}}
\newcommand{\calP}{\mathcal{P}}

\newcommand{\bbE}{\mathbb{E}}
\newcommand{\bbN}{\mathbb{N}}
\newcommand{\bbP}{\mathbb{P}}

\newcommand{\bfp}{\mathbf{p}}
\newcommand{\bfu}{\mathbf{u}}
\newcommand{\bfv}{\mathbf{v}}
\newcommand{\bfw}{\mathbf{w}}
\newcommand{\bfx}{\mathbf{x}}
\newcommand{\bfmu}{\boldsymbol{\mu}}

\newcommand{\muUpper}{\mu^{\uparrow}}
\newcommand{\muLower}{\mu^{\downarrow}}
\newcommand{\bfmuUpper}{\bfmu^{\uparrow}}
\newcommand{\bfmuLower}{\bfmu^{\downarrow}}

\newcommand{\bfxlower}{\mathbf{x}^{\downarrow}}
\newcommand{\bfxupper}{\mathbf{x}^{\uparrow}}
\newcommand{\xlower}{x^{\downarrow}}
\newcommand{\xupper}{x^{\uparrow}}

\newcommand{\pUpper}{p^{\uparrow}}

\newcommand{\bfpUpper}{\bfp^{\uparrow}}
\newcommand{\bfpLower}{\bfp^{\downarrow}}

\theoremstyle{plain}
\newtheorem{theorem}{Theorem}[section]
\newtheorem{proposition}[theorem]{Proposition}

\newtheorem{corollary}[theorem]{Corollary}
\theoremstyle{definition}

\theoremstyle{remark}
\newtheorem{remark}[theorem]{Remark}

\newcommand{\mwpm}{M_{\text{WPM}}}
\newcommand{\mkolm}{M_{\text{Kolm}}}
\newcommand{\mgini}{M_{\text{Gini}}}

\newcommand{\RETURN}{\STATE \textbf{return}\ }

\makeatletter
\define@key{todonotes}{SF}[]{%
    \setkeys{todonotes}{author=SF,color=gray}}%
\makeatother

\newcommand{\WLower}{W^{\downarrow}}
\newcommand{\WUpper}{W^{\uparrow}}

\usepackage[textsize=tiny]{todonotes}

\icmltitlerunning{Online SWF-based Resource Allocation}

\begin{document}

\twocolumn[
  \icmltitle{Online Social Welfare Function-based Resource Allocation}



  \icmlsetsymbol{equal}{*}

  \begin{icmlauthorlist}
    \icmlauthor{Kanad Pardeshi}{mld}
    \icmlauthor{Samsara Foubert}{s3d}
    \icmlauthor{Aarti Singh}{mld}
  \end{icmlauthorlist}

  \icmlaffiliation{mld}{Machine Learning Department, Carnegie Mellon University}
  \icmlaffiliation{s3d}{Software and Societal Systems Department, Carnegie Mellon University}

  \icmlcorrespondingauthor{Kanad Pardeshi}{kpardesh@andrew.cmu.edu}

  \icmlkeywords{TODO}

  \vskip 0.3in
]



\printAffiliationsAndNotice{}  

\begin{abstract}
In many real-world settings, a centralized decision-maker must repeatedly allocate finite resources to a population over multiple time steps. Individuals who receive a resource derive some stochastic utility; to characterize the population-level effects of an allocation, the expected individual utilities are then aggregated using a social welfare function (SWF). We formalize this setting and present a general confidence sequence framework for SWF-based online learning and inference, valid for any monotonic, concave, and Lipschitz-continuous SWF. Our key insight is that monotonicity alone suffices to lift confidence sequences from individual utilities to anytime-valid bounds on optimal welfare. Building on this foundation, we propose SWF-UCB, a SWF-agnostic online learning algorithm that achieves near-optimal $\tilde{\mathcal{O}}(n+\sqrt{nkT})$ regret (for $k$ resources distributed among $n$ individuals at each of $T$ time steps). We instantiate our framework on three normatively distinct SWF families: Weighted Power Mean, Kolm, and Gini, providing bespoke oracle algorithms for each. Experiments confirm $\sqrt{T}$ scaling and reveal rich interactions between $k$ and SWF parameters. This framework naturally supports inference applications such as sequential hypothesis testing, optimal stopping, and policy evaluation. 
\end{abstract}

\section{Introduction}
\label{sec:intro}

Decision-makers routinely face the problem of allocating limited resources to a population over time: 
a park service assigning ranger patrols to backcountry zones,
a chronic care program allocating community health worker visits among patients,
or a school district allotting tutoring slots among students.
Evaluating such allocations requires aggregating individual outcomes into a coherent measure of collective welfare.
Social Welfare Functions (SWFs) provide a principled method for this aggregation, with axiomatic foundations from welfare economics that encode fairness-efficiency tradeoffs in a form amenable to optimization.
Recent ML work has formalized SWFs within learning theory, providing sample complexity bounds for welfare estimation from batch data ~\cite{cousins2021axiomatic, pardeshi2024learningSWF}.

Real-world resource allocation is inherently sequential:
individual utilities are unknown and must be learned from observed outcomes, yet decisions cannot be deferred while data accumulate. 
Each resource allocation affects the population's welfare and losses from suboptimal decisions accumulate over time. 
There are two intertwined challenges.
First, decision-makers must allocate resources effectively even when utility estimates are uncertain, somehow balancing strategies to reduce uncertainty with the real costs of suboptimal assignments.
Second, they need valid statistical assessments and guarantees of welfare on demand: has the current policy achieved their desired welfare threshold? 
Is uncertainty low enough to commit to the current policy and deploy it at scale?

These questions arise adaptively, driven by externalities like budget cycles, stakeholder reviews, or shifting priorities. 
Both challenges therefore demand statistical guarantees that remain valid regardless of when they are invoked. 
Recent work has begun studying online welfare optimization in bandits, aggregating welfare across the temporal sequence of decisions to measure fairness over time (see Section ~\ref{sec:relatedwork} for details).
We instead consider settings where a fixed population receives resources repeatedly and welfare measures fairness within this persistent population at each decision point.
This fixed-population formulation has distinct structure: allocation policies are continuous probability vectors over individuals rather than discrete arm selections, multiple resources are distributed per round, and decisions are coupled through the constraint that allocation probabilities must sum to the number of available resources. 
The optimal policy thus depends on the relative utilities across all individuals, creating interdependencies absent from standard bandit problems. 
We address both challenges through a unified confidence sequence-based framework. 

Confidence sequences are interval estimates that remain valid at arbitrary stopping times, a property useful for guiding policy decisions or certifying statistical conclusions. 
Our framework requires three natural assumptions on SWFs---monotonicity, concavity, and Lipschitz continuity---each playing a distinct, minimal role. 
Our key insight is that monotonicity of the SWF alone suffices to lift coordinate-wise confidence sequences for individual utilities into anytime-valid bounds on population welfare of the current allocation (Theorem ~\ref{thm:cs_lifting}). 
This lifting principle connects a versatile statistical primitive to information that decision-makers actually need: guarantees on the optimal welfare value over the expected utilities that hold uniformly over time. 
The same machinery that enables optimistic allocation policies for online learning also supports inference applications---sequential hypothesis testing, optimal stopping, and policy evaluation---as natural byproducts.

We make the following contributions. 
First, we formalize the problem of resource allocation over a population over multiple time steps, with the objective being given by an SWF-based aggregation of the ex-ante utilities.
Second, we establish a confidence sequence lifting theorem that provides anytime-valid welfare bounds under monotonicity alone. 
Third, we instantiate this framework on three axiomatically distinct SWF families---Weighted Power Mean, Kolm, and Gini, spanning the spectrum from utilitarian to egalitarian objectives---and develop bespoke efficient oracles for policy optimization in each case. 
Fourth, we propose SWF-UCB, a general algorithm for online welfare maximization, and prove that it achieves near-optimal $\tilde{\mathcal{O}}(n+\sqrt{nkT})$ regret for $k$ resources over a population of $n$ in time $T$.
Fifth, we present experiments confirming the predicted $\sqrt{T}$ scaling and revealing that intermediate values of $k$
yield the highest regret---a non-monotonic dependence suggesting rich structure in how resource scarcity interacts with learning difficulty.
Finally, we comment on how the same confidence sequence framework can naturally be applied to meaningful inference tasks such as sequential hypothesis testing, optimal stopping, and policy evaluation (Appendix \ref{app:sec:inference_applications}).

\section{Related Work}
\label{sec:relatedwork}

Our framework unifies several research threads spanning multi-armed bandits, fair allocation, and welfare economics, through a common lens of online welfare maximization and time-uniform statistical inference.
However, unlike prior work on welfare-aware or fair bandits, which either focus on specific objectives or learning guarantees alone, our approach supports general monotone SWFs, provides near-optimal regret guarantees, and enables anytime-valid inference for optimized welfare under partial feedback.

\textbf{Multi-armed bandit variants.} Multi-play bandits \cite{anantharam1987asymptotically,gai2012combinatorial} consider the sum of the arm rewards when multiple arms can be pulled, which corresponds to utilitarian welfare in our setting. 
Combinatorial bandits \cite{cesa2012combinatorial,kveton2015tight} study finding the best set of arms to pull for the greatest reward. 
To contrast, in our setting, all individuals need to be given a resource with non-zero probability, and thus we find the best \textit{allocation} probabilities for the greatest welfare.
\cite{chen2013combinatorial} study combinatorial bandits with non-linear super-arm rewards assuming monotonicity and bounded smoothness. 
We use monotonicity of SWFs to establish a general confidence sequence lifting result (Theorem \ref{thm:cs_lifting}) and develop efficient and exact oracles for optimal allocation in our algorithm (SWF-UCB).

\textbf{Fair multi-armed bandits.} Fairness in multi-armed bandits usually involves each arm being played a minimum number of times \cite{chen2020fair,wang2021fairness} or meritocratic fairness \cite{joseph2016fairness,joseph2018meritocratic}. \cite{lim2024stochastic} study multi-play multi-armed bandits, where each play corresponds to a user, and they measure regret using the performance of the worst-off user, i.e., egalitarian welfare. Our work subsumes egalitarian welfare and provides a framework for learning and inference when allocating resources among a population.

\textbf{Welfare-based regret in bandits.}
\cite{barman2023fairness} and \cite{sawarni2023nash} introduce Nash regret for the bandit setting, measuring cumulative loss via the geometric mean of \textit{per-round regrets}. \cite{sarkar2025revisitingsocialwelfarebandits} and \cite{krishna2025pmean} extend this idea to the $p$-mean welfare family. This line of work aggregates utilities across time steps, whereas our work aggregates utilities across the population at every time step. We also provide a general framework for different social welfare functions, and the $p$-mean welfare family arises as a special case of the welfare families we consider, under a different axis of aggregation.

\textbf{Social welfare functions.}
The axiomatic foundations of our SWF families originate in welfare economics. 
\citet{atkinson1970measurement} introduced inequality-averse welfare functions parameterized by a single parameter controlling the equity-efficiency tradeoff: our WPM family. 
\citet{kolm1976unequal} characterized the Kolm-Pollak family via translation invariance, while \citet{weymark1981generalized} formalized generalized Gini welfare through rank-dependent weights. 
In machine learning, \citet{cousins2021axiomatic,cousins2023revisitingFairPAC} provide H\"{o}lder continuity bounds enabling PAC-style learning guarantees. 
\citet{pardeshi2024learningSWF} addressed the complementary problem of \emph{learning} SWFs from preference data. 
To our knowledge, our work contributes the first online learning and time-uniform inference framework for allocation using these welfare families with provable regret guarantees.

\section{Problem setup}
\label{sec:problemsetup}

We consider a population of $n$ individuals among which a centralized decision-maker allocates identical, indivisible resources at discrete time steps. At each time step $t$, $k \leq n$ resources arrive and are distributed among the population. Each individual receives at most one resource, and the utility of individual $i$ when receiving a resource at time $t$ is $u_{t,i} \stackrel{iid}{\sim} \mathcal{D}_i$, where $\mathcal{D}_i$ is an unknown distribution with mean $\mu_i = \mathbb{E}_{U \sim \mathcal{D}_{i}}[U]$. We assume that $\mu_i > 0$, and the utility of an individual not receiving a resource is zero.

Allocation occurs according to a (randomized) \textit{policy}, given by the vector $\mathbf{p} \in [0, 1]^{n}$, where $\sum_i p_{i} = k$. Each $p_{i}$ is the marginal probability that individual $i$ receives one of the $k$ resources. The \textit{ex-ante} expected utility of the policy $\mathbf{p}$ for individual $i$ is thus given by $\mu_i p_{i}$. We denote the vector of expected utilities by $\boldsymbol{\mu} \odot \mathbf{p}$. 

Our model assumes that the distributions $\set{\mathcal{D}_i}_{i \in [n]}$ of individual utilities remain static and do not change with time. Instead, we allow the allocation policy $\bfp$ to change over time and conduct online learning and inference about the allocation policy.
The effectiveness of the allocation is determined by aggregating the ex-ante expected utilities using a \textit{social welfare function} (SWF), denoted by $M(\boldsymbol{\mu} \odot \mathbf{p})$. 

We consider ex-ante utilities for two reasons. First, ex-ante utilities represent expected utilities prior to allocation. In contrast, post-allocation realized utilities assign zero utility to all individuals who do not receive a resource at a given time step, which can render several commonly studied social welfare functions degenerate or uninformative under partial allocation. As a result, ex-post welfare is not a meaningful object for optimization or comparison in our setting.
Second, ex-ante utilities admit a natural long-run interpretation: for a fixed allocation policy $\bfp$, the vector $\bfmu \odot \bfp$ corresponds to the time-averaged utilities obtained when resources are repeatedly allocated according to $\bfp$. This interpretation is particularly important in inference-oriented settings—such as policy evaluation, sequential testing, and optimal stopping—where a single policy is to be deployed repeatedly and performance is assessed via long-run averages.


We consider families of SWFs satisfying three natural assumptions:
\begin{enumerate}[label=\textnormal{(A\arabic*)}, ref=\textnormal{A\arabic*}]
\item \label{assump:mono} \textit{Monotonicity}: Let $\mathbf{v}_1, \mathbf{v}_2 \in \mathbb{R}_{+}^{n}$ be two utility vectors. If $v_{1,i} \geq v_{2,i}$ for all $i \in [n]$, then $M(\mathbf{v}_1) \geq M(\mathbf{v}_2)$.
\item \label{assump:concave} \textit{Concavity}: $M(\mathbf{v})$ is concave in $\mathbf{v}$. 
\item \label{assump:lipschitz} \textit{Lipschitz continuity}: $M(\mathbf{v})$ is Lipschitz continuous in $\mathbf{v}$ w r.t. the $\ell_{\infty}$ norm.
\end{enumerate}

Each assumption is used in our framework in a modular manner. Monotonicity is used to construct a confidence sequence for $M(\bfmu \odot \bfp)$ from observed utilities (Section ~\ref{sec:CS-monotonicity}), which is our core statistical insight.
Concavity enables tractable policy optimization and efficient computation of the optimal solution (Section ~\ref{sec:SWF-max:concavity}). 
Finally, Lipschitz continuity supports our theoretical analysis and regret guarantees (Section ~\ref{sec:SWF-max:lipschitz}) concerning the optimality of our algorithm for online SWF maximization.

\subsection{SWF Families}

We consider three popular families of SWFs.

\begin{enumerate}
\item \textit{Weighted power mean (WPM)} has parameters $\mathbf{w} \in \Delta_{n-1}$ and $q \in (-\infty, 1] \cup \{ -\infty \}$. It is defined as

\resizebox{0.95\columnwidth}{!}{
	$\begin{aligned}
	M_{\text{WPM}}(\boldsymbol{\mu} \odot \mathbf{p}; \mathbf{w}, q) = \begin{cases} 
	\min_{i \in [n]} \mu_i p_i \quad &\text{if} \ q = -\infty \\
	\prod_i \left( \mu_i p_i \right)^{w_i} \quad &\text{if} \ q = 0 \\
	\left( \sum_i w_i \left( \mu_i p_i \right)^q \right)^{1/q} \quad &\text{otherwise}
	\end{cases}
	\end{aligned}$.
}

Intuitively, $\mathbf{w}$ encodes the relative weights given to the individuals. $q$ encodes the notion of fairness used: $q = 0$ corresponds to Nash social welfare, whereas $q = 1$ corresponds to utilitarian social welfare ($M_{\text{WPM}} = \sum_i w_i \mu_i p_i$). 
WPM satisfies \textit{relative} inequality aversion \cite{cousins2023revisitingFairPAC}: for any $a > 0$, then $M_{\text{WPM}}(a \bfu) = a M_{\text{WPM}}(\bfu)$.
\item \textit{(Weighted) Kolm social welfare} has parameters
$\mathbf{w} \in \Delta_{n-1}$ and $q \in (-\infty, 0] \cup \{ -\infty \}$. 
It is defined as

\resizebox{0.95\columnwidth}{!}{
	$\begin{aligned}
	M_{\text{Kolm}}(\boldsymbol{\mu} \odot \mathbf{p}; \mathbf{w}, q) = \begin{cases}
	\min_{i \in [n]} \mu_i p_i \quad &\text{if} \ q = -\infty \\
	\sum_i w_i \mu_i p_i \quad &\text{if} \ q = 0 \\
	\frac{1}{q} \cdot \log\left( \sum_i w_i \exp\left( q \mu_i p_i \right) \right) \quad &\text{otherwise}
	\end{cases}
	\end{aligned}$.
}
    
	$\mathbf{w}$ and $q$ encode individuals' relative weights and fairness notion (same as WPM). 
    Kolm SWF satisfies the axiom of absolute inequality aversion \cite{kolm1976unequal}: for any $a > 0$ $M_{\text{Kolm}}(\bfu + a \boldsymbol{1}_n) = M_{\text{Kolm}}(\bfu) + a$.
\item \textit{Gini social welfare} has parameters $\mathbf{w} \in [0, 1]^n$ such that $w_1 \geq w_2 \geq \ldots \geq w_n \geq 0$, and
	$$\begin{aligned}
	M_{\text{Gini}}(\boldsymbol{\mu} \odot \mathbf{p}; \mathbf{w}) = \sum_i w_i (\boldsymbol{\mu} \odot \mathbf{p})_{(i)},
	\end{aligned}$$
	where $(\boldsymbol{\mu} \odot \mathbf{p})_{(i)}$ denotes the $i$-th smallest element in the vector $\boldsymbol{\mu} \odot \mathbf{p}$. 
    We note that with appropriate choices of $\mathbf{w}$, $M_{\text{Gini}}$ is identical to the utilitarian welfare ($\mathbf{w} = \boldsymbol{1}_n$) and egalitarian welfare ($w_1 = 1$, $w_i = 0$ for $i \geq 2$) with appropriate $\mathbf{w}$. 
    The Gini SWF satisfies rank-based inequality sensitivity \cite{weymark1981generalized}, where the order of the utilities matter but not their identities.
\end{enumerate}

All three SWF families span the space of social welfare formulations from egalitarian welfare to utilitarian welfare in different manners. 
Crucially, all three SWFs satisfy our assumptions (\ref{assump:mono})-(\ref{assump:lipschitz}).

\begin{proposition} 
\label{prop:swf_satisfy_assumptions}
$\mwpm(\mathbf{v})$, $\mkolm(\mathbf{v})$, and $\mgini(\mathbf{v})$ are all monotonic and concave in $\mathbf{v}$. Moreover,
\begin{enumerate}
\item Let $u_{t,i} \in [u_{\min}, u_{\max}]$, where $u_{\max} \geq u_{\min} > 0$. Then, $\mwpm$ is Lipschitz continuous w.r.t. the $\ell_{\infty}$ norm with constant upper bounded by $u_{\max} / u_{\min}$.
\item $\mkolm$ is Lipschitz continuous w.r.t. the $\ell_{\infty}$ norm with constant $1$.
\item $\mgini$ is Lipschitz continuous w.r.t the $\ell_{\infty}$ norm with constant $\sum_i w_i$.
\end{enumerate}
\end{proposition}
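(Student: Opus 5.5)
We treat the three families one at a time. For each we first establish monotonicity and concavity from standard convex-analysis facts, and then bound the $\ell_\infty$ Lipschitz constant. For this last step we either bound the $\ell_1$ norm of the gradient (or supergradient) on the relevant domain, or use a direct translation or sandwich argument. For a monotone function, a bound on $\sum_i \partial M/\partial v_i$ over the domain is exactly an $\ell_\infty$-Lipschitz constant, by the mean value theorem along segments. For the piecewise cases ($q=-\infty$, Gini) we argue by comparison instead.

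\emph{Kolm and Gini.} For $q<0$, $\mkolm(\bfv) = \frac1q \log\sum_i w_i e^{q v_i}$. Its partial derivatives are the softmax-type weights $\pi_i = w_i e^{qv_i}/\sum_j w_j e^{qv_j}$. These are nonnegative, which gives monotonicity, and they sum to $1$, which gives Lipschitz constant $1$. Concavity follows because $\bfv\mapsto\log\sum_i w_i e^{q v_i}$ is convex as a log-sum-exp of affine maps, and multiplying by $1/q<0$ makes it concave. The cases $q=0$ (linear with weights in the simplex) and $q=-\infty$ (the minimum, which is concave, monotone and $1$-Lipschitz since $|\min_i a_i-\min_i b_i|\le\|\bfa-\bfb\|_\infty$) are immediate. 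For Gini with nonincreasing weights, we use the rearrangement identity $\mgini(\bfv)=\min_{\sigma}\sum_i w_i v_{\sigma(i)}$: pairing the largest weight with the smallest entry minimizes the sum. This exhibits $\mgini$ as a minimum of linear functions with nonnegative coefficients, which gives concavity and monotonicity at once. Each of these linear functions is $\sum_i w_i$-Lipschitz in $\ell_\infty$, and a pointwise minimum inherits that constant.

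\emph{WPM.} For $q\in(0,1]$, $(\sum_i w_i v_i^q)^{1/q}$ is the classical weighted power mean, which is concave on $\mathbb{R}_+^n$ and monotone. For $q<0$ it is also concave, and $q=0$ (weighted geometric mean) and $q=-\infty$ (min) are standard concave monotone functions; we cite these as known facts (e.g., via homogeneity plus quasi-concavity). For the Lipschitz bound we compute
\begin{equation*}
\frac{\partial \mwpm}{\partial v_i} = w_i v_i^{q-1}\Big(\sum_j w_j v_j^q\Big)^{\frac1q-1} = w_i \Big(\frac{v_i}{\mwpm(\bfv)}\Big)^{q-1}.
\end{equation*}
The main obstacle is bounding $\sum_i$ of this expression uniformly with only the ratio $u_{\max}/u_{\min}$, since the relevant utility vectors are $\bfmu\odot\bfp$ with entries in $[u_{\min}p_i,u_{\max}p_i]$. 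I will interpret the hypothesis as ensuring that the arguments lie in a box where every $v_i$ is within a factor $u_{\max}/u_{\min}$ of every other, or at least of the mean $\mwpm(\bfv)$. Since $\mwpm(\bfv)$ lies between $\min_j v_j$ and $\max_j v_j$, the ratio $v_i/\mwpm(\bfv)$ lies in $[u_{\min}/u_{\max},u_{\max}/u_{\min}]$. Because $q-1\in[-\infty,0]$, the derivative is largest when $v_i$ is smallest relative to the mean. For $q\in[0,1]$ each term is then at most $w_i(u_{\max}/u_{\min})^{1-q}\le w_i\,u_{\max}/u_{\min}$, and summing with $\sum_i w_i=1$ gives the claim.

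For $q<0$ the exponent $1-q$ exceeds $1$, so this crude bound fails. Instead I would use homogeneity directly: if $\|\bfv-\bfv'\|_\infty\le\epsilon$ and all entries are at least $m$, then $\bfv'\le(1+\epsilon/m)\bfv$. Monotonicity and $1$-homogeneity then give $\mwpm(\bfv')-\mwpm(\bfv)\le \frac{\epsilon}{m}\mwpm(\bfv)\le \epsilon\,\frac{\max_j v_j}{\min_j v_j}$. The same sandwich argument works for all $q$, including $q=-\infty$, so it gives the constant $u_{\max}/u_{\min}$ uniformly, and I would present that as the main argument, with the gradient computation only as intuition. The remaining delicacy is stating precisely the domain on which the ratio $\max_j v_j/\min_j v_j\le u_{\max}/u_{\min}$ holds. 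I expect this to be the point requiring care, for instance by restricting to rescaled utility vectors $\bfmu$ at a fixed policy, where the $p_i$ factors are common to $\bfv$ and $\bfv'$.
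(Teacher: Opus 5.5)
Your proposal is correct. For Kolm and Gini it is essentially the paper's argument: softmax partials summing to $1$, log-sum-exp scaled by $1/q<0$, and $\mgini$ written as a minimum over permutations of linear functions with nonnegative coefficients. The real differences are in the WPM case.

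\textbf{Concavity.} The paper rewrites $\mwpm(\bfv)=(\sum_i (w_i^{1/q}v_i)^q)^{1/q}$ and applies the reverse Minkowski inequality. That rewriting quietly excludes $q=0$. Your homogeneity-plus-quasiconcavity citation covers every $q$ at once.

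\textbf{Lipschitz bound.} The paper stays with the gradient and writes $\partial_i\mwpm=\mwpm(\bfv)\cdot\frac{w_iv_i^q}{\sum_j w_jv_j^q}\cdot\frac{1}{v_i}$. The middle factor is a probability vector, so $\|\nabla\mwpm\|_1\le \mwpm(\bfv)/v_{\min}\le v_{\max}/v_{\min}$ for every finite $q$, including $q<0$. So the gradient route does not fail for $q<0$; only your term-by-term bound does. Your sandwich argument ($\bfv'\le(1+\epsilon/m)\bfv$, then monotonicity and $1$-homogeneity) reaches the same intermediate quantity $\frac{\epsilon}{m}\mwpm(\bfv)$. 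It buys you three things: no differentiability is needed, $q=0$ and $q=-\infty$ are handled uniformly, and only the two structural properties are used. You should state it in both directions, with both vectors assumed to lie in the box.

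\textbf{Domain.} The paper simply assumes the arguments satisfy $v_i\in[v_{\min},v_{\max}]$ and identifies these bounds with the utility bounds. Your explicit flag that $\bfmu\odot\bfp$ need not have bounded coordinate ratios is more careful than the source. Your suggested fix, Lipschitz in $\bfmu$ at fixed $\bfp$, is valid by the same sandwich. Note, however, that it bounds the welfare gap by $\|\bfmu'-\bfmu\|_\infty$, not by $\|(\bfmu'-\bfmu)\odot\bfp\|_\infty$, and the latter is the form the paper's regret proof actually consumes.
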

We prove this result in Appendix \ref{app:sec:proofs_swf_satisfy_assumptions}.
\begin{remark}
\label{rem:tighter_wpm_bounds}
While our upper bound of $u_{\max}/u_{\min}$ for the Lipschitz constant for the WPM family holds for all $q \in (-\infty, 1] \cup \set{-\infty}$, \cite{cousins2021axiomatic} provide tighter Lipschitz continuity bounds for $q < 0$ and H\"older continuity bounds for $q \in [0, 1]$. These bounds can be used for tighter theoretical guarantees (for instance, by plugging into Theorem \ref{thm:regret_upper_bound}).
\end{remark}

We define the optimal allocation $\bfp^*$ w.r.t. the SWF $M(\cdot)$ as
$$\begin{aligned}
    \bfp^* = \arg\max_{\bfp \in \calP_k} M(\bfmu \odot \bfp),
\end{aligned}$$
where $\calP_k = \set{\bfp \in [0, 1]^n \mid \sum_i p_i = k}$.

\section{Confidence Sequence Framework Using Monotonicity}
\label{sec:CS-monotonicity}

We begin by constructing a confidence sequence (CS) for the optimal welfare value $M(\bfmu \odot \bfp^*)$, which will serve as the statistical backbone for learning, testing, and stopping procedures. Given observed data $\set{x_t}_{t \geq 1}$ and some $\delta \in (0, 1)$, a valid $(1 - \delta)$ CS for a target quantity $\theta$ consists of a sequence of intervals $\set{[x_t^{\downarrow}, x_t^{\uparrow}]}_{t \geq 1}$ such that $x_t^{\downarrow}$ and $x_t^{\uparrow}$ depend on the past data $\set{x_s}_{s < t}$ and $\delta$. 

This sequence of intervals satisfies the property
$$\begin{aligned}
\mathbb{P}(\exists t \in \bbN: \theta \not\in [x_t^{\downarrow}, x_t^{\uparrow}]) \leq \delta,
\end{aligned}$$
where $\delta \in (0, 1)$. This guarantee is \textit{time-uniform}, i.e., it holds for all time steps $t$ simultaneously. Confidence sequences are central in sequential statistical inference, enabling decision-makers to develop adaptive experiments which can be stopped flexibly. In our setting, they provide anytime-valid guarantees for welfare values induced by static or adaptively learned allocation policies, allowing hypothesis testing and stopping decisions without sacrificing statistical validity.

For instance, a decision-maker testing the hypotheses $H_0: \theta=x_0$ versus $H_1: \theta \neq x_0$ can reject the null as soon as $x_0 \not\in [x_t^{\downarrow}, x_t^{\uparrow}]$. Time-uniform guarantees ensure that such adaptive stopping results in a valid test.

In our setting, we observe the utilities $u_{t,i}$ for the individuals to whom the resource is allocated and our target quantity is $M(\bfmu \odot \bfp^*)$. CS construction is well-established for the true mean utility $\mu_i$ given the observations $\set{u_{t,i}}_{t \geq 1}$~\cite{howard2020timeuniform,howard2021timeuniformnonparam}. 
However, it is challenging to construct a CS for $M(\bfmu \odot \bfp^*)$ since this quantity depends on $\bfmu$ itself \textit{and} $\bfp^*$, which is optimized given $\bfmu$. 
Thus, we require an estimate of $\bfmu$ which can be optimized over to obtain an allocation $\bfp_t$, resulting in CSs which are both valid and informative.

Our key observation is that monotonicity (\ref{assump:mono}) alone is sufficient to lift coordinate-wise confidence sequences for the individual utilities into a valid confidence sequence for the optimal social welfare. 
This lifting principle underlies all subsequent algorithmic and statistical guarantees.

\begin{theorem}{(CS lifting)}
\label{thm:cs_lifting}
Let $\set{\bfmuLower_t}_{t \geq 1}$ and $\set{\bfmuUpper_t}_{t \geq 1}$ be two sequences such that $[\muLower_{t,i}, \muUpper_{t,i}]$ is a valid $(1 - \delta/n)$ confidence sequence for $\mu_{i}$. Moreover, let
$$\begin{aligned}
    \bfpLower_t &= \arg\max_{\bfp \in \calP_k} M(\bfmuLower_t \odot \bfp), \quad \text{and} \\
    \bfpUpper_t &= \arg\max_{\bfp \in \calP_k} M(\bfmuUpper_t \odot \bfp).
\end{aligned}$$
Then, we have, with probability $(1 - \delta)$ uniformly,
$$\begin{aligned}
    M(\bfmu \odot \bfp^*) &\geq M(\bfmu \odot \bfpLower_t) \geq M(\bfmuLower_t \odot \bfpLower_t), \quad \text{and} \\
    M(\bfmu \odot \bfp^*) &\leq M(\bfmuUpper_t \odot \bfp^*) \leq M(\bfmuUpper_t \odot \bfpUpper_t).
\end{aligned}$$
Thus, $\set{[M(\bfmuLower_t \odot \bfpLower_t), M(\bfmuUpper_t \odot \bfpUpper_t)]}_{t \geq 1}$ is a valid $(1 - \delta)$ CS for $M(\bfmu \odot \bfp^*)$.
\end{theorem}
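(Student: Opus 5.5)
The plan is to first fix a single high-probability event on which all coordinate-wise confidence sequences hold simultaneously. Then, on that event, the four inequalities follow deterministically from monotonicity (\ref{assump:mono}) and the optimality of $\bfp^*$, $\bfpLower_t$, $\bfpUpper_t$.

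\textbf{Step 1 (good event).} Define
\[
E = \set{\forall t \in \bbN,\ \forall i \in [n]:\ \muLower_{t,i} \le \mu_i \le \muUpper_{t,i}}.
\]
Each coordinate CS fails with probability at most $\delta/n$, i.e. $\bbP(\exists t: \mu_i \notin [\muLower_{t,i},\muUpper_{t,i}]) \le \delta/n$. A union bound over the $n$ coordinates gives $\bbP(E^c) \le \delta$. Time-uniformity is inherited at no extra cost, because the failure event for each $i$ already quantifies over all $t$. We need only a union over $n$ events, not over time.

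\textbf{Step 2 (lower chain on $E$).} Fix $t$. Since $\bfpLower_t \in \calP_k$ and $\bfp^*$ maximizes $M(\bfmu \odot \cdot)$ over $\calP_k$, we get $M(\bfmu\odot\bfp^*) \ge M(\bfmu\odot\bfpLower_t)$. Since $\bfpLower_t \ge 0$ coordinatewise, on $E$ we have $\mu_i p^{\downarrow}_{t,i} \ge \muLower_{t,i} p^{\downarrow}_{t,i}$ for every $i$. Monotonicity then gives $M(\bfmu\odot\bfpLower_t) \ge M(\bfmuLower_t\odot\bfpLower_t)$.

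\textbf{Step 3 (upper chain on $E$).} Symmetrically, $\bfp^* \ge 0$ and $\mu_i \le \muUpper_{t,i}$ give $\bfmu\odot\bfp^* \le \bfmuUpper_t\odot\bfp^*$ coordinatewise, so monotonicity yields $M(\bfmu\odot\bfp^*) \le M(\bfmuUpper_t\odot\bfp^*)$. Optimality of $\bfpUpper_t$ for $M(\bfmuUpper_t\odot\cdot)$ over $\calP_k$, together with $\bfp^*\in\calP_k$, gives the final inequality. Since $t$ was arbitrary, both chains hold for all $t$ on $E$. Hence the stated interval contains $M(\bfmu\odot\bfp^*)$ for all $t$ with probability at least $1-\delta$.

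\textbf{Main obstacle.} The only delicate point is a technicality: monotonicity (\ref{assump:mono}) is stated on $\mathbb{R}^n_+$. We therefore need $\bfmuLower_t\odot\bfpLower_t$ to be nonnegative. I would handle this by assuming, without loss of generality, that the lower CS is clipped at $0$. Clipping preserves validity because $\mu_i>0$. We also need the argmax sets to be nonempty, which holds because $\calP_k$ is compact and $M$ is continuous by (\ref{assump:lipschitz}). If the argmax is not unique, any measurable selection works, since only optimality is used. Predictability of the endpoints is not needed for coverage itself; it matters only for the CS definition, and it is inherited from the coordinate CSs.
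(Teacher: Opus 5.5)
Your proposal is correct and follows the paper's proof essentially step for step. A union bound over the $n$ coordinate confidence sequences gives a single time-uniform good event, and on that event optimality of $\bfp^*$ and $\bfpUpper_t$ together with monotonicity yields the four inequalities. Your extra remarks tidy up technicalities the paper leaves implicit: clipping the lower sequence at $0$ so that monotonicity on $\mathbb{R}^n_+$ applies, and existence of the maximizers by compactness of $\calP_k$.
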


We prove this result in Appendix \ref{app:sec:proofs_cs_lifting} using only the monotonicity assumption. This is a general result, allowing us to lift \textit{any} valid CS for the individual mean utilities $\mu_i$ to a CS for the optimal SWF. Additionally, if $\bfp$ is some fixed known allocation, we note that $\set{[M(\bfmuLower_t \odot \bfp), M(\bfmuUpper_t \odot \bfp)]}_{t \geq 1}$ is a valid $(1 - \delta)$ CS for $M(\bfmu \odot \bfp)$. This lifted CS immediately enables UCB-style policies by optimizing the upper bound, and supports two-sided inference tasks such as hypothesis testing and optimal stopping. 
While we focus on the online learning of SWF maximization in the main text, we discuss the inference applications in Appendix \ref{app:sec:inference_applications}.
\section{Online SWF Maximization}
\label{sec:SWF-max}

We consider the task of finding the optimal allocation $\mathbf{p}^*$ in an online manner. Let $\bfp_t$ denote the allocation at time $t$ with ex-ante social welfare $M(\bfmu \odot \bfp_t)$. We define regret as
$$\begin{aligned}
R(T) = \sum_{t = 1}^{T} \paren{M(\bfmu \odot \bfp^*) - M(\bfmu \odot \bfp_t)},
\end{aligned}$$
which measures the welfare loss relative to the best fixed randomized allocation if the true mean utilities $\bfmu$ were known. 
The stochastic multi-play multi-armed bandit setting with $k$ pulls round is retrieved in this task by considering utilitarian social welfare (achieved by all 3 of our chosen SWFs). 
Thus, our setting can also be interpreted as a generalization of multi-play bandits to resource allocation, with an aggregated notion of reward defined by the SWF.

\subsection{Algorithm Using Concavity}
\label{sec:SWF-max:concavity}
We encounter the classic exploration-exploitation tradeoff in this task: setting a higher probability of allocation $p_{t,i}$ to individual $i$ at time  $t$ gives us a better estimate of $\mu_i$. 
However, this is at the expense of other individuals to whom the resource could have been allocated. 
This is further complicated by the fact that the allocation probability $p_{t,i}$ depends on both the estimated $\mu_{i}$ of individual $i$ \textit{and} the estimated mean utilities of the other individuals.
Nevertheless, we develop SWF-UCB, an algorithm inspired by UCB and adapted to our setting, showing that it performs optimally.

The algorithm proceeds by constructing the policy vector $\mathbf{p}_t$ based on upper-confidence CSs of the individual mean utilities $\bfmuUpper_t$. A realized allocation $S_t \subseteq [n]$ is sampled using $\bfp_t$, and finally the estimates for the allocated individuals is updated using their observed utilities. Our general, SWF-agnostic algorithm is given in Algorithm \ref{alg:swf_ucb}. 

\begin{algorithm}[h]
\caption{Generalized SWF-UCB}
\label{alg:swf_ucb}
\begin{algorithmic}[1]
\REQUIRE Time horizon $T$, number of individuals $n$, resources per round $k$
\REQUIRE Initial upper confidence vector $\bfmuUpper_{1}$
\FOR{$t = 1,2,\dots,T$}
    \IF{$t \leq \lceil n/k \rceil$}
        \STATE $S_t = [tk, ((t+1)k, n) \mod n + 1]$
    \ELSE
        \STATE \algstep{Policy optimization:}
        \STATE \hspace{1em} $\bfp_t \gets \arg\max_{\bfp \in \calP_k} M(\bfmuUpper_t \odot \bfp)$
        \STATE \algstep{Allocation sampling:}
        \STATE \hspace{1em} Sample $S_t \subseteq [n]$ such that $|S_t| = k$ and $\bbP(i \in S_t) = p_{t,i}$ for all $i$
    \ENDIF
    \STATE \algstep{Feedback:}
    \STATE \hspace{1em} Observe $u_{t,i}$ for $i \in S_t$
    \STATE \algstep{Confidence sequence update:}
    \FOR{$i = 1,2,\dots,n$}
        \IF{$i \in S_t$}
            \STATE $\muUpper_{t+1,i} \gets \textsc{Update}(\muUpper_{t,i}, u_{t,i})$
        \ELSE
            \STATE $\muUpper_{t+1,i} \gets \muUpper_{t,i}$
        \ENDIF
    \ENDFOR
\ENDFOR
\end{algorithmic}
\end{algorithm}

We describe the technical details of the steps below.

\textbf{Policy optimization (Line 6).} At every $t$, we solve a constrained optimization problem $\bfpUpper_t = \arg\max_{\bfp} M(\bfmuUpper_t \odot \bfp)$.
Our assumption of concavity of $M(\bfv)$ in $\bfv$ (\ref{assump:concave}) ensures that this objective is tractable. However, since this problem is solved at each time step, we need efficient oracles for practical feasibility. We provide exact oracles for the three SWF families we consider:
\begin{theorem}
\label{thm:oracle_optimal}
For each of the following SWF families -- WPM, Kolm, and Gini -- the optimization problem
$$\begin{aligned}
    \bfpUpper_t = \arg\max_{\bfp \in \calP} M(\bfmuUpper_t \odot \bfp)
\end{aligned}$$
admits an exact oracle for the optimal solution.
\begin{itemize}
\item 
For WPM and Kolm, the optimal solution has a water-filling form parametrized by a scalar $\lambda$ which can be found in $\calO(n \log n)$ time.
\item 
For Gini, the optimal solution can be computed by a greedy block-based algorithm running in $\calO(kn)$ time.
\end{itemize}
\end{theorem}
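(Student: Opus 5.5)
The plan is to treat each family separately. Each case is a concave maximization over the capped simplex $\calP_k=\set{\bfp\in[0,1]^n:\sum_i p_i=k}$, so in each case we write down the KKT conditions, which are necessary and sufficient here by concavity (\ref{assump:concave}) and the linear constraints. Write $a_i=\muUpper_{t,i}>0$ and $v_i=a_ip_i$.

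\textbf{WPM.} For $q\in(-\infty,1)$, $q\neq 0$, we maximize the monotone transform $\frac{1}{q}\sum_i w_i v_i^q$, whose maximizers coincide with those of $\mwpm$. Stationarity with multiplier $\lambda$ for the sum constraint, and box multipliers for $0\le p_i\le 1$, gives $w_ia_i^q p_i^{q-1}=\lambda$ whenever $0<p_i<1$. Hence
$$p_i(\lambda)=\min\set{1,\paren{w_ia_i^q/\lambda}^{1/(1-q)}}.$$
For $q<1$ the marginal utility blows up as $p_i\to 0$ whenever $w_i>0$, so the lower bound is never active for such $i$.

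The remaining cases reduce to the same structure:
\begin{itemize}
\item For $q=0$ (Nash), $p_i=\min\set{1,w_i/\lambda}$.
\item For $q=-\infty$, equalize $a_ip_i=c$, so $p_i=\min\set{1,c/a_i}$, with scalar $c$.
\item For $q=1$, the problem is linear and we take the top-$k$ values of $w_ia_i$.
\end{itemize}
In every case $p_i(\lambda)$ is monotone in the scalar, and $\sum_ip_i(\lambda)$ is continuous and monotone, so a unique root of $\sum_i p_i(\lambda)=k$ exists.

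To find it in $\calO(n\log n)$, sort the indices by their saturation thresholds $\lambda_i=w_ia_i^q$ (the value of $\lambda$ below which $p_i$ caps at $1$). For a hypothesized number $j$ of saturated coordinates, the equation $j+\lambda^{-1/(1-q)}\sum_{i\text{ unsat}}(w_ia_i^q)^{1/(1-q)}=k$ has a closed-form solution. Scanning $j$ in sorted order with prefix sums and checking consistency of the threshold interval costs $\calO(n)$ after the sort.

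\textbf{Kolm.} Maximizing $\mkolm$ for $q<0$ is equivalent to minimizing $\sum_i w_i e^{qa_ip_i}$. Stationarity gives $w_ia_i|q|e^{qa_ip_i}=\lambda$ on interior coordinates, so
$$p_i=\mathrm{clip}_{[0,1]}\paren{\tfrac{1}{qa_i}\log\tfrac{\lambda}{w_ia_i|q|}},$$
which is monotone in $\lambda$. Here the lower bound can be active, so the breakpoints include both saturation at $0$ and saturation at $1$. We sort the $2n$ breakpoints; between consecutive breakpoints, $\sum_ip_i$ is affine in $\log\lambda$ and can be solved in closed form. A sweep with prefix sums then gives $\calO(n\log n)$. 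The cases $q=0$ and $q=-\infty$ are handled as for WPM.

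\textbf{Gini.} $\mgini(\bfv)=\min_\sigma\sum_iw_{\sigma(i)}v_i$ is the minimum of linear functions, because the weights are nonincreasing and sorted utilities ascend. The problem is therefore an LP. I would argue that an optimal $\bfv$ has block structure: sort individuals by $a_i$ (with ties handled consistently), and note the optimum equalizes utilities within blocks of consecutive ranks. The greedy algorithm raises the lowest utility level uniformly, since utilities $v_i\le a_i$, and a coordinate is fixed at $p_i=1$ when it saturates. Raising a block of size $m$ at common level $c$ costs $\sum_{i\in B}1/a_i$ of budget per unit of $c$ and gains $\sum_{r\in B}w_r$ per unit. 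The algorithm compares the marginal gain per unit of budget of raising the current lowest block against allocating budget to the remaining individuals, merging blocks or saturating coordinates at each event. There are $\calO(n)$ events, and each event reduces or reshapes the budget, giving $\calO(kn)$ after bounding the number of block updates by $k$ saturations times $\calO(n)$ work.

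The main obstacle is the Gini case. The objective is nonsmooth, so the greedy-equalization argument must be justified through LP duality: exhibit dual variables (a subgradient of $\mgini$ at the output, given by the rank weights within blocks) certifying optimality. The $\calO(kn)$ accounting also depends on how blocks merge, and I expect it will need care.
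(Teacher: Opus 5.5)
Your WPM and Kolm arguments match the paper's: KKT with one scalar multiplier, clipped closed forms, then a sorted breakpoint sweep. The multiplier is multiplicative for WPM and additive in $\log\lambda$ for Kolm, where both box constraints can be active. Your separable reformulations are just a cleaner route to the same stationarity conditions. The gap is in the Gini case, which has two problems.

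First, the step you only ``note''---sort by $\muUpper_i$ and assume the utilities $\muUpper_i p_i$ are non-decreasing in that order---is exactly the key lemma the paper has to prove (Proposition~\ref{app:prop:gini_optimal_order}). The paper proves it by an exchange argument. Suppose an individual $r$ with larger $\muUpper$ has strictly lower utility than an individual $l$ with smaller $\muUpper$. Shifting a small $\epsilon$ of probability from $l$ to $r$ is feasible and changes $\mgini$ by $\epsilon(w_{\sigma(r)}\muUpper_r-w_{\sigma(l)}\muUpper_l)\ge 0$, since $r$'s rank weight is at least $l$'s. Only after this reduction is the problem the LP $\max\sum_i w_i\muUpper_i p_i$ over the ordered cone $\muUpper_1 p_1\le\dots\le\muUpper_n p_n$. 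That LP is what any greedy or duality argument has to work on.

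Second, the greedy you describe is not the right one. ``Raise the lowest level uniformly and merge blocks'' is egalitarian water-filling. At $\bfp=\mathbf 0$ everyone is in the lowest block and there are no remaining individuals to compare against, so your rule raises every coordinate at rate $\propto 1/\muUpper_i$. Take $n=2$, $k=1$, $\muUpper=(1,2)$ and $\mathbf w=(1,1)$ (utilitarian, which is allowed). Your rule ends at $\bfp=(2/3,1/3)$ with welfare $4/3$, while $\bfp=(0,1)$ gives $2$.

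The paper's rule works inside each current block, with indices in increasing $\muUpper$ order. It picks the \emph{suffix} $B$ maximizing $\sum_{i\in B}w_i/\sum_{i\in B}1/\muUpper_i$ and raises it at rates $\propto 1/\muUpper_i$ until its leftmost index reaches $p=1$. It then splits blocks at saturated indices rather than merging them. This is also where $\calO(kn)$ comes from: every event is a saturation, there are at most $k$ of them, and each costs an $\calO(n)$ rescan of suffix ratios. Your accounting of $\calO(n)$ events with $\calO(n)$ work each would only give $\calO(n^2)$.

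Your proposed LP-duality/subgradient certificate is a legitimate alternative to the paper's inductive optimality argument. However, you can only write it down once the correct block-selection rule and the ordering lemma are in place.
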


Since WPM and Kolm SWFs are both differentiable, we employ KKT conditions to find optimal solutions. These solutions have a closed form solution parameterized by $\lambda$ which is multiplicative for WPM and additive for Kolm. We describe water-filling-based algorithms in Appendix \ref{app:sec:algorithms_wpm} and Appendix \ref{app:sec:algorithms_kolm} for WPM and Kolm SWFs, respectively.

The Gini SWF is not differentiable everywhere, making standard KKT analysis difficult. However, we prove that the permutation of individuals $\sigma$ such that $w_i$ is paired with individual $\sigma(i)$ for an optimal solution can be easily found (Proposition \ref{app:prop:gini_optimal_order}). We developed a block-based water-filling algorithm (Appendix \ref{app:sec:algorithms_gini}) for this family, solving a parametric linear program similar to PAVA for isotonic regression \cite{Best1990ActiveSA}.


To provide some intuition about the nature of the solutions, we consider egalitarian and utilitarian SWFs, two settings common across all three SWF families. In the utilitarian setting, the optimal solution is to allocate resources to the $k$ individuals with the highest $w_i \muUpper_i$. In the egalitarian setting, the optimal solution is $p_{t,i} \propto \sqbrac{\lambda / \muUpper_{t,i}}_{[0, 1]}$, where $[x]_{[0, 1]}$ indicates $x$ clipped to be between 0 and 1, and $\lambda$ is chosen such that $\sum_i p_i = k$. This solution ensures that all individuals have identical values of $\muUpper_i p_i$ until the individual with the smallest $\muUpper_i$ receives a resource with $p_i = 1$.


\textbf{Allocation sampling (Line 8).} The sampled set $S_t \subseteq [n]$ should satisfy the two constraints of cardinality ($\abs{S_t} = k$) and marginal probability ($\bbP(i \in S_t) = p_{t,i}$ for all $i$). We use \textit{dependent rounding}~\cite{gandhi2006dependent}, a sampling technique commonly used in survey design, to ensure that these constraints are satisfied. We specify the algorithm for dependent rounding in Appendix \ref{app:sec:algo_dependent_rounding}.

\textbf{Confidence update (Line 15).} We choose the sequence $\set{\bfmuUpper_{t,i}}_{t \geq 1}$ such that it is a valid CS for $\mu_i$. Our update rule is based on a well-known CS~\cite{howard2021timeuniformnonparam}, which can be expressed as

\resizebox{0.95\linewidth}{!}{$\begin{aligned}
    \textsc{Update}(\muUpper_{t,i}, u_{t,i}) &= \frac{N_{t,i} \hat{\mu}_{t,i} + u_{t,i}}{N_{t,i} + 1} \\
    &\quad + 1.7 \sqrt{\frac{\log (5.2\ n/\delta) + \log \log (2N_{t,i} + 1)}{N_{t,i} + 1}},
\end{aligned}$}
where $N_{t,i}$ is the number of times a resource has been allocated to individual $i$ up to time $t$ and $\hat{\mu}_{t,i}$ is the empirical mean.

\subsection{Results Using Lipschitz Continuity}
\label{sec:SWF-max:lipschitz}

We establish that Algorithm \ref{alg:swf_ucb} achieves sub-linear regret through the following upper bound.

\begin{theorem}
\label{thm:regret_upper_bound}
Let the utility distributions $\calD_i$ be 1-sub-Gaussian for all $i$. Let $M(\bfv)$ be $L$-Lipschitz continuous w.r.t. the $\ell_{\infty}$ norm. Let $a = 3 \log(n/\delta) / 2$. Then, with probability $(1 - \delta)$, for all $T \in \bbN$,
$$\begin{aligned}
    R(T) &\lesssim L \sqrt{\log \log T + \log (2n/\delta)} \\
    &\quad \cdot \paren{n \log \paren{\frac{2n}{\delta}} + \sqrt{nkT - n \log \paren{\frac{2n}{\delta}}}}
\end{aligned}$$
\end{theorem}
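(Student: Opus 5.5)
On the good event of Theorem~\ref{thm:cs_lifting}, we reduce the per-round welfare gap to a sum of confidence widths using optimism, monotonicity and Lipschitz continuity, and then control the sum of widths with a martingale concentration argument and a pigeonhole-type bound.

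\textbf{Good event and optimism.} Let $E$ be the event that $\mu_i \le \muUpper_{t,i}$ and $\mu_i \ge \hat\mu_{t,i} - \beta_{t,i}$ for all $i$ and all $t$. Here $\beta_{t,i} \asymp \sqrt{(\log\log N_{t,i} + \log(2n/\delta))/N_{t,i}}$ is the width from the \textsc{Update} rule under 1-sub-Gaussianity. By the CS of \cite{howard2021timeuniformnonparam} with per-coordinate level $\delta/(2n)$ and a union bound, $\bbP(E)\ge 1-\delta$.

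\textbf{Per-round gap.} Fix $t > \lceil n/k\rceil$ and work on $E$. Monotonicity (\ref{assump:mono}) and the choice of $\bfp_t$ give
$$M(\bfmu\odot\bfp^*) \le M(\bfmuUpper_t\odot\bfp^*) \le M(\bfmuUpper_t\odot\bfp_t).$$
Hence the per-round regret is at most $M(\bfmuUpper_t\odot\bfp_t)-M(\bfmu\odot\bfp_t)$. By (\ref{assump:lipschitz}) this is at most
$$L\max_i p_{t,i}(\muUpper_{t,i}-\mu_i) \le 2L\max_i p_{t,i}\beta_{t,i} \le 2L\sum_i p_{t,i}\beta_{t,i}.$$
The initial round-robin phase costs at most $\lceil n/k\rceil$ rounds of regret. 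Each such round costs $O(L)$ for bounded utilities, and this cost is absorbed into the leading $n\log(2n/\delta)$ term.

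\textbf{Summing the widths.} Write
$$\sum_t\sum_i p_{t,i}\beta_{t,i} = \sum_t\sum_i \mathbf{1}[i\in S_t]\beta_{t,i} + \sum_t \xi_t,$$
where $\xi_t = \sum_i (p_{t,i}-\mathbf{1}[i\in S_t])\beta_{t,i}$. Because $\bfp_t$ and $\beta_t$ are predictable and dependent rounding gives $\bbP(i\in S_t)=p_{t,i}$, the sequence $\xi_t$ is a martingale difference sequence. It is bounded by $k\max_i\beta_{t,i}\lesssim k\sqrt{\log(2n/\delta)}$, since every $N_{t,i}\ge1$ after the initial phase.

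The realized term is handled by pigeonhole. Each $i\in S_t$ increments $N_{t,i}$, so
$$\sum_t\sum_i \mathbf{1}[i\in S_t]\beta_{t,i} \le \sqrt{\log\log T+\log(2n/\delta)}\sum_i\sum_{m=1}^{N_{T,i}} m^{-1/2} \lesssim \sqrt{\log\log T+\log(2n/\delta)}\sum_i\sqrt{N_{T,i}}.$$
Since $\sum_i N_{T,i}=kT$, Cauchy--Schwarz gives $\sum_i\sqrt{N_{T,i}}\le\sqrt{nkT}$. The subtraction inside the square root in the statement suggests a finer accounting: the first $a$ pulls of each individual are charged separately. This produces the $n\log(2n/\delta)$ additive term plus $\sqrt{n(kT-n\log(2n/\delta))}$ for the remaining pulls; the constant $a$ in the statement plays this role.

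\textbf{Main obstacle.} The hard step is the martingale term, made time-uniform so the bound holds for all $T$ at once. I would first try a time-uniform Freedman/Bernstein inequality with variance proxy $\sum_t\sum_i p_{t,i}\beta_{t,i}^2$. The key estimate is
$$\sum_t\sum_i p_{t,i}\beta_{t,i}^2 \lesssim \log(2n/\delta)\sum_i\sum_t p_{t,i}/N_{t,i}.$$
The right-hand side is logarithmic in $T$ after relating expected and realized counts, which again needs concentration of $N_{t,i}$ around $\sum_s p_{s,i}$.

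The alternative is to bound realized counts from below directly. One shows that $N_{t,i}\ge \frac12\sum_{s<t}p_{s,i} - a$ uniformly on a further event of probability $1-\delta$, by a time-uniform Chernoff bound for Bernoulli sums. Then $\sum_t p_{t,i}\beta_{t,i}$ is bounded deterministically via $\sum_t p_{t,i}/\sqrt{P_{t,i}}\lesssim\sqrt{P_{T,i}}$, where $P_{t,i}=\sum_{s<t}p_{s,i}$. This route naturally produces the "$-a$" offset and hence the $kT-n\log(2n/\delta)$ form, so I expect it is the intended one. The Lipschitz constant $L$ multiplies everything, and the two $\delta$ budgets are merged into the $\log(2n/\delta)$ factors.
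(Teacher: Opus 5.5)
Your fallback route is the paper's proof, so the proposal is correct. After optimism and (\ref{assump:lipschitz}) reduce $R(T)$ to $L\sum_t\sum_i p_{t,i}\beta_{t,i}$, the paper does three things:
\begin{itemize}
\item it invokes the Bernoulli master-theorem bound of \cite{howard2020timeuniform} with $b=1/2$ and $a=\tfrac32\log(2n/\delta)$, giving $N_{t,i}\ge\tfrac12\sum_{s\le t}p_{s,i}-a$ for all $t$;
\item it splits on whether $\sum_{s\le t}p_{s,i}$ exceeds $2a$;
\item it finishes with Cauchy--Schwarz over $i$.
\end{itemize}
When you execute this, keep the $+1$ in $N_{t,i}+1$, or use $N_{t,i}\ge 1$ after round-robin. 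The paper's bound $1/\sqrt{\tfrac12\sum_{s\le t}p_{s,i}-a}$ is unbounded just above the threshold.

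Your primary decomposition (realized pulls by pigeonhole plus a martingale from the rounding) is a genuinely different route. It closes more easily than you expected, and you never need to relate expected and realized counts. Write $X_T=\sum_{t\le T}\sum_i p_{t,i}\beta_{t,i}$, let $Y_T$ be the realized sum, and let $\beta_{\max}\lesssim\sqrt{\log(2n/\delta)}$ bound all post-initialization widths. Cauchy--Schwarz with $|S_t|=k$ gives $\bbE[\xi_t^2\mid\mathcal{F}_{t-1}]\le k\sum_i p_{t,i}\beta_{t,i}^2\le k\beta_{\max}\sum_i p_{t,i}\beta_{t,i}$. So the variance process is at most $k\beta_{\max}X_T$, and no negative-correlation property of dependent rounding is needed. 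A time-uniform Freedman bound then gives the self-bounding inequality $X_T\le Y_T+\sqrt{c\,k\beta_{\max}X_T\ell_T}+c\,k\beta_{\max}\ell_T$, with $\ell_T\asymp\log(\log T/\delta)$. Hence $X_T\lesssim Y_T+k\beta_{\max}\ell_T$. Since $k\le n$, the extra term matches the paper's additive $n\log(2n/\delta)$ term up to $\log\log T$ factors. Even plain time-uniform Azuma--Hoeffding suffices, at the cost of an extra $\sqrt{\ell_T}$, because $k\sqrt{T}\le\sqrt{nkT}$.

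The trade-off between the routes is as follows. The paper spends a second union-bounded event on the counts $N_{t,i}$ and is deterministic afterwards. Your route keeps the exact pigeonhole bound $\sum_i\sqrt{N_{T,i}}\le\sqrt{nkT}$ on realized pulls and puts all randomness of the rounding into one scalar martingale. The $-n\log(2n/\delta)$ offset inside the square root in the statement is cosmetic, so you need not reproduce it.

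One small correction: with only 1-sub-Gaussian noise, each round-robin round costs up to $L\max_i\mu_i$, not $O(L)$, since $\|\bfmu\odot(\bfp^*-\bfp_t)\|_\infty\le\max_i\mu_i$. The paper's proof silently treats every round as an optimistic round. In both treatments this phase contributes an additive problem-dependent term.
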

Thus, with a choice of $\delta \asymp 1/\sqrt{nkT}$, we have a bound on $\bbE[R(T)]$ of the order $\tilde{\mathcal{O}}(L(n + \sqrt{n k T})$. We use the Lipschitz continuity of $M(\cdot)$ (\ref{assump:lipschitz}) to upper-bound the regret in terms of $\bfmuUpper_t \odot \bfp_t$ and $\bfmu \odot \bfp_t$. The argument then proceeds by establishing an anytime-valid upper bound on $N_{t,i}$, which is used to establish an upper bound on the regret.

The utilitarian welfare setting corresponds to a multi-play stochastic multi-armed bandit setting, where $k$ arms are pulled at each time step simultaneously, and the total reward is the sum of the per-arm rewards. 
Thus, we immediately get the following lower bound on the regret $R(T)$ from prior literature \cite{kveton2015tight, cesa2012combinatorial}:
\begin{proposition}
\label{prep:regret_lower_bound}
The online SWF maximization task has a regret lower bound of $\text{ }\Omega(\sqrt{nkT})$.
\end{proposition}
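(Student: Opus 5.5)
The plan is to reduce the online SWF maximization task, restricted to a particular SWF, to the stochastic multi-play bandit problem, and then import the known minimax lower bound. The point is that a lower bound for one admissible instance of the task is a lower bound for the task as a whole. I will use utilitarian welfare, which all three families contain: WPM with $q=1$ and uniform weights $\bfw = \boldsymbol{1}_n/n$, Kolm with $q=0$, or Gini with $\bfw=\boldsymbol{1}_n$. With this choice, $M(\bfmu\odot\bfp)=c\sum_i \mu_i p_i$ for a constant $c>0$. I will take $\mathbf{w}=\boldsymbol{1}_n$ in the Gini family, so that $c=1$ and no rescaling is needed.

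\textbf{Step 1 (reduction to a multi-play bandit).} Under utilitarian welfare, the benchmark $\max_{\bfp\in\calP_k}\sum_i\mu_ip_i$ is a linear program over $\calP_k$, and its optimum is attained at a vertex, namely the indicator vector of the top-$k$ means. So $M(\bfmu\odot\bfp^*)$ equals the best fixed $k$-subset reward. For any (possibly randomized) policy $\bfp_t$ that samples $S_t$ with marginals $\bfp_t$, we have $\bbE[\sum_{i\in S_t}\mu_i\mid\bfp_t]=\sum_i\mu_ip_{t,i}$. Hence $\bbE[R(T)]$ equals the expected pseudo-regret of the induced multi-play bandit algorithm that pulls $S_t$ and observes semi-bandit feedback $\{u_{t,i}\}_{i\in S_t}$.

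Conversely, any allocation algorithm in our setting is a multi-play bandit algorithm: it chooses $\bfp_t$ from past observations and then samples $S_t$. Therefore $\inf_{\text{alg}}\sup_{\calD}\bbE[R(T)]$ is at least the minimax regret of $k$-of-$n$ multi-play bandits with semi-bandit feedback.

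\textbf{Step 2 (import the bound).} For this combinatorial semi-bandit with action set the $k$-subsets, the known minimax lower bound is $\Omega(\sqrt{nkT})$ for $T\gtrsim n$; see \cite{kveton2015tight,cesa2012combinatorial}. The hard instance takes Bernoulli utilities with means $1/2$, except for a hidden $k$-set with means $1/2+\epsilon$ and $\epsilon\asymp\sqrt{n/(kT)}$. A KL and change-of-measure argument over the choice of hidden set shows that a constant fraction of the $kT$ pulls fall on suboptimal arms, each costing $\epsilon$.

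Two requirements must be checked. First, our model needs $\mu_i>0$; means near $1/2$ satisfy this. Second, Theorem~\ref{thm:regret_upper_bound} assumes 1-sub-Gaussian utilities; Bernoulli variables satisfy this as well. So the hard instances lie inside our model class.

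\textbf{Main obstacle.} The delicate step is verifying that the reduction is exact. The concern is that our learner could exploit fractional policies, i.e. randomizing over subsets, to gain information a bandit algorithm cannot. This is ruled out because feedback depends only on the realized $S_t$: a fractional policy is just a randomized bandit algorithm, and the minimax lower bounds already cover randomized algorithms. The remaining check is that the cited lower bound is stated for semi-bandit feedback rather than full information, and for the top-$k$ action set. If only a bandit-feedback or single-play bound were available, I would fall back to proving the $\sqrt{nkT}$ bound directly via the hidden-$k$-set construction sketched in Step 2.
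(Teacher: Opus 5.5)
Your route is the same as the paper's. The paper supports this proposition only by noting that utilitarian welfare turns the task into a $k$-play stochastic bandit, and then citing \cite{kveton2015tight,cesa2012combinatorial}. Your Step 1 is a correct and more careful version of that reduction. The normalization point matters: with $\bfw\in\Delta_{n-1}$ (WPM with $q=1$, Kolm with $q=0$) the factor is $c=1/n$, which is not a constant, and that route gives only $\Omega(\sqrt{kT/n})$. Choosing Gini with $\bfw=\boldsymbol{1}_n$ is what produces the stated rate.

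Step 2, however, has two genuine problems.

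\emph{The range of $k$.} The bound cannot hold uniformly in $k$. In any round, at most $\min(k,n-k)$ of the chosen individuals can lie outside the optimal set. For $k=n$ the regret is identically zero, as the paper itself remarks right after the proposition, so the statement as written shares this defect. Your sentence that ``a constant fraction of the $kT$ pulls fall on suboptimal arms'' is exactly where this fails. What can be proved is $\Omega(\sqrt{nkT})$ for $k\le cn$ with a fixed $c<1$, together with $kT\gtrsim n$ so that $\epsilon\le 1/4$.

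\emph{The import from the references.} The cited results do not justify the import. \cite{cesa2012combinatorial} treats adversarial losses with full-bandit feedback, and lower bounds under less informative feedback do not transfer to semi-bandit feedback. The gap-free lower bound in \cite{kveton2015tight} is obtained on a specially structured feasible family built from disjoint blocks of items, not on all $k$-subsets. A learner that may mix items across blocks can gather more information per round, so that bound does not carry over either.

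Your fallback is therefore the actual proof, and it does go through. Let $N_{T,i}$ be the number of rounds with $i\in S_t$.
\begin{itemize}
\item For $i\in S^*$, compare $P_{S^*}$ with $P_{S^*\setminus\{i\}}$. These differ only in arm $i$, so the divergence decomposition and Pinsker give
\[
\bbE_{S^*}[N_{T,i}]\le \bbE_{S^*\setminus\{i\}}[N_{T,i}]+T\epsilon\sqrt{C\,\bbE_{S^*\setminus\{i\}}[N_{T,i}]}.
\]
\item Take $S^*$ uniform and $i$ uniform in $S^*$. Then $R=S^*\setminus\{i\}$ is uniform, and given $R$, $i$ is uniform on $[n]\setminus R$.
\item Using $\sum_{i\notin R}\bbE_R[N_{T,i}]\le kT$ and Jensen, averaging over $S^*$ gives
\[
\bbE\Bigl[\sum_{i\in S^*}N_{T,i}\Bigr]\le \frac{k^2T}{n-k+1}+kT\epsilon\sqrt{\frac{C\,kT}{n-k+1}}\le \frac{7}{12}\,kT
\]
for $k\le n/4$ and $\epsilon=c_0\sqrt{n/(kT)}$ with $c_0$ small.
\item Hence the Bayes regret $\epsilon\bigl(kT-\bbE[\sum_{i\in S^*}N_{T,i}]\bigr)$ is $\gtrsim\sqrt{nkT}$.
\end{itemize}
The first term in the display is where the restriction on $k$ enters.
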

Thus, our algorithm is order-optimal in $k$ and $T$ and near-optimal in $n$, matching known bounds up to poly-log factors. 

There are situations where the upper bound can be made much tighter. For example, when $k = n$, every individual gets a resource at each time step, so $\bfp_t = \bfp^* = \boldsymbol{1}_n$, yielding zero regret. 
We hypothesize that the worst-case regret is actually attained at an intermediate $k$ rather than $k = 1$ or $k = n$. 
Intuitively, increasing $k$ raises the cost of misallocation at given time step. 
However, for $k \gtrsim n/2$, the diameter of the space of possible allocations also decreases, resulting in a smaller maximum possible deviation from the optimal allocation. 
This leads to a non-monotonic dependence of regret on $k$. 
Our experiments on varying $k$ in Section \ref{sec:experiments} empirically verify this behavior.

Another such situation is the WPM SWF setting with $q = 0$. The optimal solution is of the form $\bfp_t = \sqbrac{\lambda w_i}_{[0, 1]}$ and $\lambda$ is chosen such that $\sum_i p_{t,i} = k$. This optimal solution is independent of the estimate $\bfmuUpper_t$, thus the regret is also near-zero in this case. This indicates that although the regret bound is sub-linear and near-optimal for the worst case, there are non-trivial settings where the bound can be made tighter. We leave further analysis of special cases and the exact dependence of the regret bound on $k$ to future work (Section \ref{sec:discussion}).
\section{Experiments}
\label{sec:experiments}


Our theory provides a general near-optimal bound for SWF-UCB applicable to any SWF satisfying assumptions (\ref{assump:mono})-(\ref{assump:lipschitz}). 
While Theorem \ref{thm:regret_upper_bound} establishes worst-case regret guarantees, it does not fully characterize how regret depends on key problem parameters.
In our experiments, we study the empirical dependence of the regret on three problem parameters: time horizon $T$, fairness (power) parameter $q$, and allocated resources $k$. 

We conduct simulations on all three SWF families--WPM, Kolm, and Gini--for a population of $n=50$. Individual utilities upon receiving a resource are distributed as $U_i \sim 0.1 + 0.9 X_i$, where $X_i$ is Beta distributed with parameters $(\alpha_i, \beta_i)$ chosen randomly. We repeat each experiment for 5 randomly-seeded runs, holding the arm distributions constant and varying the randomized sampling.

We set the weight vector $\mathbf{w}$ such that $w_i \propto 0.9 ^ {i-1}$ and $\sum_i w_i = 1$. This corresponds to certain individuals being given greater importance in the weighting for WPM and Kolm and it is closer to egalitarian allocation for Gini.

\textbf{Varying horizon $T$.} We first verify that our algorithms achieve sublinear regret with the predicted $\sqrt{T}$ scaling across SWFs and allocation regimes. For all three SWFs, we consider $T$ in the range $[10^3, 2.56 \cdot 10^5]$ on a logarithmic scale. For WPM and Kolm, we set the default power parameter as $q = -2$. Figure \ref{fig:trends_T} plot the regrets against number of time steps with varying $k$.

For all three SWFs, we observe that the $R(T) / \sqrt{T}$ is bounded above, indicating that $R(T)$ is $\mathcal{O}(\sqrt{T})$. 
The regret first increases and then decreases with increasing $k$. However, the relative order of the regret for different $k$ changes with increasing $T$.

\begin{figure*}[t]
\centering
\begin{subfigure}[t]{0.29\linewidth}
\includegraphics[width=\linewidth]{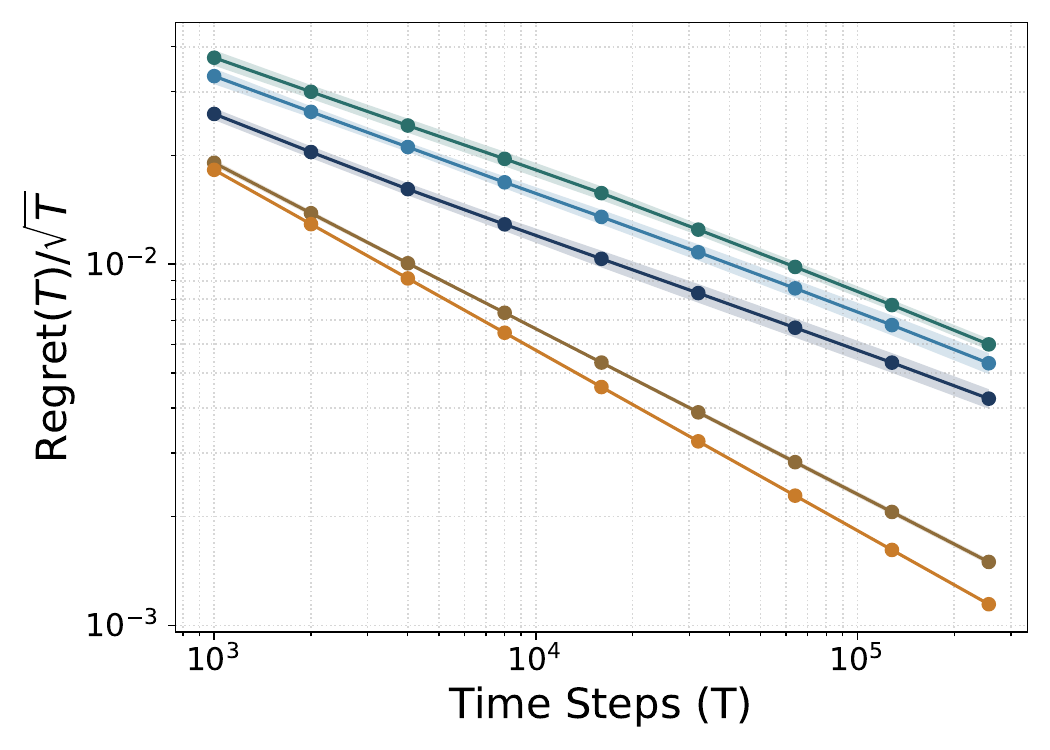}
\caption{WPM SWF}
\label{fig:wpm_trends_T}
\end{subfigure}
\begin{subfigure}[t]{0.29\linewidth}
\includegraphics[width=\linewidth]{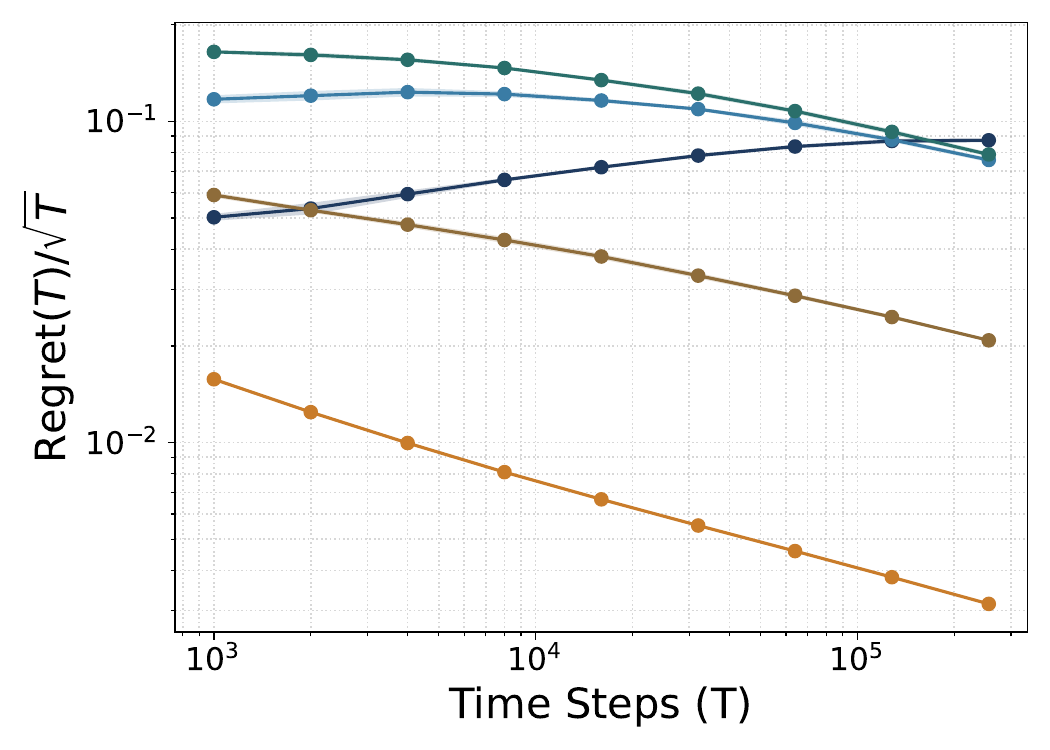}
\caption{Kolm SWF}
\label{fig:kolm_trends_T}
\end{subfigure}
\begin{subfigure}[t]{0.29\linewidth}
\includegraphics[width=\linewidth]{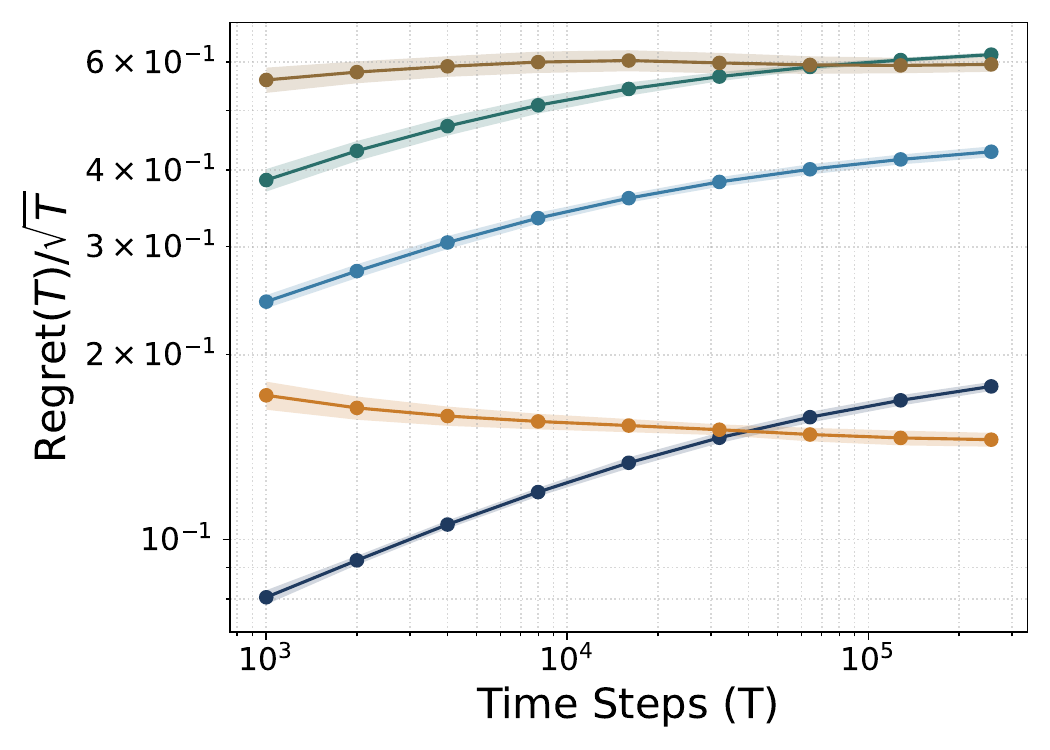}
\caption{Gini SWF}
\label{fig:gini_trends_T}
\end{subfigure}
\begin{subfigure}[t]{0.1\linewidth}
\centering
\raisebox{0.5cm}{
\includegraphics[width=\linewidth]{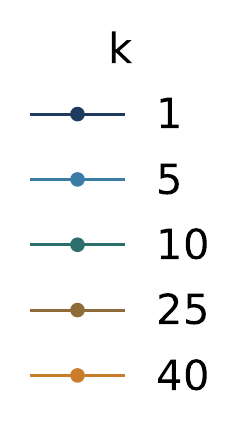}
}
\end{subfigure}

\caption{Normalized regret $R(T) / \sqrt{T}$ versus time horizon $T$ for WPM, Kolm, and Gini SWFs. The normalized regret remains bounded across two orders of magnitude in $T$, consistent with our theoretical $\tilde{\mathcal{O}}(\sqrt{T})$ guarantee.}
\label{fig:trends_T}
\end{figure*}

\begin{figure*}[t]
\centering
\begin{subfigure}[t]{0.44\linewidth}
\includegraphics[width=\linewidth]{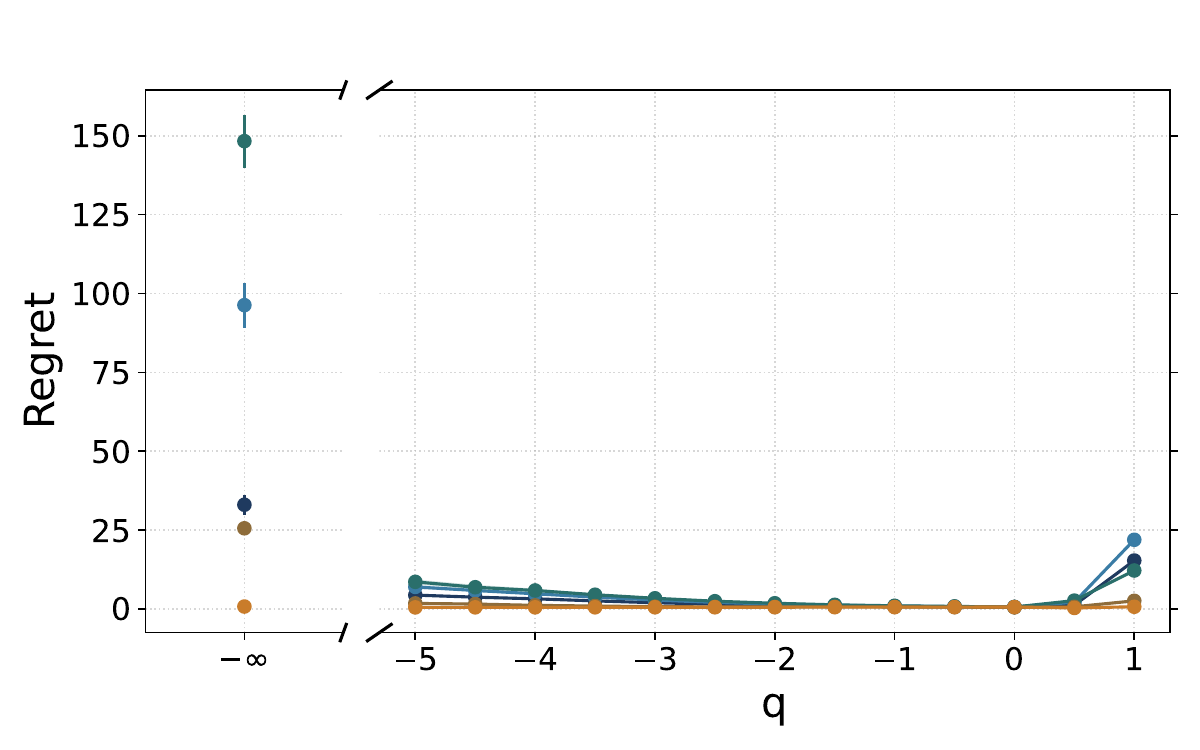}
\caption{WPM SWF}
\label{fig:wpm_trends_pow}
\end{subfigure}
\begin{subfigure}[t]{0.44\linewidth}
\includegraphics[width=\linewidth]{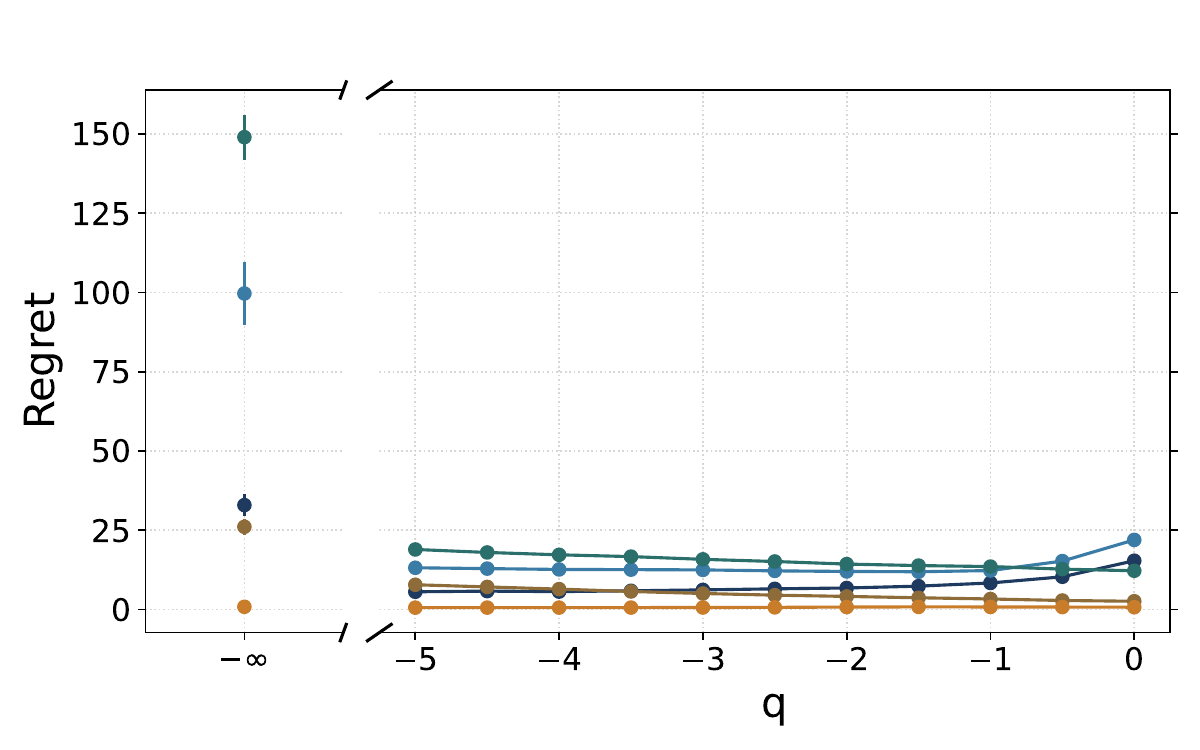}
\caption{Kolm SWF}
\label{fig:kolm_trends_pow}
\end{subfigure}
\begin{subfigure}[t]{0.1\linewidth}
\centering
\raisebox{0.7cm}{%
\includegraphics[width=\linewidth]{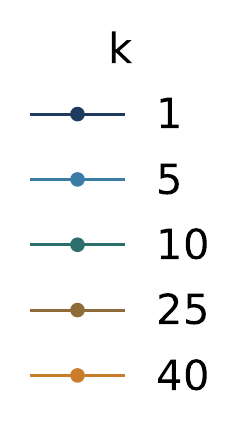}
}
\end{subfigure}

\caption{Trends with varying power value $q$ for WPM and Kolm SWFs. We consider the range between egalitarian ($q=-\infty$ for WPM and Kolm) and utilitarian ($q=1$ for WPM and $q=0$ for Kolm). While there is a smooth change in the observed regret, there is significant variability with changing $k$.}
\label{fig:trends_pow}
\end{figure*}

\textbf{Varying power value $q$.} We next study how the fairness parameter affects learning difficulty. We consider trends with the power $q$ for WPM and Kolm SWFs. We run all experiments for $T = 10^4$ time steps. We plot the variation of the regret against $q$ with different values of $k$ in Figure \ref{fig:trends_pow}.
For WPM SWF, the regret decreases until $q = 0$, followed by an increase until $q = 1$. 
The regret does not change much for $q = 0$ since the allocation probabilities only depend on $\mathbf{w}$. 
For Kolm SWF, smaller values of $k$ ($k=1,5$) exhibit a mild increase in the regret with increasing $q$. As $k$ increases, this shifts to a mildly decreasing trend in regret with increasing $q$.
For $q=-\infty$, there is an increase in regret with increasing $k$, followed by a decrease. 

\textbf{Varying number of resources $k$.} Finally, we examine how the number of resources affects regret behavior. We run all experiments for $T = 10^4$ time steps. Figure \ref{fig:trends_k} shows the variation of the regret against $k$ with different values of $q$.

\begin{figure*}[t]
\centering
\begin{subfigure}[t]{0.29\linewidth}
\includegraphics[width=\linewidth]{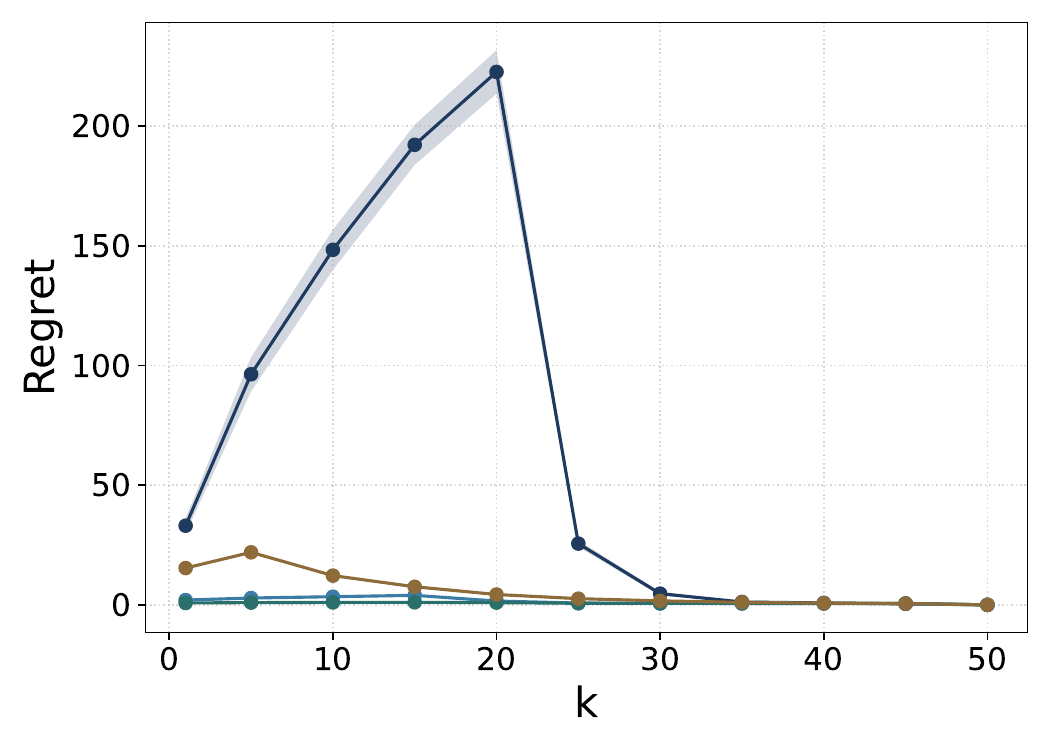}
\caption{WPM SWF}
\label{fig:wpm_trends_k}
\end{subfigure}
\begin{subfigure}[t]{0.29\linewidth}
\includegraphics[width=\linewidth]{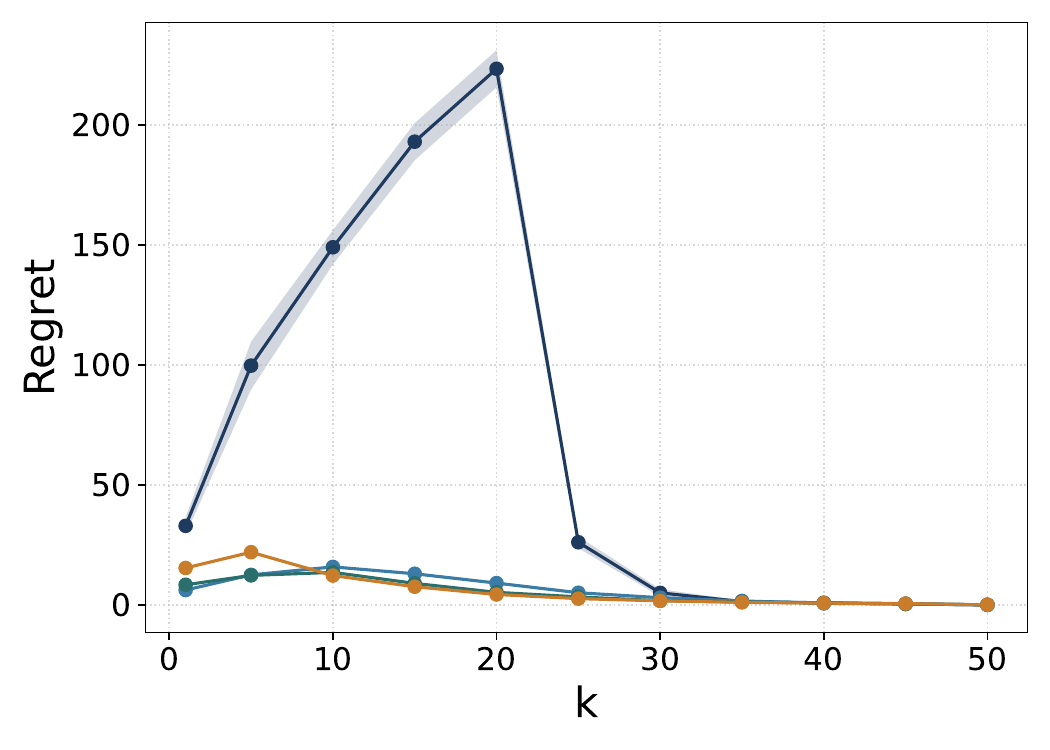}
\caption{Kolm SWF}
\label{fig:kolm_trends_k}
\end{subfigure}
\begin{subfigure}[t]{0.29\linewidth}
\includegraphics[width=\linewidth]{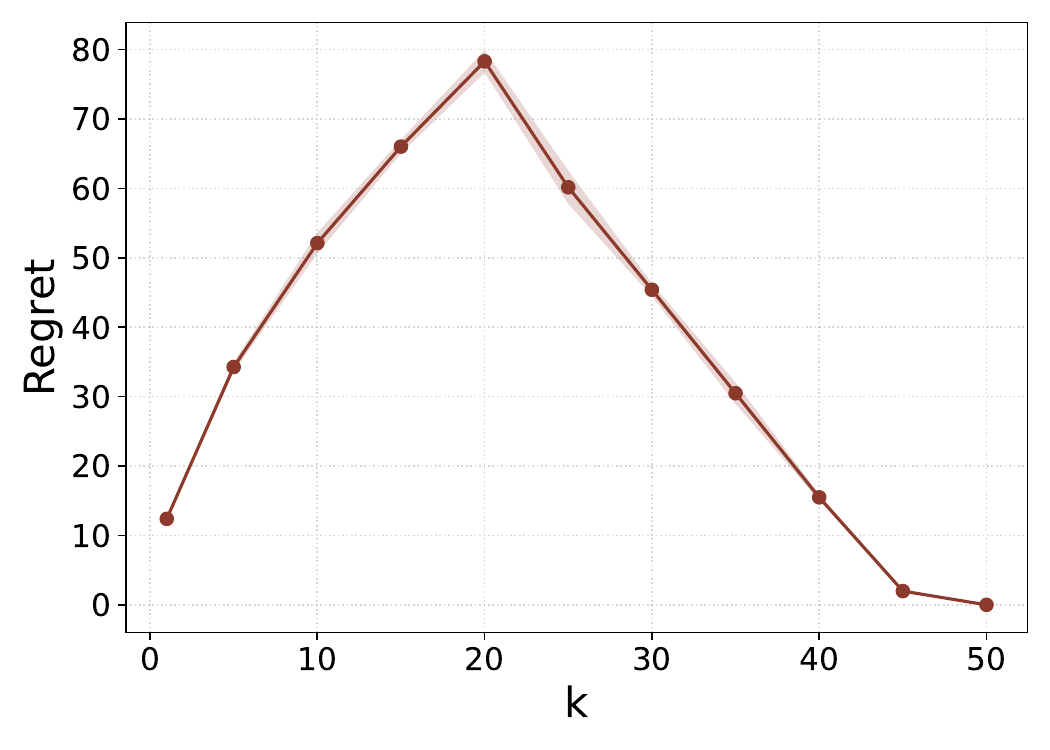}
\caption{Gini SWF}
\label{fig:gini_trends_k}
\end{subfigure}
\begin{subfigure}[t]{0.1\linewidth}
\centering
\raisebox{0.5cm}{%
\includegraphics[width=\linewidth]{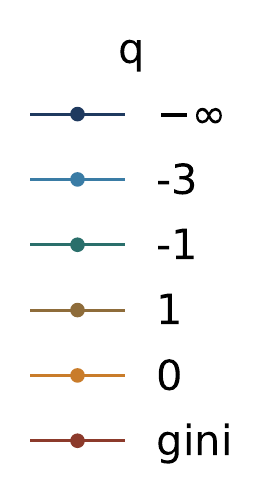}
}
\end{subfigure}

\caption{Trends with varying number of allocated resources $k$ for all three SWFs. We observe that there is a sharp decrease after $k=20$ for egalitarian welfare. The changes with increasing $q$ becomes less gradual for both WPM and Kolm SWFs. Geometric weights have some similarity with egalitarian welfare, and we see a similar pattern with varying $k$ for Gini SWF, although the curve is much smoother.}
\label{fig:trends_k}
\end{figure*}

For $q=-\infty$ in WPM and Kolm (the egalitarian case), there is an increase in the regret until $k=20$, followed by a sharp decrease for higher $k$. 
As the number of resources increase in the egalitarian case, the probability of allocating a resource to the individual with the lowest predicted utility increases. 
However, once the individual with the lowest actual $\mu_i$ is allocated a resource with probability 1, we have $M(\bfmu \odot \bfp_t) = \min_{i} \mu_i = M(\bfmu \odot \bfpUpper_t)$, and so the regret does not increase.

For both WPM and Kolm, the regret curves flatten as  $q$ increases. For Gini, the curve resembles the egalitarian case, with an increase in regret until $k=20$ followed by a more gradual decrease.
With increasing $k$, there is generally an increase in regret followed by a decrease to zero at $k=n$. 
This visualizes the tradeoff between increasing cost of sub-optimal allocations with increasing $k$, and decreasing diameter of the allocation space for $k\gtrsim n/2$. 
The former factor dominates for small $k$, whereas the latter factor comes into effect for larger $k$.

Appendix \ref{app:sec:more_expts} reports experimental results for these utility distributions with linear weight decay. We empirically verify the theoretical $\tilde{\mathcal{O}}(\sqrt{T})$ guarantee in Figure \ref{app:fig:trends_T}. We also observe different trends with varying $k$ (Figure \ref{app:fig:trends_k}) and $q$ (Figure \ref{app:fig:trends_pow}), indicating that the weight vector also adds richness to the problem space.

\textbf{Summary.}
Together, our experiments demonstrate three key findings.
First, the $\sqrt{T}$ scaling is empirically valid across our three normatively distinct SWF families.
Second, there is a non-monotonic dependency of regret on $k$ indicating that highest regret occurs for intermediate $k$. 
Third, online welfare learning exhibits rich, structured behavior that is not present in multi-play bandits, with interactions between the choice of SWF, the parameters of the SWF, and $k$.
\section{Discussion}
\label{sec:discussion}

\textbf{Usage of SWF families.} The three SWF families in this work encode different normative values and notions of fairness and we discuss their potential usage below.

We discuss the WPM and Kolm families jointly as they have similar theoretical formulations.
The WPM family follows relative inequality aversion and is useful when \textit{relative} differences in utility are important. 
The power $q \in (-\infty, 0)$ interpolates between egalitarian and Nash welfare, while $q \in (0, 1)$ interpolates between Nash and utilitarian welfare.
To contrast, the Kolm family follows absolute inequality aversion, hence it can be used when \textit{additive} differences in utility are important. The power $q \in (-\infty, 0)$ interpolates between egalitarian and utilitarian welfares. 
For both WPM and Kolm families,the weight vector $ \bfw$ can be interpreted as encoding the relative importance of individuals in the social welfare. Thus, $\mathbf{w}$ can encode vulnerability, priority classes, or societal importance.

The Gini family is useful when positional or rank-based inequality between individuals matters: the welfare depends on who is worse-off relative to others, rather than on absolute or relative utility gaps.
The weight vector $\bfw$ encodes social priority across ranks, allowing policymakers to emphasize improvements among the bottom-ranked individuals independently of their absolute utilities.

\textbf{Direct extensions.}
Our online SWF maximization \ref{sec:SWF-max} framework can be seamlessly used for inference applications, with Theorem \ref{thm:cs_lifting} providing a valid welfare CS and the oracle algorithms in Section \ref{sec:SWF-max:concavity} providing an efficient way to learn the dynamic policy $\bfpLower_t$ or $\bfpUpper_t$.
We explore three such applications---sequential testing, optimal stopping, and policy evaluation---in Appendix \ref{app:sec:inference_applications}.

While we consider a fixed number of resources $k$ arriving at each time step $t$, the framework and analysis can be readily extended to accommodate a variable number of resources $k_t$ arrives at each $t$. 
Theorem \ref{thm:cs_lifting} still holds and the oracle algorithms can be run with a different $k_t$ at each time step. In this case, we conjecture that regret guarantees would be of the form $\tilde{O}(n + \sqrt{n \sum_{t=1}^{T} k_t})$.

Finally, although we assume that the utility upon non-allocation of a resource is zero for simplicity of analysis, it should also be possible to extend the framework to situations where the non-allocation utility $u^{(0)}_{t,i}$ is distributed stochastically, and we comment on this further in \ref{app:sec:stochastic_non_allocation}.

\textbf{Future work.} In Theorem \ref{thm:cs_lifting}, we use $(1-\delta/n)$ CSs for the individuals' mean utilities to form a $(1-\delta)$ CS for the social welfare. A tighter bound can be obtained by considering $(1-\delta_i)$ CSs for the mean utilities, where $\sum_i \delta_i = \delta$, and a refined allocation of confidence across individuals. The choice of $\delta_i$ would depend on the SWF parameters and could potentially improve statistical and computational efficiency.

Theorem \ref{thm:regret_upper_bound} provides a regret bound applicable to any SWF satisfying (\ref{assump:mono})-(\ref{assump:lipschitz}), showing that SWF-UCB is near-optimal in a minimax sense. However, for specific SWFs, one may obtain tighter bounds with explicit dependence on fairness parameters such as $q$ or $\bfw$, potentially improving constants and adaptivity (Remark \ref{rem:tighter_wpm_bounds}).

Empirically, we see a non-monotonic phase transition in regret as a function of $k$: regret initially increases and then decreases as more resources are allocated. 
This transition depends on the SWF parameters, suggesting a rich interaction between fairness, uncertainty, and resource availability.


While we assume iid utilities, Theorem \ref{thm:cs_lifting} is agnostic to the source of uncertainty and only requires valid time-uniform confidence sequences for individual utilities. 
This suggests that, when such CSs are available for stateful reward processes, extensions to restless \cite{whittle1988restless,wang2020restless} or Markov \cite{neu2010online,ortner2012regret} bandits may be possible without altering the welfare-level inference machinery.


\section*{Impact Statement}
This paper develops a theoretical framework that leverages confidence sequences and social welfare functions for online resource allocation and inference. Given the abstract nature of our work, we do not anticipate that it poses a significant direct societal risk.
However, the modeling and development of automated resource allocation systems raises important societal considerations.
Our formulation assumes individual utilities can be observed or estimated, yet in practice utilities are latent constructs that may be difficult to elicit, unstable, or contested.
By definition, SWF parameters ($w$, $q$) encode normative judgments about equity-efficiency tradeoffs that warrant stakeholder engagement through established approaches such as value-sensitive design. Additionally, fairness across protected groups may require constraints beyond SWF maximization alone. While our work is motivated by real-world allocation problems (Section \ref{sec:intro}), detailed guidance for operationalizing this framework in specific applications is beyond the current scope.

\bibliography{references}
\bibliographystyle{icml2026}

\appendix
\onecolumn

\section{Further Discussion}
\label{app:sec:further_discussion}

\subsection{Inference applications}
\label{app:sec:inference_applications}
We now discuss online inference applications which are a natural byproduct of the CS lifting theorem (Theorem \ref{thm:cs_lifting}). We list some applications and their interpretations in the SWF-based allocation setting. These applications illustrate how welfare-level CSs enable valid inference under adaptive data collection.

We note that because of the generality of Theorem \ref{thm:cs_lifting}, other sequential inference tasks \cite{howard2020timeuniform} can also be addressed using this framework. Moreover, in settings where $\bfp_t$ is learned dynamically, our oracle algorithms from Section \ref{sec:SWF-max:concavity} can be used to efficiently learn them.

We re-state Theorem \ref{thm:cs_lifting} with a sequential inference perspective.

\begin{corollary} Let $[\bfmuLower_t, \bfmuUpper_t]$ be a $(1-\delta)$ CS for $\bfmu$.
\label{app:cor:cs_lifting}
\begin{enumerate}
\item (Fixed allocation policy $\bfp$): Let $\bfp$ be a known fixed allocation policy. Let $\WLower_{\bfp,t} = M(\bfmuLower_t \odot \bfp)$ and $\WUpper_{\bfp,t} = M(\bfmuUpper_t \odot \bfp)$. Then $\set{[\WLower_{\bfp,t}, \WUpper_{\bfp,t}]}_{t \geq 1}$ is a $(1-\delta)$ CS for $W_{\bfp} = M(\bfmu \odot \bfp)$.
\item (Dynamic allocation policy $\bfp_t$): Let 
$$\begin{aligned}
    \bfpLower_t = \arg\max_{\bfp \in \calP_k} M(\bfmuLower_t \odot \bfp) \quad \text{and} \quad \bfpUpper_t = \arg\max_{\bfp \in \calP_k} M(\bfmuUpper_t \odot \bfp).
\end{aligned}$$
Let $\WLower_t = M(\bfmuLower_t \odot \bfpLower_t)$ and $\WUpper_t = M(\bfmuUpper_t \odot \bfpUpper_t)$. Then $\set{[\WLower_t, \WUpper_t]}_{t \geq 1}$ is a $(1-\delta)$ CS for $W^*=M(\bfmu \odot \bfp^*)$.
\end{enumerate}
\end{corollary}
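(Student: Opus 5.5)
Both parts follow from a single good event together with monotonicity (\ref{assump:mono}). This is essentially the argument behind Theorem \ref{thm:cs_lifting}, restated. Define
$$E = \set{\forall t \in \bbN,\ \forall i \in [n]: \muLower_{t,i} \leq \mu_i \leq \muUpper_{t,i}}.$$
By hypothesis $[\bfmuLower_t, \bfmuUpper_t]$ is a $(1-\delta)$ CS for the vector $\bfmu$, so $\bbP(E) \geq 1-\delta$. If instead one starts from coordinate-wise $(1-\delta/n)$ CSs, as in Theorem \ref{thm:cs_lifting}, a union bound over $i \in [n]$ gives the same conclusion. The rest of the proof is deterministic on $E$, so the resulting bounds hold simultaneously for all $t$. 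This is exactly the time-uniform coverage a CS requires.

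For part 1, fix $\bfp \in \calP_k$ with $\bfp \geq 0$. On $E$ we have $\muLower_{t,i} p_i \leq \mu_i p_i \leq \muUpper_{t,i} p_i$ coordinatewise, because multiplying by a nonnegative $p_i$ preserves the inequalities. If the lower CS endpoints could be negative, I would first clip them at $0$; this keeps the vectors in $\mathbb{R}_+^n$, the domain of (\ref{assump:mono}), and does not affect validity since $\mu_i > 0$. Monotonicity then gives $\WLower_{\bfp,t} \leq W_{\bfp} \leq \WUpper_{\bfp,t}$ for all $t$ on $E$. We also need $\bfp$ to be fixed, or at least predictable, so that the interval at time $t$ depends only on past data. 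For a known fixed $\bfp$ this is immediate.

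For part 2, on $E$ I chain monotonicity with optimality. For the lower bound, $\WLower_t = M(\bfmuLower_t \odot \bfpLower_t) \leq M(\bfmu \odot \bfpLower_t) \leq M(\bfmu \odot \bfp^*) = W^*$: the first step is monotonicity, and the second holds because $\bfp^*$ maximizes $M(\bfmu \odot \cdot)$ over $\calP_k$. For the upper bound, $W^* = M(\bfmu \odot \bfp^*) \leq M(\bfmuUpper_t \odot \bfp^*) \leq M(\bfmuUpper_t \odot \bfpUpper_t) = \WUpper_t$: the first step is monotonicity and the second is the optimality of $\bfpUpper_t$. The argmaxes exist because $\calP_k$ is compact and $M$ is continuous by (\ref{assump:lipschitz}). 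If an argmax is not unique, any measurable selection works, because only the optimal values $\WLower_t$ and $\WUpper_t$ enter the bounds.

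The main point needing care is the quantifier order rather than any computation. The random policies $\bfpLower_t$ and $\bfpUpper_t$ depend on the data, so one cannot condition on them. The argument avoids this because every inequality holds deterministically on the single event $E$, which is defined before any optimization and for all $t$ at once. The comparisons with the fixed $\bfp^*$ and with the data-dependent argmaxes therefore never require a separate probability statement.
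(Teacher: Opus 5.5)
Your proposal is correct and follows the paper's argument. The paper proves this as Theorem~\ref{thm:cs_lifting}: it obtains the simultaneous-coverage event by a union bound and then chains the same four inequalities you use (optimality of $\bfp^*$, monotonicity, monotonicity, optimality of $\bfpUpper_t$), with the fixed-policy case reducing to the monotonicity step alone; your extra remarks on clipping negative lower endpoints to stay in $\mathbb{R}_+^n$, on existence of the argmax, and on why no conditioning on the data-dependent policies is needed are sound refinements that the paper leaves implicit.
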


The oracle algorithms developed in Section \ref{sec:SWF-max:concavity} can be used to adadptively learn $\bfpLower_t$ or $\bfpUpper_t$ efficiently. We now discuss how these CSs can be used for inference.

\paragraph{Sequential testing.} This application has two component use cases:
\begin{enumerate}
\item \textit{Fixed policy $\bfp$}: In this case, the decision-maker would want to test whether the allocation policy $\bfp$ exceeds a certain welfare value $W^*$. Here, the randomness is only in the utility samples. The hypotheses to be tested are
$$\begin{aligned}
    H_0: M(\bfmu \odot \bfp) < W_0, \quad \text{versus} \quad H_1: M(\bfmu \odot \bfp) > W_0.
\end{aligned}$$
From the first part of Corollary \ref{app:cor:cs_lifting}, we get the test: reject null at $\tau = \inf\set{t: \WLower_{\bfp,t} > W_0}$. Because of the time-uniform nature of the CS, we have
$$\begin{aligned}
    \bbP_{H_0}(\exists t: \WLower_{\bfp,t} > W_0) \leq \delta,
\end{aligned}$$
which means that the test is valid.
\item \textit{Dynamic policy $\bfp_t$}: In this case, the decision-maker would want to know if the \textit{optimal} allocation policy $\bfp^*$ achieves a welfare exceeding a certain value $W^*$. Here, both the policy and welfare are data-dependent. The hypotheses to be tested are
$$\begin{aligned}
    H_0: M(\bfmu \odot \bfp^*) < W_0, \quad \text{versus} \quad H_1: M(\bfmu \odot \bfp^*) > W_0.
\end{aligned}$$
From the second part of Corollary \ref{app:cor:cs_lifting}, we get the test: reject null at $\tau = \inf\set{t: \WLower_{t} > W_0}$. Because of the time-uniform nature of the CS, we have
$$\begin{aligned}
    \bbP_{H_0}(\exists t: \WLower_{t} > W_0) \leq \delta,
\end{aligned}$$
which means that the test is valid.
\end{enumerate}
Since the bound is time-uniform, this testing can be adaptive, where the experiment can be stopped once enough evidence has been gathered for the null to be rejected \cite{ramdas2020admissible}. This idea can also be applied to test for the allocation achieving a value below a certain threshold $W_*$ by considering upper confidence sequences $\set{M(\bfmuUpper_t \odot \bfpUpper_t)}_{t \geq 1}$.

\paragraph{Optimal stopping.} Consider a decision-making setting where a policy is to be learned dynamically in a testing phase. The testing phase is stopped when the social welfare achieved by the current allocation exceeds a certain value $W_0$, after which the policy is deployed. Unlike sequential testing, the goal here is not hypothesis rejection but deciding when to transition from exploration to deployment.

Let $\mathcal{E} = \set{\forall t \geq 1: \bfmuLower_t \preceq \bfmu}$, which holds with probability at least $(1-\delta)$. On $\mathcal{E}$, since $M$ is a monotone function, for all $t$,
$$M(\bfmu \odot \bfpLower_t) \geq M(\bfmuLower_t \odot \bfpLower_t) = \WLower_t$$

Our stopping guarantee in this case is thus: at $\tau = \inf\set{t: \WLower_t > W_0}$, 
$$\bbP(M(\bfmu \odot \bfpLower_\tau) > W_0) \geq 1-\delta.$$

After stopping the experiment, we deploy $\bfpLower_{\tau}$.

\paragraph{Policy evaluation.} Our method can also be used to compare the social welfare for two policies $\bfp_{(1)}$ and $\bfp_{(2)}$. This allows us to compare the policies and choose the better one among them for deployment in an adaptive fashion.

Interestingly, either of these policies (or a combination of them) can be used to gather the data,provided the induced sampling process ensures coverage of all individuals (e.g., $p_{t,i} \geq \gamma > 0$ for all $i$), since we only require CSs on the individual utilities $\mu_{t,i}$. Based on the first part of Corollary \ref{app:cor:cs_lifting}, we construct $(1-\delta/2)$ welfare CSs for $\bfp_{(1)}$ and $\bfp_{(2)}$, denoting them by $\set{[\WLower_{(1),t}, \WUpper_{(1),t}]}_{t \geq 1}$ and $\set{[\WLower_{(2),t}, \WUpper_{(2),t}]}_{t \geq 1}$. Policy $i$ can be chosen for deployment over policy $j$ with confidence $(1-\delta)$ when $\WLower_{(i),t} > \WUpper_{(j),t}$.

\subsection{Extension: Stochastic Utilities for non-allocation}
\label{app:sec:stochastic_non_allocation}
In Sections \ref{sec:problemsetup}--\ref{sec:experiments}, we assume that the utility when an individual does not receive a resource is zero. We now discuss how the CS-based framework can be extended to stochastically observed utilities for non-allocation.

Let $u_{t,i}^{(1)} \stackrel{iid}{\sim} \calD_{i}^{(1)}$ and $u_{t,i}^{(0)} \stackrel{iid}{\sim} \calD_{i}^{(0)}$ be the stochastic utilities at time $t$ when an individual $i$ is allocated and not allocated a resource, respectively. Let $\mu_i^{(1)}$ and $\mu_i^{(0)}$ respectively be the mean utilities for allocation and non-allocation to individual $i$. The ex-ante utility for allocation policy $\bfp$ is thus given by 
$$\overline{\bfmu}(\bfp) = \bfmu^{(1)} \odot \bfp + \bfmu^{(0)} \odot (\boldsymbol{1}_n - \bfp),$$ 
and the social welfare is thus $M\paren{\overline{\bfmu}(\bfp)}$. Thus, allocating a resource to individual $i$ yields an effective utility gain of $\mu_i^{(1)} - \mu_i^{(0)}$ relative to non-allocation.

\paragraph{Extending confidence sequences.} Using the observed values, we can construct $(1-\delta/2n)$ two-sided CSs for $\mu_i^{(1)}$ and $\mu_i^{(0)}$, denoted by $\set{[\mu_{t,i}^{(1)\uparrow}, \mu_{t,i}^{(1)\downarrow}]}_{t \geq 1}$ and $\set{[\mu_{t,i}^{(0)\downarrow}, \mu_{t,i}^{(0)\uparrow}]}_{t \geq 1}$. By the union bound, we get the following $(1-\delta)$ CS for $\overline{\bfmu}(\bfp)$:
$$\set{\sqbrac{\paren{\bfmu_t^{(1)\downarrow} - \bfmu_t^{(0)\uparrow}} \bfp + \bfmu_t^{(0)\downarrow}, \paren{\bfmu_t^{(1)\uparrow} - \bfmu_t^{(0)\downarrow}} \bfp + \bfmu_t^{(0)\uparrow}}}_{t \geq 1}.$$

If we let 
$$\begin{aligned}
    \bfpLower_t &= \arg\max_{\bfp \in \calP_{\leq k}} \paren{\bfmu_t^{(1)\downarrow} - \bfmu_t^{(0)\uparrow}} \bfp + \bfmu_t^{(0)\downarrow}, \quad \text{and} \\
    \bfpUpper_t &= \arg\max_{\bfp \in \calP_{\leq k}} \paren{\bfmu_t^{(1)\uparrow} - \bfmu_t^{(0)\downarrow}} \bfp + \bfmu_t^{(0)\uparrow},
\end{aligned}$$
we then have the following $(1 - \delta)$ CS:
$$\set{\sqbrac{\paren{\bfmu_t^{(1)\downarrow} - \bfmu_t^{(0)\uparrow}} \bfpLower_t + \bfmu_t^{(0)\downarrow}, \paren{\bfmu_t^{(1)\uparrow} - \bfmu_t^{(0)\downarrow}} \bfpUpper_t + \bfmu_t^{(0)\uparrow}}}_{t \geq 1}.$$

Owing to the monotonicity of $M(\cdot)$ and proceeding through a similar sequence of inequalities as in Theorem \ref{thm:cs_lifting}, we get the following $(1-\delta)$ CS for $M(\overline{\bfmu}(\bfp))$
$$\set{\sqbrac{M\paren{\paren{\bfmu_t^{(1)\downarrow} - \bfmu_t^{(0)\uparrow}} \bfpLower_t + \bfmu_t^{(0)\downarrow}}, M\paren{\paren{\bfmu_t^{(1)\uparrow} - \bfmu_t^{(0)\downarrow}} \bfpUpper_t + \bfmu_t^{(0)\uparrow}}}}_{t \geq 1}.$$

\paragraph{Extending oracle algorithms.} Our oracle algorithm has to solve a problem of the form
$$\begin{aligned}
    \bfpUpper_t &= \arg\max_{\bfp \in \calP_{\leq k}} M\paren{\paren{\bfmu_t^{(1)\uparrow} - \bfmu_t^{(0)\downarrow}} \bfp + \bfmu_t^{(0)\uparrow}}.
\end{aligned}$$

For some individual $i$, if $\bfmu_{t,i}^{(0)\downarrow} > \bfmu_{t,i}^{(1)\uparrow}$, we can infer with high confidence that not allocating a resource gives them higher utility. In this case, allocating a resource to individual $i$ can only decrease the welfare objective with high probability, and monotonicity implies that setting $p_i = 0$ is optimal. Thus, such individuals can be excluded from the allocation, and the optimization problem can be solved only for the remaining individuals. This in turn raises the possibility of less than $k$ resources being allocated to the population, which results in our feasible set for $\bfp$ becoming
$$\calP_{\leq k} = \set{\bfp \in [0, 1]^n: \sum_i p_i = s, s \in \mathbb{N}, s \leq k}$$
\section{Proofs}
\label{app:sec:proofs}

\subsection{Proof of Proposition \ref{prop:swf_satisfy_assumptions}}
\label{app:sec:proofs_swf_satisfy_assumptions}
\begin{proof}
\textit{Monotonicity and concavity}: Monotonicity is guaranteed by the axioms of social welfare for $\mwpm$, $\mkolm$, and $\mgini$. We now establish concavity of the three SWF families.
\begin{itemize} 
\item \textit{WPM}: We begin by noting that for $q = -\infty$, $\mwpm(\mathbf{v}) = \min_i v_i$ is concave in $\mathbf{v}$, since for two valid vectors $\mathbf{v}_1$ and $\mathbf{v}_2$,
$$\begin{aligned}
\min_i \lambda v_{1,i} + (1 - \lambda) v_{2,i} &\geq \lambda \min_i v_{1,i} + (1 - \lambda) \min_j v_{2,j}
\end{aligned}$$
For $q \in (-\infty, 1)$, we express the WPM SWF as
$$\begin{aligned}
\mwpm(\mathbf{v}; \mathbf{w}; q) = \left( \sum_i \left( w_i^{1/q} v_i \right)^q \right)^{1/q}
\end{aligned}$$
Let $\bfv_1$ and $\bfv_2$ be two valid vectors. Applying the reverse Minkowski inequality, we get
$$\begin{aligned}
\mwpm(\lambda \bfv_1 + (1 - \lambda) \bfv_2; \bfw, q) &= \left( \sum_i \left( \lambda w_i^{1/q} v_{1,i} + (1 - \lambda) w_i^{1/q} v_{2,i} \right)^q \right)^{1/q} \\
&\geq \left( \sum_i \left( \lambda w_i^{1/q} v_{1,i} \right)^q \right)^{1/q} + \left( \sum_i \left( (1 - \lambda) w_i^{1/q} v_{w,i} \right)^q \right)^{1/q} \\
&= \lambda \mwpm(\bfv_1; \bfw, q) + (1 - \lambda) \mwpm(\bfv_2; \bfw, q)
\end{aligned}$$
\item \textit{Kolm}: Since $q = -\infty$ is the egalitarian case and $q = 0$ is the utilitarian case, they have already been shown to be concave via the WPM SWF. For $q \in (-\infty, 0)$, the concavity of $\mkolm$ follows from the log-sum-exp function being convex, and it being pre-multiplied by $1/q$ (a negative quantity).
\item \textit{Gini}: The Gini SWF can also be expressed as
$$\mgini(\mathbf{v}; \mathbf{w}) = \min_{\sigma} \sum_i w_{\sigma(i)} v_{i},$$
where $\sigma$ is a permutation of the set $\{ 1, \ldots, n \}$. Since we are taking the minimum of a set of affine functions, the resultant is a concave function.
\end{itemize} 

\textit{Lipschitz continuity}: We now establish the Lipschitz continuity of the SWF families. By the mean value theorem we know that for two valid vectors $\bfv_1$ and $\bfv_2$, there is a vector $\bfx$ such that
$$\begin{aligned}
M(\bfv_1) - M(\bfv_2) &= \nabla M(\bfx) \cdot (\bfv_1 - \bfv_2) \\
\abs{M(\bfv_1) - M(\bfv_2)} &= \abs{\nabla M(\bfx) \cdot (\bfv_1 - \bfv_2)} \stackrel{(i)}{\leq} \norm{\nabla M(\bfx)}{1} \cdot \norm{\bfv_1 - \bfv_2}{\infty},
\end{aligned}$$
where $(i)$ uses H\"older's inequality. Thus, to show Lipschitz continuity w.r.t. the $\ell_{\infty}$ norm, we provide bounds on $\norm{\nabla M(\bfx)}{1}$
\begin{itemize}
\item \textit{WPM}: Let $v_i \in [v_{\min}, v_{\max}]$ for $v_{\max} \geq v_{\min} > 0$. We then have, for $q \in (-\infty, 0) \cup (0, 1]$,
$$\begin{aligned}
    \mwpm(\bfv; \bfw, q) &= \left( \sum_i w_i v_i^q \right)^{1/q} \\
    \frac{\partial}{\partial v_i} \mwpm(\bfv; \bfw, q) &= \frac{1}{q} \cdot \left( \sum_i w_i v_i^q \right)^{1/q - 1} \cdot q w_i v_i^{q-1} \\
    &= \mwpm(\bfv; \bfw, q) \cdot \frac{w_i v_i^q}{\sum_i w_i v_i^q} \cdot \frac{1}{v_i} \\
    \implies \sum_i \abs{\frac{\partial}{\partial v_i} \mwpm(\bfv; \bfw, q)} &\leq \frac{M}{v_{\min}} \leq \frac{v_{\max}}{v_{\min}}.
\end{aligned}$$
For $q = 0$, we have
$$\begin{aligned}
    \mwpm(\bfv; \bfw, 0) &= \prod_i v_i^{w_i} \\
    \frac{\partial}{\partial v_i} &= \prod_i v_i^{w_i} \cdot \frac{w_i}{v_i} = M(\bfv; \bfw, 0) \cdot \frac{w_i}{v_i} \\
    \implies \sum_i \abs{\frac{\partial}{\partial v_i} \mwpm(\bfv; \bfw, q)} &\leq \frac{M}{v_{\min}} \leq \frac{v_{\max}}{v_{\min}}.
\end{aligned}$$
For $q = -\infty$, we observe that the sum of all subgradients is upper bounded by 1, which is clearly less than $v_{\max} / v_{\min}$. Thus, $v_{\max} / v_{\min}$ is the Lipschitz constant.

We additionally note that for $q = 1$, $\mwpm(\bfv; \bfw, 1) = \sum_i w_i v_i$ has $\norm{\nabla \mwpm(\bfv; \bfw, 1)}{1} = \sum_i w_i = 1$.
\item \textit{Kolm}: The case of $q = -\infty$ (egalitarian) and $q = 0$ (utilitarian) is already considered in the WPM case. For $q \in (-\infty, 0)$, we have
$$\begin{aligned}
    \mkolm(\bfv; \bfw, q) &= \frac{1}{q} \log \left( \sum_i \exp(q v_i) \right) \\
    \frac{\partial}{\partial v_i} \mkolm(\bfv; \bfw, q) &= \frac{1}{q} \cdot \frac{1}{\sum_i w_i \exp(q u_i)} \cdot w_i q \exp(q u_i) = \frac{w_i \exp(q u_i)}{\sum_i w_i \exp(q u_i)} \\
    \implies \sum_i \abs{\frac{\partial}{\partial v_i} \mkolm(\bfv; \bfw, q} &= 1
\end{aligned}$$
\item \textit{Gini}: We consider the alternate expression for the Gini SWF:
$$\mgini(\mathbf{v}; \mathbf{w}) = \min_{\sigma} \sum_i w_{\sigma(i)} v_{i}.$$
We note that this is the minimum over affine functions $f_{\sigma}(\bfv) = \sum_i w_{\sigma_i} v_i$, each of which has $\norm{\nabla f_{\sigma}}{1} = \sum_i w_i = L$. Thus, the subgradient of $\mgini$ is such that $\norm{\partial \mgini(\bfv; \bfw)}{1} \leq L$.
\end{itemize}
\end{proof}

\subsection{Proof of Theorem \ref{thm:cs_lifting}}
\label{app:sec:proofs_cs_lifting}
\begin{proof}
For all $i \in [n]$, let $\set{[\muLower_{t,i}, \muUpper_{t,i}]}_{t \geq 1}$ be a $(1-\delta/n)$ confidence sequence for $i$. That is,
$$\begin{aligned}
\bbP\paren{\exists t \in \bbN: \mu_i \not\in [\muLower_{t,i}, \muUpper_{t,i}]} \leq \frac{\delta}{n}
\end{aligned}$$
By the union bound, we thus have that for the sequence $\set{[\bfmuLower_t, \bfmuUpper_t]}_{t \geq 1}$
$$\begin{aligned}
\bbP\paren{\exists t \in \bbN: \bfmu \not\in [\bfmuLower_{t}, \bfmuUpper_{t}]} \leq \delta,
\end{aligned}$$
where $\bfx \in [\bfxlower_t, \bfxupper_t]$ means that $x_i \in [\xlower_{t,i}, \xupper_{t,i}]$ for all $i \in [n]$.

Thus, with probability $(1-\delta)$ we have
$$\begin{aligned}
    M(\bfmu \odot \bfp^*) &\stackrel{(i)}{\geq} M(\bfmu \odot \bfpLower_t) \stackrel{(ii)}{\geq} M(\bfmuLower_t \odot \bfpLower_t), \quad \text{and} \\
    M(\bfmu \odot \bfp^*) &\stackrel{(iii)}{\leq} M(\bfmuUpper_t \odot \bfp^*) \stackrel{(iv)}{\leq} M(\bfmuUpper_t \odot \bfpUpper_t).
\end{aligned}$$
$(i)$ results from $\bfp^*$ being optimal for $\bfmu$, and $(ii)$ results from the CS bound and the monotonicity of $M(\cdot)$. $(iii)$ comes from the CS bound and the monotonicity of $M(\cdot)$, and $(iv)$ comes from $\bfpUpper_t$ being optimal for $\bfmuUpper_t$.

This means that 
$$\begin{aligned}
    \bbP(\exists t \in \bbN: M(\bfmu \odot \bfp^*) \not\in [M(\bfmuLower_t \odot \bfpLower_t), M(\bfmuUpper_t \odot \bfpUpper_t)]) \leq \delta
\end{aligned}$$
\end{proof}

\subsection{Proof of Theorem \ref{thm:regret_upper_bound}}
\label{app:sec:proofs_regret_upper_bound}
\begin{proof}
Let $\set{\bfmuUpper_t}_{t \geq 1}$ be a $(1-\delta/2)$ confidence sequence for $\bfmu$. From Theorem \ref{thm:cs_lifting}, we know that, with probability $(1-\delta/2)$
$$\begin{aligned}
    M(\bfmu \odot \bfp^*) &\stackrel{(iii)}{\leq} M(\bfmuUpper_t \odot \bfp^*) \stackrel{(iv)}{\leq} M(\bfmuUpper_t \odot \bfpUpper_t).
\end{aligned}$$
Thus, we have, with probability $(1-\delta/2)$,
\begin{align*}
R(T) &= \sum_{t=1}^{T} M(\bfmu \odot \bfp^*) - M(\bfmu \odot \bfpUpper_t) \\
&\leq \sum_{t=1}^{T} M(\bfmuUpper_t \odot \bfp^*) - M(\bfmu \odot \bfpUpper_t) \\
&\leq \sum_{t=1}^{T} M(\bfmu_t^{\uparrow} \odot \bfp_t^{\uparrow}) - M(\bfmu \odot \bfp_t^{\uparrow})
\end{align*}
If we now assume that $M$ is $L$-Lipschitz w.r.t. $\norm{\cdot}{\infty}$ (the maximum value), we get
\begin{align*}
R(T) &\leq \sum_{t=1}^{T} M(\bfmu_t^{\uparrow} \odot \bfp_t^{\uparrow}) - M(\bfmu \odot \bfp_t^{\uparrow}) \\
&\leq \sum_{t=1}^{T} L \norm{\bfmu_t^{\uparrow} \odot \bfpUpper_t - \bfmu \odot \bfpUpper_t}{\infty} \\
\end{align*}
Thus, we have bounded the regret in terms of the confidence sequence for $\bfmu$. By H\"older's inequality, we thus have
$$
\begin{aligned}
\sum_{t=1}^{T} L \Vert \bfmuUpper_t \odot \bfpUpper_t - \bfmu \odot \bfpUpper_t \Vert_{\infty} &\leq \sum_{t=1}^T L \Vert \bfmuUpper_t \odot \bfpUpper_t - \bfmu \odot \bfpUpper_t \Vert_1 \\
&= L \sum_{t=1}^T \sum_{i=1}^{n} \vert \mu_{t,i}^{\uparrow} - \mu_i \vert \pUpper_{t,i} \\
&\lesssim L \sum_{t=1}^T \sum_{i=1}^{n} p_{t,i} \sqrt{\frac{\log \log (N_{t,i}  +1) + \log (n / \delta)}{N_{t,i} + 1}} \\
&\lesssim L \sqrt{\log \log T + \log (n / \delta)} \underbrace{\sum_{i=1}^{n} \sum_{t=1}^{T} \frac{p_{t,i}}{\sqrt{N_{t,i} + 1}}}_{S} \\
\end{aligned}
$$
We now bound the term $S$. We observe that $N_{t,i}$ is the sum of Bernoulli random variables $N_{t,i} = \sum_{s=1}^{t} X_{s,i}$, where $X_{s,i} \sim \text{Ber}(p_{s,i})$. We then have the following anytime-valid guarantee (from the Master theorem in \cite{howard2020timeuniform}):
$$\begin{aligned}
\mathbb{P}\left( \exists t \in \mathbb{N}: \sum_{s=1}^t (p_{s,i} - X_{s,i}) \geq a + b \sum_{s=1}^{t} p_{s,i} \right) \leq \exp \left( -\frac{2ab}{1 + b} \right)
\end{aligned}$$
We set $b = 0.5$, and $a = (3/2) \cdot \log(2n/\delta)$, so that the upper bound above is $\delta/2n$. We then have that with probability $(1 - \delta/2n)$, for all $t \in \bbN$,
$$\begin{aligned}
    \sum_{s=1}^{t}(p_{s,i} - X_{s,i}) &\leq a + b \sum_{s=1}^t p_{s,i} \\
    \implies N_{t,i} = \sum_{s=1}^{t} X_{s,i} &\geq 0.5 \sum_{s=1}^{t} p_{s,i} - a
\end{aligned}$$
We can now consider two cases:
\begin{enumerate}
\item $\sum_{s=1}^{t} p_{s,i} \leq 2a$: Here, we can say
	$$\begin{aligned}
	\sum_{s=1}^{t} \frac{p_{s,i}}{\sqrt{N_{s,i} + 1}} \leq \sum_{s=1}^{t} p_{s,i} \leq 2a,
	\end{aligned}$$
	which is a constant. Thus in this case, the regret is upper-bounded by a constant.
\item $\sum_{s=1}^{t} p_{s,i} >2a$: Let this be true for $t \geq t_0$. We then have, with probability $(1 - 2 \delta)$,
	$$\begin{aligned}
	\sum_{s=t_0}^{t} \frac{p_{s,i}}{\sqrt{N_{s,i} + 1}} &\leq \sum_{s=t_0}^{t} \frac{1}{\sqrt{0.5 (\sum_{k=1}^{s} p_{k,i}) - a}} \\
	&\leq 2\sqrt{2} \cdot \sqrt{\left( \sum_{s=1}^{t} p_{s,i} \right) - 2a}
	\end{aligned}$$
\end{enumerate}   
We thus have
$$\begin{aligned}
\sum_{t=1}^{T} \frac{p_{s,i}}{\sqrt{N_{s,i} + 1}} &\lesssim a + \sqrt{ \left( \sum_{t=1}^{T} p_{t,i} \right) - a} \\
\implies \sum_{i=1}^{n} \sum_{t=1}^{T} \frac{p_{t,i}}{\sqrt{N_{t,i} + 1}} &\lesssim na + \sum_{i=1}^{n} \sqrt{\left( \sum_{t=1}^{T} p_{t,i} \right) - a} \\
&\leq na + \sqrt{nkT - na}
\end{aligned}$$
Thus, we have
$$\begin{aligned}
    R(T) \lesssim L \sqrt{\log \log T + \log\paren{\frac{2n}{\delta}}} \sqbrac{n \log\paren{\frac{2n}{\delta}} + \sqrt{nkT - n \log \paren{\frac{2n}{\delta}}}}
\end{aligned}$$
We thus get a rate of the form $\mathcal{O}(\sqrt{\log \log T + \log n} (n \log n + \sqrt{nkT}))$.
\end{proof}

\section{Details of Theorem \ref{thm:oracle_optimal}}
\label{app:sec:algorithms}

\subsection{WPM SWF}
\label{app:sec:algorithms_wpm}

\subsubsection{Deriving proposed solution} 
Our objective can be stated as
\begin{align*}
  \text{maximize } & M_{\text{WPM}}(\bfu \odot \bfp; \bfw, q) = \paren{\sum_i w_i (u_i p_i)^q}^{1/q} \\
\text{s.t. } & \sum_{i} p_i = k \\
& p_i - 1 \leq 0 \\
& -p_i \leq 0
\end{align*}

First, let us consider the case where the $p_i$'s are unrestricted. After differentiation w.r.t. $p_i$, at the optimal point, we would want
$$\begin{aligned}
\frac{\partial}{\partial p_i} \mwpm &= \lambda' \\
\lambda' &= \frac{1}{q} \cdot \left( \sum_i w_i (u_i p_i)^q \right)^{(1/q - 1)} \cdot w_i u_i^q \cdot q p_i^{q-1} \\
&= \mwpm(\mathbf{u} \odot \mathbf{p}; \mathbf{w}, q) \cdot \frac{w_i u_i^q}{\sum_i w_i (u_i p_i)^q} p_i^{q-1} \\
p_i &= \lambda (w_i u_i^q)^{1/(1-q)},
\end{aligned}$$
where
$$\lambda = \left( \frac{\lambda' \cdot \sum_i w_i (u_i p_i^q)}{\mwpm(\mathbf{u} \odot \mathbf{p}; \mathbf{w}, q} \right)^{1/(q-1)}$$

Our proposed solution is thus
$$p_i = \left[ \lambda (w_i u_i^q)^{1/(1-q)} \right]_{[0, 1]},$$
which is $p_i$ restricted to be between 0 and 1, with $\lambda$ chosen such that $\sum_i p_i = k$.

\subsubsection{Proving optimality via KKT conditions} 

The Lagrangian for the objective is
\begin{align*}
\calL(\bfp, \boldsymbol{\alpha}, \boldsymbol{\beta}, \gamma) &= -\mwpm(\bfu \odot \bfp; \bfw, q) + \sum_i \alpha_i (p_i - 1) + \sum_i \beta_i (-p_i) + \gamma (\sum_i p_i - k)
\end{align*}
Recall that KKT conditions require the following:
\begin{enumerate}
\item \textit{Stationarity}: For all $i$,
\begin{align*}
\frac{\partial}{\partial p_i} \calL = -\mwpm(\bfu \odot \bfp; \bfw, q) \cdot \frac{w_i (u_i p_i)^q}{\sum_i w_i (u_i p_i)^q} \cdot \frac{1}{p_i} + \alpha_i - \beta_i + \gamma = 0
\end{align*}
\item \textit{Primal feasibility}: $\sum_i p_i = k$, and $p_i \in [0, 1]$ for all $i$.
\item \textit{Dual feasibility}: $\alpha_i \geq 0$, $\beta_i \geq 0$ for all $i$.
\item \textit{Complementary slackness}: $\alpha_i(1 - p_i) = 0$ and $\beta_i p_i = 0$ for all $i$.\\
\end{enumerate}

We choose our proposed solution $\bfp^*$ in Equation \ref{eq:wpm_opt_prob} such that $0 < p_i \leq 1$, and $\sum_i p_i = k$. This immediately tells us that this solution is primal feasible; moreover, due to complementary slackness, we know that $\beta_i = 0$ for all $i$.\

We now consider two cases for $\alpha_i$:
\begin{enumerate}[label = (\roman*)]
\item If $p_i < 1$, complementary slackness requires $\alpha_i = 0$. In this case, we have
\begin{align*}
p_i^* &= \lambda (w_i u_i^q)^{1/(1-q)} \\
u_i p_i^* &= \lambda (w_i u_i)^{1/(1-q)}
\end{align*}
Stationarity requires
\begin{align*}
\frac{\partial}{\partial p_i} \calL = -\mwpm(\bfu \odot \bfp; \bfw, q) \cdot \frac{w_i (u_i p_i)^q}{\sum_i w_i (u_i p_i)^q} \cdot \frac{1}{p_i} + \alpha_i - \beta_i + \gamma &= 0 \\
\mwpm(\bfu \odot \bfp; \bfw, q) \cdot \frac{w_i (u_i p_i)^q}{\sum_i w_i (u_i p_i)^q} \cdot \frac{1}{p_i} &= \gamma \\
\frac{\mwpm(\bfu \odot \bfp; \bfw, q)}{\sum_i w_i (u_i p_i)^q} u_i w_i (u_i p_i)^{q-1} &= \gamma \\
\frac{\mwpm(\bfu \odot \bfp; \bfw, q)}{\sum_i w_i (u_i p_i)^q} u_i w_i (\lambda (w_i u_i)^{1/(1-q)})^{q-1} &= \gamma \\
\frac{\mwpm(\bfu \odot \bfp; \bfw, q)}{\sum_i w_i (u_i p_i)^q} \frac{u_i w_i}{u_i w_i} \lambda^{q-1} &= \gamma \\
\frac{\mwpm(\bfu \odot \bfp; \bfw, q)}{\sum_i w_i (u_i p_i)^q} \lambda^{q-1} &= \gamma
\end{align*}
Since the LHS does not depend on $i$, we can set $\gamma$ to be the LHS to satisfy stationarity.
\item If $p_i = 1$, we have $1 = p_i < \lambda(w_i u_i^q)^{1/(1-q)}$. Thus,
\begin{align*}
1 &\leq \lambda (w_i u_i^q)^{1/(1-q)} \\
1 &\stackrel{(i)}{\leq} \lambda^{1-q} w_i u_i^q \\
\lambda^{q-1} &\stackrel{(i)}{\leq} w_i u_i^q \\
\lambda^{q-1} &\leq \frac{w_i (u_i p_i)^q}{p_i} \\
\frac{\mwpm(\bfu \odot \bfp; \bfw, q)}{\sum_i w_i (u_i p_i)^q} \lambda^{q-1} &\stackrel{(i)}{\leq} \mwpm(\bfu \odot \bfp; \bfw, q) \cdot \frac{w_i (u_i p_i)^q}{\sum_i w_i (u_i p_i)^q} \cdot \frac{1}{p_i} \\
\gamma &\leq \mwpm(\bfu \odot \bfp; \bfw, q) \cdot \frac{w_i (u_i p_i)^q}{\sum_i w_i (u_i p_i)^q} \cdot \frac{1}{p_i}
\end{align*}
where  we set $p_i = 1$ in $(i)$, and set $\gamma$ from the first case in $(ii)$. Stationarity requires
\begin{align*}
\frac{\partial}{\partial p_i} \calL = -\mwpm(\bfu \odot \bfp; \bfw, q) \cdot \frac{w_i (u_i p_i)^q}{\sum_i w_i (u_i p_i)^q} \cdot \frac{1}{p_i} + \alpha_i + \gamma &= 0
\end{align*}
This means we have $\alpha_i \geq 0$, which establishes dual feasibility.

\end{enumerate}

Thus, all necessary conditions hold for $\bfp^*$, indicating that it is a local maximum. Since we are maximizing a concave function on a convex polytope, this would also be the global maximum.

\paragraph{Limiting cases.} We separately comment on three limiting values of $q$ in this setting:
\begin{enumerate}
\item $q=1$ (weighted utilitarian): Here, the limiting solution results in the allocation probability $p_i=1$ for the $k$ individuals with the highest $w_i \mu_i$, resulting in a probability of $p_i=0$ for the other individuals.
\item $q=0$ (weighted Nash): Here, the limiting solution is $p_i = \sqbrac{\lambda w_i}_{[0,1]}$, where $\lambda$ is chosen such that $\sum_i p_i=1$. Crucially, this solution does not depend on the utility vector $\bfmu$; thus, the optimal solution can be derived without any observations.
\item $q=-\infty$ (egalitarian): Here, the limiting solution is $p_i=[\lambda/\mu_i]_{[0,1]}$, where $\lambda$ is chosen such that $\sum_i p_i=1$.
\end{enumerate}
These limiting cases arise as pointwise limits of the objective. Since the feasible set is compact and the objective is concave for all finite $q$, standard continuity arguments imply that the closed-form solutions converge to optimal solutions of the limiting problems.

\subsubsection{Algorithm} 
We describe the algorithm for the WPM allocation oracle in Algorithm \ref{alg:wpm_allocation}. Intuitively, the algorithm proceeds by calculating filling rates for allocation probabilities over the individuals. When the fastest-filling individual $i$ reaches $p_i = 1$, that individual is excluded from further allocation. This proceeds until $\sum_i p_i = k$. The sorting of the water-filling rates (Line 15) dominates the time complexity, resulting in an $\calO(n \log n)$ total complexity.

\begin{algorithm}[ht]
\caption{Weighted Power Mean (WPM) Allocation Solver}
\label{alg:wpm_allocation}
\begin{algorithmic}[1]
\STATE \textbf{Input:} Utilities $u \in \mathbb{R}^n$, weights $w$, power parameter $q \le 1$, resources $k$
\STATE \textbf{Output:} Optimal allocation policy $p \in [0, 1]^n$

\STATE \textbf{// Step 1: Calculate individual allocation rates}
\IF{$q = -\infty$ (Egalitarian)} 
    \STATE $r_i \leftarrow 1/u_i$ for all $i$
\ELSIF{$q = 0$ (Nash)} 
    \STATE $r_i \leftarrow w_i$ for all $i$
\ELSIF{$q = 1$ (Utilitarian)} 
    \STATE $p \leftarrow$ set $1.0$ for $k$ indices with largest $(w_i \cdot u_i)$, else $0.0$; \textbf{return} $p$
\ELSE 
    \STATE $\text{log\_rates}_i \leftarrow \frac{\ln(w_i) + q \cdot \ln(u_i)}{1 - q}$ for all $i$
    \STATE $r \leftarrow \text{Softmax}(\text{log\_rates})$ \COMMENT{Ensures $\sum r_i = 1$}
\ENDIF

\STATE \textbf{// Step 2: Water-filling across sorted containers}
\STATE Sort $r$ in descending order: $r_{(1)} \ge r_{(2)} \ge \dots \ge r_{(n)}$ and track indices $\pi$
\STATE Initialize $p \leftarrow \mathbf{0}$, $rem \leftarrow k$
\STATE Precompute suffix sums $R_i = \sum_{j=i}^n r_{(j)}$

\FOR{$i = 1$ to $n$}
    \STATE $t \leftarrow 1/r_{(i)}$ \COMMENT{Time required to fill current container to 1.0}
    \IF{$t \cdot R_i > rem$}
        \STATE $t_{final} \leftarrow rem / R_i$ \COMMENT{Remaining mass doesn't fill any more containers fully}
        \STATE $p_{\pi(j)} \leftarrow t_{final} \cdot r_{(j)}$ for $j \in \{i, \dots, n\}$
        \STATE $rem \leftarrow 0$; \textbf{break}
    \ELSE
        \STATE $p_{\pi(i)} \leftarrow 1.0$
        \STATE $rem \leftarrow rem - 1$
    \ENDIF
\ENDFOR
\RETURN $p$
\end{algorithmic}
\end{algorithm}

\subsection{Kolm SWF}
\label{app:sec:algorithms_kolm}
\subsubsection{Deriving proposed solution} 
We want to optimize the Kolm social welfare under constraints. The problem is given by, for $q < 0$,
$$\begin{aligned} 
\text{maximize} \qquad &\mkolm(\mathbf{u}; \mathbf{w}, q) = \frac{1}{q} \log\left( \sum_i w_i \exp(q u_i p_i) \right) \\
\text{s.t.} \qquad &p_i \in [0, 1] \qquad \forall i \\
&\sum_i p_i = k,
\end{aligned}$$
where $k \in [n]$.

First, let us consider the case where the $p_i$'s are unrestricted. After differentiation w.r.t. $p_i$, at the optimal point, we would want
$$\begin{aligned}
\frac{\partial}{\partial p_i} \mkolm &= \lambda \\
\lambda &= \frac{1}{q} \cdot \frac{1}{\sum_i w_i \exp(q u_i p_i)} \cdot w_i q u_i \exp(q u_i p_i) \\
p_i &= \log \left( \frac{\lambda'}{w_i u_i} \right) \cdot \frac{1}{q u_i},
\end{aligned}$$
where $\lambda' = \lambda \cdot \left( \sum_i w_i \exp(q u_i p_i) \right)$.

The general solution for Kolm SWF is thus
\begin{align*}
p_i &= \left[ \log \left( \frac{\lambda'}{w_i u_i} \right) \cdot \frac{1}{q u_i} \right]_{[0,1]} \\
&= \left[ \frac{\eta + \log(w_i u_i)}{\vert q \vert u_i} \right]_{[0,1]},
\end{align*}

where $\eta = - \log \lambda'$, and $p_i$ is clamped to $[0, 1]$ such that $\sum_i p_i = k$. This is a harder problem than the WPM case, since here $\eta = \log \lambda'$ is \textit{additive} rather than \textit{multiplicative}. Thus both constraints $p_i \geq 0$ and $p_i \leq 1$ might be active. We think of water-filling in terms of $\eta$.

\subsubsection{Proving optimality through KKT conditions}
We note that
$$\mkolm(\mathbf{u} \odot \mathbf{p}; \mathbf{w}, q) = \frac{1}{q} \log \left( \sum_i w_i \exp(q u_i p_i) \right),$$
and
$$\begin{aligned}
\frac{\partial}{\partial p_i} \mkolm(\mathbf{u} \odot \mathbf{p}; \mathbf{w}, q) &= \frac{1}{q} \cdot \frac{w_i \exp(q u_i p_i)}{\sum_i w_i \exp(q u_i p_i)} \cdot q u_i \\
&=  \frac{w_i \exp(q u_i p_i)}{\sum_i w_i \exp(q u_i p_i)} \cdot u_i
\end{aligned}$$
Since $q \leq 0$, we note that $\partial \mkolm / \partial p_i$ is a non-increasing function of $p_i$.

Our optimization objective is:
$$\begin{aligned}
\text{maximize} &\qquad \frac{1}{q} \log \left( \sum_i w_i \exp(q u_i p_i) \right) \\
\text{s.t.} &\qquad -p_i \leq 0 \qquad \forall i \\
&\qquad p_i - 1 \leq 0 \qquad \forall i \\
&\qquad \sum_i p_i = k
\end{aligned}$$
The Lagrangian for this expression with the optimal solution $\mathbf{p}^*$ is
$$\begin{aligned}
\mathcal{L} &= -M_{K}(\mathbf{u} \odot \mathbf{p}^*; \mathbf{w}, q) - \sum_i \alpha_i p_i^* + \sum_i \beta_i(p_i^* - 1) + \gamma (\sum_i p_i^* - k)
\end{aligned}$$
The partial derivative w.r.t. $p_i$ is
$$\begin{aligned}
\left. \frac{\partial}{\partial p_i} \mathcal{L} \right\vert_{\mathbf{p} = \mathbf{p}^*} &= -\frac{w_i \exp(q u_i p_i^*)}{\sum_i w_i \exp(q u_i p_i^*)} \cdot u_i - \alpha_i + \beta_i + \gamma
\end{aligned}$$
The KKT conditions require:
\begin{enumerate}
\item \textit{Stationarity}: $0 \in \partial \mathcal{L} / \partial p_i$, which implies that
	$$0 = -\frac{w_i \exp(q u_i p_i^*)}{\sum_i w_i \exp(q u_i p_i^*)} \cdot u_i - \alpha_i + \beta_i + \gamma$$
\item \textit{Primal feasibility}: 
	$$\begin{aligned}
	-p_i^* &\leq 0 \qquad \forall i \\
	p_i^* - 1 &\leq 0 \qquad \forall i \\
	\sum_i p_i^* = k
	\end{aligned}$$
	All primal feasibility conditions are satisfied by our construction of $\mathbf{p}^*$.
\item \textit{Dual feasibility}:
	$$\begin{aligned}
	\alpha_i &\geq 0 \qquad \forall i \\
	\beta_i &\geq 0 \qquad \forall i \\
	\gamma &\geq 0
	\end{aligned}$$
\item \textit{Complementary slackness}:
	$$\begin{aligned}
	\alpha_i p_i^* &= 0 \qquad \forall i \\
	\beta_i (p_i^* - 1)	 &= 0 \qquad \forall i
	\end{aligned}$$
\end{enumerate}

We consider three cases:

\begin{itemize}
\item $p_i^*\in (0, 1)$: This means that
	$$p_i^* =\frac{\eta^* + \log(w_i u_i)}{\vert q \vert u_i},$$
	which in turn implies
	$$\begin{aligned}
	\left. \frac{\partial}{\partial p_i} \mkolm(\mathbf{u} \odot \mathbf{p}; \mathbf{w}, q) \right\vert_{\mathbf{p} = \mathbf{p}^*} &= \frac{w_i \exp(-\eta^*) / (w_i u_i)}{\sum_i w_i \exp(q u_i p_i^*)} \cdot u_i \\
	&= \frac{\exp(-\eta^*)}{\sum_i w_i \exp(q u_i p_i^*)}
	\end{aligned}$$
	Moreover, due to complementary slackness, we have $\alpha_i = 0$ and $\beta_i = 0$. Thus, for stationarity to hold,
	$$\gamma = \frac{\exp(-\eta^*)}{\sum_i w_i \exp(q u_i p_i^*)},$$
	which we note is independent of the index $i$.
	
\item $p_i^* = 0$: This implies that
	$$p_i^* = 0 \geq \frac{\eta^* + \log(w_i u_i)}{\vert q \vert u_i}.$$
	Since $\partial \mkolm / \partial p_i$ is a decreasing function of $p_i$, we have
	$$\begin{aligned}
	-\left. \frac{\partial}{\partial p_i} \mkolm(\mathbf{u} \odot \mathbf{p}; \mathbf{w}, q) \right\vert_{\mathbf{p} = \mathbf{p}^*} &\geq -\frac{\exp(-\eta^*)}{\sum_i w_i \exp(q u_i p_i^*)} \\
	-\left. \frac{\partial}{\partial p_i} \mkolm(\mathbf{u} \odot \mathbf{p}; \mathbf{w}, q) \right\vert_{\mathbf{p} = \mathbf{p}^*} + \gamma \geq 0
	\end{aligned}.$$
	Complementary slackness implies that $\beta_i = 0$. For stationarity to hold, we should thus have
	$$0 = -\frac{\partial}{\partial p_i} \mkolm(\mathbf{u} \odot \mathbf{p}; \mathbf{w}, q) - \alpha_i + \gamma,$$
	which can be achieved by some $\alpha_i > 0$. Thus, dual feasibility is also satisfied.
	
\item $p_i^* = 1$: This implies that
	$$p_i^* = 1 \leq \frac{\eta^* + \log(w_i u_i)}{\vert q \vert u_i}.$$
	Since $\partial \mkolm / \partial p_i$ is a decreasing function of $p_i$, we have
	$$\begin{aligned}
	-\left. \frac{\partial}{\partial p_i} \mkolm(\mathbf{u} \odot \mathbf{p}; \mathbf{w}, q) \right\vert_{\mathbf{p} = \mathbf{p}^*} &\leq -\frac{\exp(-\eta^*)}{\sum_i w_i \exp(q u_i p_i^*)} \\
	-\left. \frac{\partial}{\partial p_i} \mkolm(\mathbf{u} \odot \mathbf{p}; \mathbf{w}, q) \right\vert_{\mathbf{p} = \mathbf{p}^*} + \gamma \leq 0
	\end{aligned}.$$
	Complementary slackness implies that $\alpha_i = 0$. For stationarity to hold, we should thus have
	$$0 = -\frac{\partial}{\partial p_i} \mkolm(\mathbf{u} \odot \mathbf{p}; \mathbf{w}, q) + \beta_i + \gamma,$$
	which can be achieved by some $\beta_i > 0$. Thus, dual feasibility is also satisfied.
\end{itemize}

Thus, the necessary conditions for KKT are satisfied. Since the constraints on our objective are all affine functions, the necessary conditions are also sufficient. This means that $\mathbf{p}^*$ is the optimal solution.

\subsubsection{Algorithm} 
We describe the algorithm for the Kolm allocation oracle in Algorithm \ref{alg:kolm_allocation}. Intuitively, the algorithm proceeds by calculating filling rates for allocation probabilities over the individuals. The critical difference from the WPM algorithm is that since $\lambda$ is an additive factor here, we keep track of two events: 1) when individual $i$'s probability first becomes non-zero, at which point they are included in the active set; and 2) when individual $i$'s probability becomes $p_i=1$, at which point they are excluded from the active set.

When the fastest-filling individual $i$ reaches $p_i = 1$, that individual is excluded from further allocation. This proceeds until $\sum_i p_i = k$. The sorting of the water-filling rates (Line 17) dominates the time complexity, resulting in an $\calO(n \log n)$ total complexity. However, we implement a $\calO(n^2)$ algorithm for simplicity here.

\begin{algorithm}[ht]
\caption{Kolm Social Welfare Allocation}
\label{alg:kolm_allocation}
\begin{algorithmic}[1]
\STATE \textbf{Input:} Utilities $u \in \mathbb{R}^n$, weights $w$, power parameter $q \le 0$, resources $k$
\STATE \textbf{Output:} Allocation probabilities $p \in [0, 1]^n$

\STATE \textbf{// Step 1: Compute individual allocation rates}
\IF{$q = -\infty$} 
    \STATE $r_i \leftarrow 1/u_i$ for all $i$
\ELSIF{$q = 0$} 
    \STATE $p \leftarrow$ set $1.0$ for $k$ indices with largest $(w_i \cdot u_i)$, else $0.0$; \textbf{return} $p$
\ELSE 
    \STATE $r_i \leftarrow 1/(-q \cdot u_i)$ for all $i$
\ENDIF

\STATE \textbf{// Step 2: Identify critical event times (hitting 0 or 1)}
\IF{$q = -\infty$}
    \STATE $T^{start}_i \leftarrow 0$, $T^{end}_i \leftarrow u_i$ for all $i$
\ELSE
    \STATE $T^{start}_i \leftarrow -\ln(w_i u_i)$, $T^{end}_i \leftarrow -q u_i - \ln(w_i u_i)$ for all $i$
\ENDIF
\STATE Sort all $2n$ times $\{T^{start}_i, T^{end}_i\}$ into a sequence $\tau_1, \tau_2, \dots, \tau_{2n}$

\STATE \textbf{// Step 3: Water-filling over time intervals}
\STATE Initialize $p \leftarrow \mathbf{0}$, $\text{ActiveArms} \leftarrow \emptyset$, $t_{prev} \leftarrow \tau_1$
\FOR{$j = 2$ to $2n$}
    \STATE $\Delta t \leftarrow \tau_j - t_{prev}$
    \STATE $m_{interval} \leftarrow \sum_{i \in \text{ActiveArms}} r_i \cdot \Delta t$
    \IF{$\text{sum}(p) + m_{interval} > k$}
        \STATE $\Delta t_{final} \leftarrow (k - \text{sum}(p)) / \sum_{i \in \text{ActiveArms}} r_i$
        \STATE $p_i \leftarrow p_i + r_i \cdot \Delta t_{final}$ for all $i \in \text{ActiveArms}$; \textbf{break}
    \ELSE
        \STATE $p_i \leftarrow p_i + r_i \cdot \Delta t$ for all $i \in \text{ActiveArms}$
        \STATE Update $\text{ActiveArms}$ based on whether $\tau_j$ is a $T^{start}$ (add) or $T^{end}$ (remove)
        \STATE $t_{prev} \leftarrow \tau_j$
    \ENDIF
\ENDFOR

\RETURN $p$
\end{algorithmic}
\end{algorithm}

\subsection{Gini SWF}
\label{app:sec:algorithms_gini}
For the Gini SWF, the optimization objective is:
$$\begin{aligned}
\text{maximize} &\qquad \mgini(\bfmuUpper \odot \mathbf{p}; \mathbf{w}) = \sum_i w_i (\bfmuUpper \odot \mathbf{p})_{(i)} \\
\text{s.t.} &\qquad p_i \in [0, 1] \qquad \forall i \\
&\qquad \sum_i p_i = k
\end{aligned}$$
Here, the vector of weights $\mathbf{w}$ is such that $w_1 \geq w_2 \geq \ldots \geq w_n \geq 0$. Moreover, $(\bfmuUpper \odot \mathbf{p})_{(i)}$ specifies the $i$-th order statistic for the vector $\mathbf{u} \odot \mathbf{p}$.

\subsubsection{Optimal Order of Individuals}
\begin{proposition}
\label{app:prop:gini_optimal_order}
There exists an optimal solution $\bfp^*$ such that, if $\pi$ is the permutation such that 
\begin{align}
    \label{eq:gini_mu_sorted}
    \muUpper_{\pi(1)} \leq \muUpper_{\pi(2)} \leq \ldots \leq \muUpper_{\pi(n)},
\end{align}
then
\begin{align}
    \label{eq:gini_mu_p_sorted}
    \muUpper_{\pi(1)} p_{\pi(1)}^* \leq \muUpper_{\pi(2)} p_{\pi(2)}^* \leq \ldots \muUpper_{\pi(n)} p_{\pi(n)}^*.
\end{align}
That is, the optimal order for $\muUpper_{\pi(i)} p_{\pi(i)}$ is along a non-decreasing order of $\muUpper_{\pi(i)}$.
\end{proposition}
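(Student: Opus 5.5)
Take any optimal $\bfp^*$ (one exists: $\mgini$ is continuous and $\calP_k$ is compact) and repair it by pairwise exchanges that never lower the objective and never leave the feasible set. Write $\mu_i$ for $\muUpper_i$ and $v_i = \mu_i p_i$. Suppose indices $i,j$ form an inversion: $\mu_i \leq \mu_j$ but $v_i > v_j$. We replace $(p_i,p_j)$ by a new pair $(p_i',p_j')$ with these properties:
\begin{itemize}
\item $p_i'+p_j' = p_i+p_j$;
\item $p_i',p_j' \in [0,1]$;
\item the multiset $\{v_i', v_j'\}$ is ``at least as good'' as $\{v_i,v_j\}$;
\item the new values satisfy $v_i' \leq v_j'$.
\end{itemize}
Since $\mgini$ is a symmetric function of $\bfv$ (it depends only on the order statistics) and is monotone (Proposition \ref{prop:swf_satisfy_assumptions}), it suffices that the new unordered pair dominates the old one coordinatewise after sorting. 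That is, we need $\min(v_i',v_j') \geq v_j$ and $\max(v_i',v_j') \geq v_i$, with all other coordinates unchanged.

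\textbf{The exchange.} Swap the utility values by moving mass. Set $p_j' = v_i/\mu_j$ and $p_i' = v_j/\mu_i$, so that $v_j' = v_i$ and $v_i' = v_j$. Then $p_j' = \mu_i p_i/\mu_j \leq p_i \leq 1$, and $p_i' = \mu_j p_j/\mu_i \geq 0$. The total mass changes by
\[
(p_i'+p_j') - (p_i+p_j) = \mu_j p_j\!\left(\tfrac{1}{\mu_i}-\tfrac{1}{\mu_j}\right) - \mu_i p_i\!\left(\tfrac{1}{\mu_i}-\tfrac{1}{\mu_j}\right) = (v_j - v_i)\!\left(\tfrac{1}{\mu_i}-\tfrac{1}{\mu_j}\right) \leq 0 .
\]
So the swap uses no more mass. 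The leftover mass $s \geq 0$ must be returned to keep $\sum_i p_i = k$. Add it to $p_j'$ first, up to $1$, and then any remainder to $p_i'$. This is possible because the original $p_i+p_j \leq 2$, so there is room.

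Adding mass only increases $v_i'$ and $v_j'$, so by monotonicity the objective is no smaller than after the pure swap, which equals the original objective by symmetry. Adding to $p_j'$ keeps $v_j' \geq v_i'$. If $p_j'$ saturates at $1$ and mass spills into $p_i'$, we need $\mu_i p_i' \leq \mu_j$. This holds because $p_i' \leq 1$ and $\mu_i \leq \mu_j$.

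\textbf{Termination.} The main obstacle is guaranteeing that repeated exchanges terminate rather than creating new inversions elsewhere. I would control this by applying the exchange only to adjacent inversions in the $\pi$-order and arguing via a potential function. One candidate is the number of inverted pairs; I would check that a single exchange strictly decreases it, using the fact that the swap preserves the multiset of values and the added mass only raises values at $j$ (then $i$).

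If that accounting is messy, the fallback is an extremal argument. Optimal solutions form a compact set, so among them pick one minimizing $\sum_r r\, v_{\pi(r)}$, or equivalently maximizing a strictly Schur-type order-alignment functional. A single exchange on an inverted pair strictly improves this functional while staying optimal, which is a contradiction. Either route yields an optimal $\bfp^*$ satisfying \eqref{eq:gini_mu_p_sorted}. Ties $\mu_i=\mu_j$ are handled by breaking them according to $\pi$ and using the same exchange, which is then an exact swap with $s=0$.
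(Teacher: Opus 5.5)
Your exchange is sound, and it takes a different route from the paper. The paper moves an infinitesimal $\epsilon$ of probability between the boundary elements of two out-of-order blocks and signs the first-order change using $w_{\sigma(r)}\ge w_{\sigma(l)}$. Your finite value swap uses only that $\mgini$ is permutation-invariant and monotone, so it does not even need the weights to be ordered. You did not check $p_i'\le 1$; it holds because $v_j<v_i\le\muUpper_i$.

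The gap is in the termination step, which you rightly call the crux: neither route works as written. The inversion count can increase, even for an adjacent inversion at an optimal point. The returned mass $s$ raises $v_j$ above $v_i$, up to $v_i+\muUpper_j s$, and this creates an inversion with every later position whose value lies in that range. For example, take $\bfw=(1,0,0,0,0)$ (so $\mgini=\min_i v_i$), $\bfmuUpper=(0.1,1,10,10,10)$, $k=3$ and $\bfp=(1,1,0.01,0.5,0.49)$. This point is optimal, since every feasible point has $\min_i v_i\le 0.1\,p_1\le 0.1$. Its values $(0.1,1,0.1,5,4.9)$ have two inversions. Exchanging positions $2,3$ gives the optimal point $(1,0.1,0.91,0.5,0.49)$, whose values $(0.1,0.1,9.1,5,4.9)$ have three.

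Your extremal fallback has the wrong sign. Write $i=\pi(a)$ and $j=\pi(b)$ with $a<b$. The pure swap changes $\Phi(\bfp)=\sum_r r\,v_{\pi(r)}$ by $(b-a)(v_i-v_j)>0$. Returning the mass only increases $\Phi$ further, since its coefficients are positive. So a \emph{minimizer} of $\Phi$ yields no contradiction. For the same reason, a functional that the swap decreases but the added mass increases would not work either. You need coefficients that are nonnegative and strictly increasing in $r$.

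The fix is to \emph{maximize} $\Phi$ over the set of optimal solutions. That set is a closed subset of the compact $\calP_k$, because $\mgini$ is continuous, so a maximizer exists. If it had any inversion, adjacent or not, your exchange would produce another optimal point with strictly larger $\Phi$, a contradiction. With that correction your proof is complete. Its termination is then more rigorous than the paper's, which only asserts that repeating the $\epsilon$-transfer eventually removes all out-of-order blocks.
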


\textit{Interpretation}: We may index individuals by non-decreasing $\mu_i$ and restrict attention to allocations with $\mu_i p_i$ non-decreasing; then optimizing over $p$ suffices.

\begin{proof}
For ease of notation, we relabel the indices such that $\pi(i) = i$. Thus, $\muUpper_1 \leq \muUpper_2 \leq \ldots \muUpper_n$.

Let $\bfp$ be the optimal solution. Let $u_i = \mu_i p_i$, and let $\sigma$ be a permutation such that $\sigma(i)$ is the $i$-th smallest element of $\set{u_k}$. We choose $\sigma$ such that ties are broken to minimize inversions, i.e., $i < j$ such that $u_i > u_j$.

When the $u_i$'s are sorted according to $\sigma$, we get `blocks` of equal values of $u_i$. Since inversions are minimized, each block has its $\mu_i$'s in non-decreasing order.

If these blocks are such that $\mu_i$'s are in non-decreasing order across the blocks, we are done. Otherwise, let there be two blocks $B_1$ and $B_2$ (with $B_1$ containing smaller values of $u_i$ than $B_2$) such that $\mu_i$'s are not in non-decreasing order across these blocks.

Let $r$ be the rightmost element of $B_1$ and $l$ be the leftmost element of $B_2$. We thus have $u_r < u_l$ (since they belong to distinct blocks) and $\mu_r > \mu_l$ (since $\mu_i$'s are not in non-decreasing order).

Let $r'$ be the element to the right of $r$ when sorted according to $\sigma$, and let $l'$ be the element to the left of $l$. Since $l$ and $r$ form the extremities of their blocks, we must have $u_r < u_{r'}$, and $u_{l'} < u_{l}$.

Thus, there is some small $\epsilon > 0$ such that $p_r' = p_r + \epsilon \in [0, 1]$, $p_l' = p_l - \epsilon \in [0, 1]$, $u_r' := \mu_r p_r' \leq u_{r'}$, and $u_{l'} \leq u_l' := \mu_l p_l'$. Thus, swapping this small $\epsilon$ probability from $l$ to $r$ does not violate the order constraints. However, this swap results in a change in the Gini SWF of
$$\Delta M_{\text{Gini}} = \epsilon(w_{\sigma(r)} \mu_r - w_{\sigma(l)} \mu_l) \geq 0,$$
since $w_{\sigma(r)} \geq w_{\sigma(l)}$ and $\mu_r > \mu_l$. If $w_{\sigma(r)} > 0$, then this inequality is strict, implying that $\bfp$ is not optimal. If $w_{\sigma(r)} = 0$, we have $\Delta M_{\text{Gini}} = 0$, which would mean that the swapping of probabilities does not decrease the objective. Repeating this swap (which never decreases the objective) until no such pair of blocks exists yields an optimal solution satisfying $\mu_1 p_1 \leq \ldots \leq \mu_n p_n$.
\end{proof}

\subsubsection{Theoretical Properties of Oracle Algorithm}
We express the Gini SWF objective as a parametric LP problem. Let $\tau \in [0, k]$ be the parameter controlling the sum of the probabilities. WLOG, we assume that $\muUpper_i$'s are sorted in non-decreasing order. Our objective is
$$\begin{aligned}
\text{maximize } &\sum_i w_i \muUpper_i p_i \\
\text{such that } &\sum_i p_i = \tau \\
&p_i \in [0, 1] \quad \forall\ i \\
&\muUpper_{i-1} p_{i-1} \leq \muUpper_i p_i \quad \forall i \in \set{2, \ldots, n}
\end{aligned}$$

Let $\bfp_{\tau}$ be the solution encountered by the algorithm for parameter $\tau$. Our algorithm starts at $\tau = 0$, with $\bfp_0 = \boldsymbol{0}_n$ being the optimal solution. The algorithm proceeds by choosing a subset of indices $S(\tau)$ for each $\tau$ and increasing the probabilities for these indices as $\tau$ increases, ensuring that no constraints are violated. The set $S(\tau)$ and the rates $r_p(i)$ for the increase in probabilities are chosen to maximize the rate of increase of the Gini objective.

\begin{proposition}
The algorithm has the following properties:
\begin{enumerate}
\item The set of chosen indices and their rates of increase are changed when some element $i$ in $S(\tau)$ has $p_{\tau, i} = 1$.
\item The chosen indices for each $\tau$ correspond to a \textit{block} of consecutive indices $B(\tau)$.
\item The choice of indices $B(\tau)$ have their probabilities increased at the rate
    $$\begin{aligned}
        r_p(\tau, i) = \frac{1 / \muUpper_i}{\sum_{j \in B(\tau)} 1 / \muUpper_j},
    \end{aligned}$$
    and this results in a rate of increase of the Gini objective of
    $$\begin{aligned}
        r_o(B(\tau)) = \frac{\sum_{i \in B(\tau)} w_i}{\sum_{i \in B(\tau)} 1/\muUpper_i}
    \end{aligned}$$
\item Among all feasible infinitesimal directions at any $\tau$, the block $B(\tau)$ chosen by the algorithm and the above rates of increase of the probabilities maximizes the instantaneous rate of increase of the objective.
\item At each $\tau$ where the set $B(\tau)$ is changed, we have a sequence of blocks, with the value of $\mu_i p_i$ being the same within a block. These blocks are separated from each other by indices $i$ with $p_i = 1$.
\end{enumerate}
\end{proposition}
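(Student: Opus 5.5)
We prove the five properties jointly by induction over the breakpoints $0 = \tau_0 < \tau_1 < \dots$ at which the active set changes, carrying property 5 as the invariant. At a breakpoint, the indices with $p_i < 1$ split into maximal runs of consecutive indices. Within a run, $\muUpper_i p_i$ takes a common value, and different runs are separated by saturated indices with $p_i = 1$. At $\tau = 0$ this holds trivially with a single block $[n]$ and common value $0$. Because Proposition \ref{app:prop:gini_optimal_order} lets us work with the sorted LP, the objective is linear, $\sum_i w_i \muUpper_i p_i$, on the order-constrained polytope. This reduces every claim to an analysis of feasible directions of a parametric LP.

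\textbf{Step 1: feasible directions (properties 2 and 3).} Fix $\tau$ between breakpoints and suppose the invariant holds. Let $d$ be a feasible direction with $\sum_i d_i = 1$ and $d_i \geq 0$; monotonicity in $\tau$ is part of what we show. Within a block where the order constraints are tight, feasibility requires $\muUpper_i d_i$ to be nondecreasing in $i$. Consider a block $B$ in which all constraints are tight. Moving it as a unit while keeping equality means $\muUpper_i d_i = c$ for all $i \in B$, so $d_i \propto 1/\muUpper_i$. Normalizing gives the rate $r_p$, and the resulting objective gain is $c \sum_{i \in B} w_i$ with $c = 1/\sum_{j \in B} 1/\muUpper_j$, which is exactly $r_o(B)$. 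For a general feasible $d$ we write $\muUpper_i d_i$ as a nondecreasing step function over the block. Summation by parts then expresses $d$ as a nonnegative combination of ``suffix'' moves, each raising a suffix $\{j, \dots, \text{end of block}\}$ uniformly in $\muUpper_i p_i$. So the gain per unit of mass is a convex combination of $r_o(\text{suffix})$ values. The same holds across blocks, since independent blocks combine convexly. Hence the best instantaneous rate is attained at a single suffix of a single block. That suffix is itself a consecutive block, and after moving it the invariant still holds, because the suffix detaches from its prefix while the prefix keeps equal values.

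\textbf{Step 2: optimality of the greedy rate (property 4) and LP optimality.} Property 4 follows from Step 1 once the algorithm chooses $B(\tau) = \arg\max r_o$ over the candidate suffixes. To turn local greediness into global optimality of $\bfp_\tau$, we argue that $\tau \mapsto \mathrm{OPT}(\tau)$ is concave and piecewise linear, since it is the value function of a parametric LP in its right-hand side. We then show that the greedy path attains it by exhibiting dual certificates at each breakpoint. An alternative is to show that the maximal rate is nonincreasing in $\tau$; this is where $w_1 \geq \dots \geq w_n$ enters, because later suffixes have smaller average weight per unit of $1/\muUpper$.

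\textbf{Step 3: breakpoints (property 1) and runtime.} Within an interval the chosen block and its rates remain optimal: nothing changes the candidate $r_o$ values except saturation, as order constraints inside the block stay tight and the block only rises relative to its left neighbour. So the only events are some $p_i$ reaching $1$. A saturated index is removed and splits its block, which restores the invariant of property 5. There are at most $k$ full saturations before $\tau = k$, and each breakpoint costs $\calO(n)$ to recompute the suffix rates, giving $\calO(kn)$.

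The main obstacle is the global optimality argument in Step 2. In particular, we must verify that a block which has detached and moved ahead is never later preferable to split differently, which amounts to showing that the greedy choice never needs to decrease any $p_i$. We would first try to construct explicit LP duals: multipliers for the order constraints equal to partial sums of $w_i - r_o \cdot 1/\muUpper_i$ within each block, which are nonnegative exactly by the suffix-maximality of $B(\tau)$.
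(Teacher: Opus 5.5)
Your Steps 1 and 3 follow the paper's proof. Both use the same induction over breakpoints carrying Property 5, the same reduction of admissible increases to suffixes of blocks raised at rates $\propto 1/\muUpper_i$, the same choice of the suffix with maximal $r_o$, and the same observation that its leftmost index saturates first. Your summation-by-parts decomposition is a cleaner version of the paper's ``convex combination of suffix directions''.

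The genuine gap is the one you flag and leave open. Step 1 covers only directions with $d\ge 0$, but Property 4 (and the exactness of the oracle) concerns all feasible directions. Nothing in the proposal shows that $\bfp_\tau$ is LP-optimal, i.e.\ that no direction with negative entries does better; for instance, lowering a saturated $p_m$ and spending that mass on the best suffix. The paper does not supply this either. It asserts that every feasible direction decomposes into suffix moves, which is true only for nonnegative directions, and then asserts without proof that steepest ascent stays on an optimal face. Your fallback of showing the maximal rate is nonincreasing would not suffice alone: concavity of the greedy value curve does not exclude a better path that first lowers some coordinates. (That monotonicity also does not come from $w_1\ge\dots\ge w_n$; the ordering of $\mathbf{w}$ is used only to reduce Gini to the sorted LP via Proposition~\ref{app:prop:gini_optimal_order}.)

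The missing tool is the mediant inequality: for disjoint consecutive index sets $I_1,I_2$, $r_o(I_1\cup I_2)$ lies between $r_o(I_1)$ and $r_o(I_2)$. If $[l,e]$ maximizes $r_o$ among suffixes of the block $[s,e]$, this gives two facts.
(a) $r_o([j,l-1])\le r_o([l,e])$ for every suffix of the left-behind prefix. Your Step 3 needs this, because detachment \emph{does} create new candidates. It is also what makes the maximal rate nonincreasing.
(b) $r_o([l,e'])\ge r_o([l,e])$ for every prefix $[l,e']$.

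Now put $v_i=\muUpper_i p_i$ and $a_i=1/\muUpper_i$, and take the multiplier of $\sum_i a_i v_i=\tau$ to be the current rate $\lambda$. Write stationarity as $w_i-\lambda a_i-\beta_i+\alpha_i+\gamma_i-\gamma_{i+1}=0$, with $\gamma_i\ge 0$ on $v_{i-1}\le v_i$, $\beta_i\ge 0$ on $v_i\le\muUpper_i$, and $\alpha_i\ge 0$ on $v_i\ge 0$. Your multipliers $\gamma_j=-\sum_{i=j}^{e}(w_i-\lambda a_i)$ inside each block are nonnegative by suffix-maximality and (a), and $\alpha_1$ absorbs the block of zeros.

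However, at each saturated index $m$ whose block $[m+1,e]$ is still attached, you also need $\beta_m=\sum_{i=m}^{e}(w_i-\lambda a_i)\ge 0$, i.e.\ $r_o([m,e])\ge\lambda$. This is a \emph{prefix} condition, which current suffix-maximality does not give. It follows from the history. When $m$ saturated, it was the left end of the then-active suffix $[m,e_0]$ with rate $\lambda_0\ge\lambda$. The block attached to $m$ is always a prefix of $[m+1,e_0]$, so (b) gives $r_o([m,e])\ge\lambda_0\ge\lambda$ (for distinct $\muUpper_i$; ties only merge runs).

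These multipliers certify that every $\bfp_\tau$ is optimal. The greedy rate is then the right derivative of the concave value function, which bounds the rate of every feasible direction, and Property 4 follows as stated.
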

\begin{proof} We prove this result through induction on the set of points where $S(\tau)$ is changed. Since the objective is linear and the constraint set is a convex polytope, this change only occurs a finite number of times.

\textit{Base case}: We begin with $\tau = 0$, where $\bfp_{\tau} = \boldsymbol{0}_n$ is the trivially optimal solution. Part (1) is trivially satisfied here, since this is the first choice of indices and rates we make. Part (5) is also trivially satisfied at $\tau = 0$, since we only have one block. With increasing $\tau$, any permissible increase will involve a suffix block of elements, since the order constraint $\mu_{i-1} p_{i-1} \leq \mu_i p_i$ is violated otherwise. Thus, Part (2) is also satisfied.

Let $B$ be some suffix block. Our choice of rate for the probabilities within this block is
$$\begin{aligned}
    r_p(i) = \frac{1 / \muUpper_i}{\sum_{j \in B} 1 / \muUpper_j}.
\end{aligned}$$
This choice results in a rate of increase of utility of
$$\begin{aligned}
    r_u(i) = \muUpper_i \cdot r_p(i) = \frac{1}{\sum_{j \in B} 1 / \muUpper_j}.
\end{aligned}$$
This rate of increase is the same for all elements within the block, and thus the order constraint is not violated. Moreover, this results in an increase in Gini SWF at the rate
$$\begin{aligned}
    r_o(B) = \sum_{i \in B} w_i r_u(i) = \frac{\sum_{i \in B} w_i}{\sum_{i \in B} 1 / \muUpper_i}.
\end{aligned}$$
Thus, part (3) of our proposition is a consequence of having to satisfy the order constraints.
 
Let $B'$ be a suffix block which is a subset of $B$ such that $r_o(B') > r_o(B)$, that is, the rate of Gini SWF increase is greater if all the utility is allocated to this sub-block. Then, $B$ is a sub-optimal choice for the block, and we should choose $B'$. Note that switching to a sub-block of $B$ does not violate the order constraint. We can repeat this argument starting from $B = [n]$ to reach a suffix block which has the greatest value of $r_o(B)$. This will be our choice of $B(\tau)$. We have thus characterized this block for Part (4), showing that it provides the maximum rate of increase while satisfying the constraints.

As we increase the probabilities of this block, the leftmost index $l(B)$ is filled fastest, since it has the smallest value of $\mu_i$, and hence the highest filling rate. Thus, this choice of block and filling rates is no longer valid when we have $p_{l(B)} = 1$, which satisfies Part (1). At this instance, we have two blocks of equal values of $\muUpper_i p_i$ (0 for the left block, $\mu_{l(B)}$ for the right) separated by $p_{l(B)} = 1$, which satisfies Part (5).

\textit{Induction case}: Let $\tau$ be some intermediate value where the set of chosen indices and their rates of increase have to be changed. We assume Part (5) holds, i.e., we have a sequence of blocks separated by indices $i$ with $p_i = 1$. 

We note that a block in this sequence is similar to the bigger block of $[n]$ for $B = 0$. Thus, if our choice of $S(\tau)$ is to be within a block, all Parts (1)-(5) would be naturally satisfied. We now show that this is indeed the case.

Let us assume the contrary, i.e., our choice of $S(\tau)$ spans across blocks. In this case, the optimal solution will have suffixes from all the blocks. Any feasible infinitesimal direction decomposes as a convex combination of suffix directions within individual blocks. Since the objective slope is linear, an optimal direction concentrates all mass on a single block with maximal slope (ties arbitrary). Thus, Parts (1)-(5) would be satisfied.
\end{proof}

Because the objective is linear and the feasible region is a convex polytope, following a direction that maximizes instantaneous objective increase keeps the solution on an optimal face of the LP until a constraint becomes tight.

\subsubsection{Gini SWF Oracle Algorithm} 
We describe the block-based water-filling algorithm in Algorithm \ref{alg:gini_water_filling}. The algorithm proceeds by filling probabilities in blocks, ensuring that the order constraints are not violated. The chosen block is such that it provides the greatest rate of increase in the Gini SWF value while following the order constraints. Each block calculation takes $\calO(n)$ time, and there are at most $k$ block calculations, resulting in a time complexity of $\calO(kn)$.

\begin{algorithm}[ht]
\caption{Gini Water-Filling Solver}
\label{alg:gini_water_filling}
\begin{algorithmic}[1]
\STATE \textbf{Input:} Vector of mean utilities $u \in \mathbb{R}^n$, weight vector $w$ (where $w_1 \ge w_2 \ge \dots \ge w_n \ge 0$), number of resources $k$.
\STATE \textbf{Output:} Optimal allocation policy $p \in [0, 1]^n$ maximizing $M_{Gini}(\mu \odot p; w)$.

\STATE Sort $u$ in non-decreasing order: $u_{(1)} \le u_{(2)} \le \dots \le u_{(n)}$.
\STATE Let $\pi$ be the permutation such that $u_{(i)} = u_{\pi(i)}$.
\STATE Initialize $p_i = 0$ for all $i \in [n]$, $rem = k$, and initial block set $\mathcal{B} = \{[1, n]\}$.
\STATE Precompute suffix sums $W_i = \sum_{j=i}^n w_j$ and inverse utility sums $I_i = \sum_{j=i}^n 1/u_{(j)}$.

\WHILE{$rem > 0$}
    \STATE Find block $B \in \mathcal{B}$ maximizing the rate: $r(B) = \frac{\sum_{i \in B} w_i}{\sum_{j \in B} 1/u_{(j)}}$. \hfill \algstep{Rate of SWF increase}
    \STATE Let $l(B)$ be the leftmost index of block $B$ and $R$ be the rightmost index.
    
    \STATE $\Delta t \leftarrow (1 - p_{\pi(l(B))}) \cdot u_{(l(B))}$ \hfill \algstep{Time until $p_{l(B)} = 1$}
    \STATE $m \leftarrow \left( \sum_{j \in B} 1/u_{(j)} \right) \cdot \Delta t$ \hfill \algstep{Total probability mass required to fill block $B$}
    
    \IF{$m \le rem$}
        \FOR{$j \in B$}
            \STATE $p_{\pi(j)} \leftarrow p_{\pi(j)} + \Delta t / u_{(j)}$
        \ENDFOR
        \STATE $rem \leftarrow rem - m$
        \STATE $\mathcal{B} \leftarrow (\mathcal{B} \setminus \{B\}) \cup \{[l(B)+1, R]\}$ \hfill \algstep{Leftmost index is now fully allocated}
        \STATE Update precomputed suffix sums for remaining sub-blocks.
    \ELSE
        \STATE $\text{rate\_sum} \leftarrow \sum_{j \in B} 1/u_{(j)}$
        \FOR{$j \in B$}
            \STATE $p_{\pi(j)} \leftarrow p_{\pi(j)} + \frac{rem}{\text{rate\_sum} \cdot u_{(j)}}$
        \ENDFOR
        \STATE $rem \leftarrow 0$
    \ENDIF
\ENDWHILE
\RETURN $p$
\end{algorithmic}
\end{algorithm}

\section{Algorithm for Dependent Rounding}
\label{app:sec:algo_dependent_rounding}
We use the dependent rounding algorithm \cite{gandhi2006dependent,grafstrom2010entropy} to sample a set $S_t$ given the allocation policy vector $\bfp_t$ such that $\bbP(i \in S_t) = p_{t,i}$. This idea is also known as $\pi$-PS sampling, and is popularly used in survey design when samples need to be drawn from a population with unequal probabilities across individuals/groups. Algorithm \ref{alg:dependent_rounding} provides the pseudocode for dependent rounding.
\begin{algorithm}[t]
\caption{$\pi$-ps sampling (pairwise dependent rounding)}
\label{alg:dependent_rounding}
\begin{algorithmic}[1]
\REQUIRE $\pi\in[0,1]^N$ with $\sum_i \pi_i = n$
\ENSURE Sample $S\subseteq[N]$, $|S|=n$
\WHILE{there exist at least two fractional entries in $\pi$}
  \STATE pick distinct fractional indices $i\neq j$
  \STATE $s \gets \pi_i+\pi_j$, \quad draw $r\sim \mathrm{Unif}[0,1]$
  \IF{$s \le 1$}
    \STATE $(\pi_i,\pi_j)\gets
    \begin{cases}
      (0,s) & \text{w.p. }\pi_j/s\\
      (s,0) & \text{w.p. }\pi_i/s
    \end{cases}$
  \ELSE
    \STATE $(\pi_i,\pi_j)\gets
    \begin{cases}
      (1,s-1) & \text{w.p. }(1-\pi_j)/(2-s)\\
      (s-1,1) & \text{w.p. }(1-\pi_i)/(2-s)
    \end{cases}$
  \ENDIF
\ENDWHILE
\STATE \RETURN $S \gets \{i:\pi_i=1\}$
\end{algorithmic}
\end{algorithm}

\section{More Experiments: Linear Weight Decay}
\label{app:sec:more_expts}

We considered an exponential decay of the weight vector in Section \ref{sec:experiments}, which puts greater weight on a few individuals. In this section, we consider a gentler linear decay of the weights, with $w_i = (1+(i-1)/(n-1))/S$, where $S = \sum_{i=1}^{n} (1+(i-1)/(n-1))$. We use the exact same experimental setup as in Section \ref{sec:experiments}, with $n=50$, $U_{i} \sim 0.1 + 0.9 X_i$, with each $X_i$ being Beta-distributed with parameters $(\alpha_i, \beta_i)$ identical to those in Section \ref{sec:experiments}. 

\paragraph{Varying horizon $T$.} We let $q=-2$ and vary $T$ the range $[10^3, 1.28 \cdot 10^5]$. We plot the regret against $T$ with varying $k$ in Figure \ref{app:fig:trends_T}.
For all three SWFs, we observe that the regret roughly scales as $\mathcal{O}(\sqrt{T})$ when compared with the reference dotted line, consistent with our theoretical guarantees. 
Generally, we find that the regret increases with across $T$, then decreasing as $k$ is further increased. However, the relative ordering of regret with changing $k$ varies with increasing $T$. We also observe that the trends are quite different from those in Figure \ref{fig:trends_T}; In general, we observe a faster decrease in the normalized regret with increasing $T$, indicating that the optimal allocation vector is easier to learn.

\begin{figure}[t]
\centering
\begin{subfigure}[t]{0.25\linewidth}
\includegraphics[width=\linewidth]{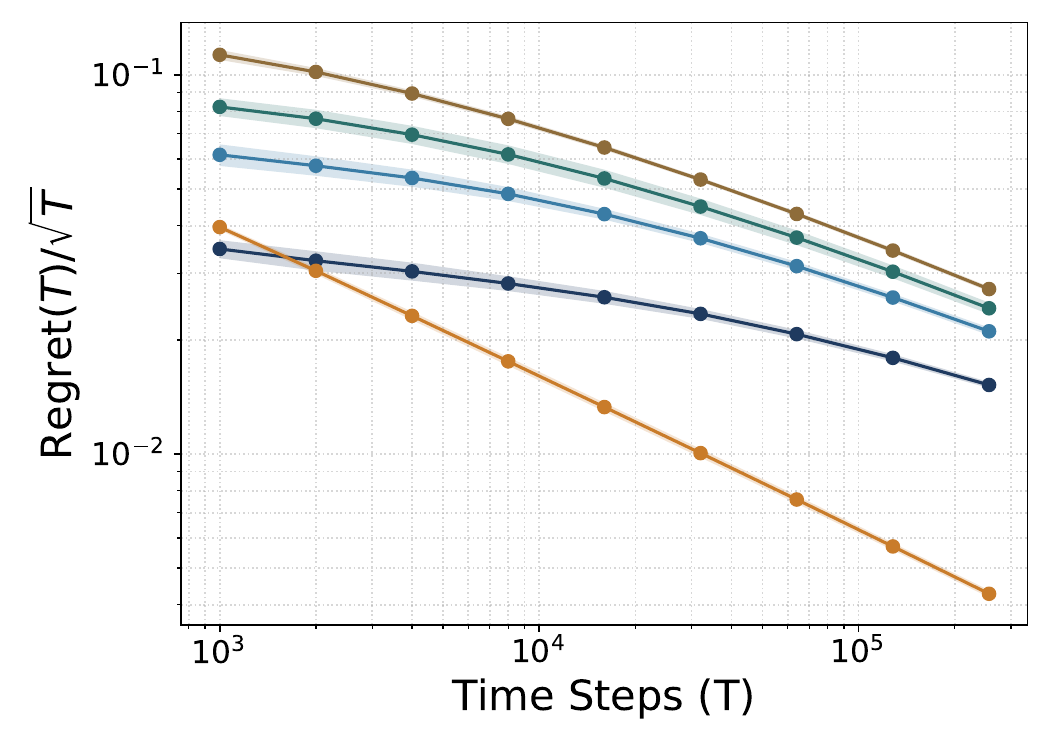}
\caption{WPM SWF}
\label{app:fig:wpm_trends_T}
\end{subfigure}
\begin{subfigure}[t]{0.25\linewidth}
\includegraphics[width=\linewidth]{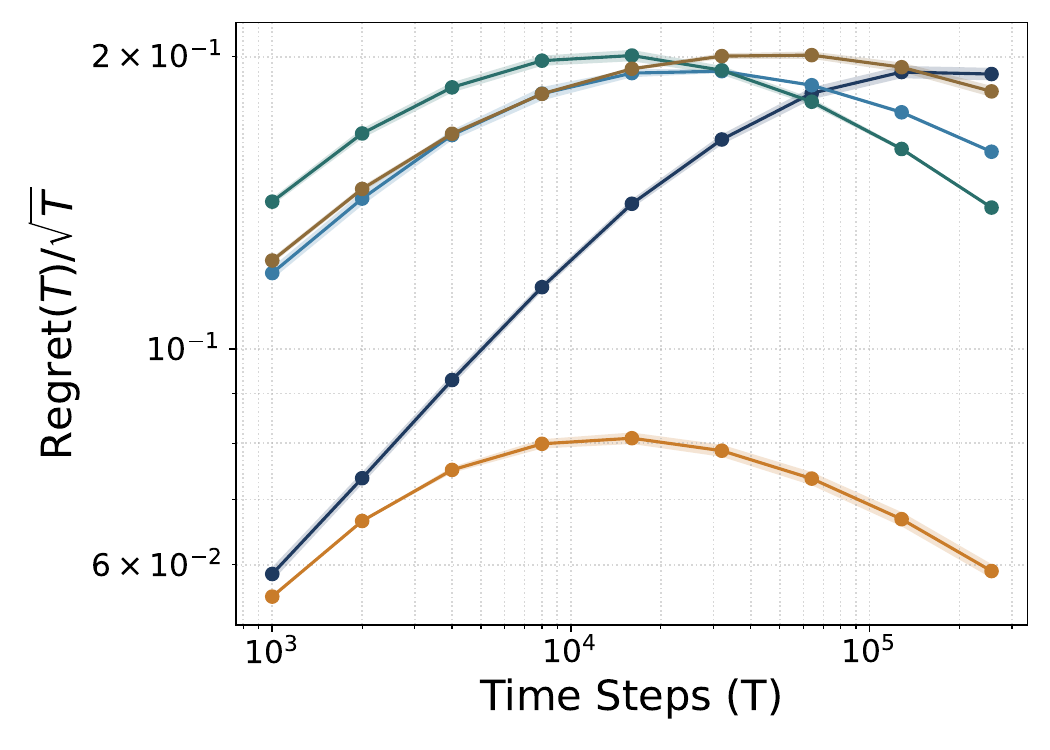}
\caption{Kolm SWF}
\label{app:fig:kolm_trends_T}
\end{subfigure}
\begin{subfigure}[t]{0.25\linewidth}
\includegraphics[width=\linewidth]{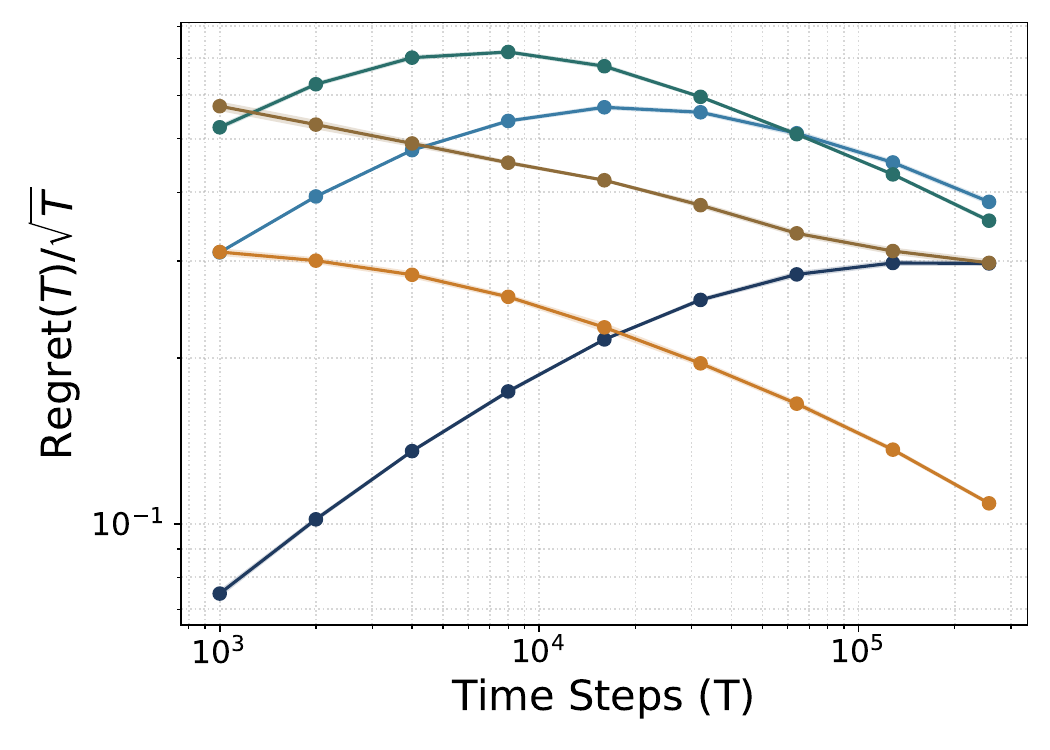}
\caption{Gini SWF}
\label{app:fig:gini_trends_T}
\end{subfigure}
\begin{subfigure}[t]{0.1\linewidth}
\centering
\raisebox{0.3cm}{%
\includegraphics[width=\linewidth]{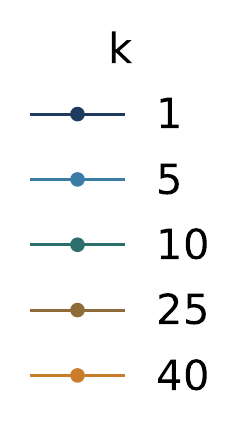}
}
\end{subfigure}

\caption{Normalized regret $R(T) / \sqrt{T}$ versus time horizon $T$ for WPM, Kolm, and Gini SWFs with linear decay in weights. The normalized regret remains bounded across two orders of magnitude in $T$, consistent with our theoretical $\tilde{\mathcal{O}}(\sqrt{T})$ guarantee.}
\label{app:fig:trends_T}
\end{figure}

\paragraph{Varying power value $q$.} We now plot the observed regret with different values of $q$ while varying the number of allocated resources $k$ in Figure \ref{app:fig:trends_pow}. The regret values are the same for $q=-\infty$ for both WPM and Kolm, which is expected since the weights do not matter for egalitarian welfare. However, the trends for finite $q$ are significantly different. We observe that there is significantly more order in the regret for WPM SWF with increasing $q$. For Kolm SWF, the trends for different $k$ are significantly more unstructured, a phenomenon not seen in the exponential weight decay case.
\begin{figure}[t]
\centering
\begin{subfigure}[t]{0.35\linewidth}
\includegraphics[width=\linewidth]{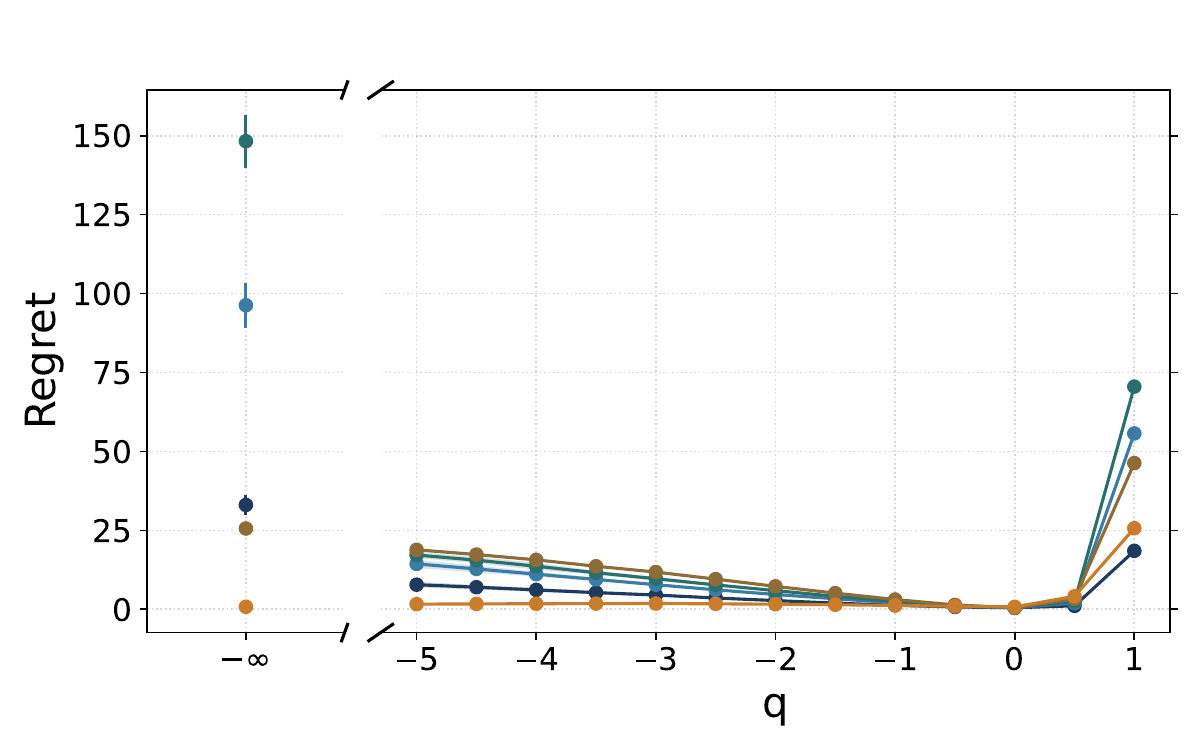}
\caption{WPM SWF}
\label{app:fig:wpm_trends_pow}
\end{subfigure}
\begin{subfigure}[t]{0.35\linewidth}
\includegraphics[width=\linewidth]{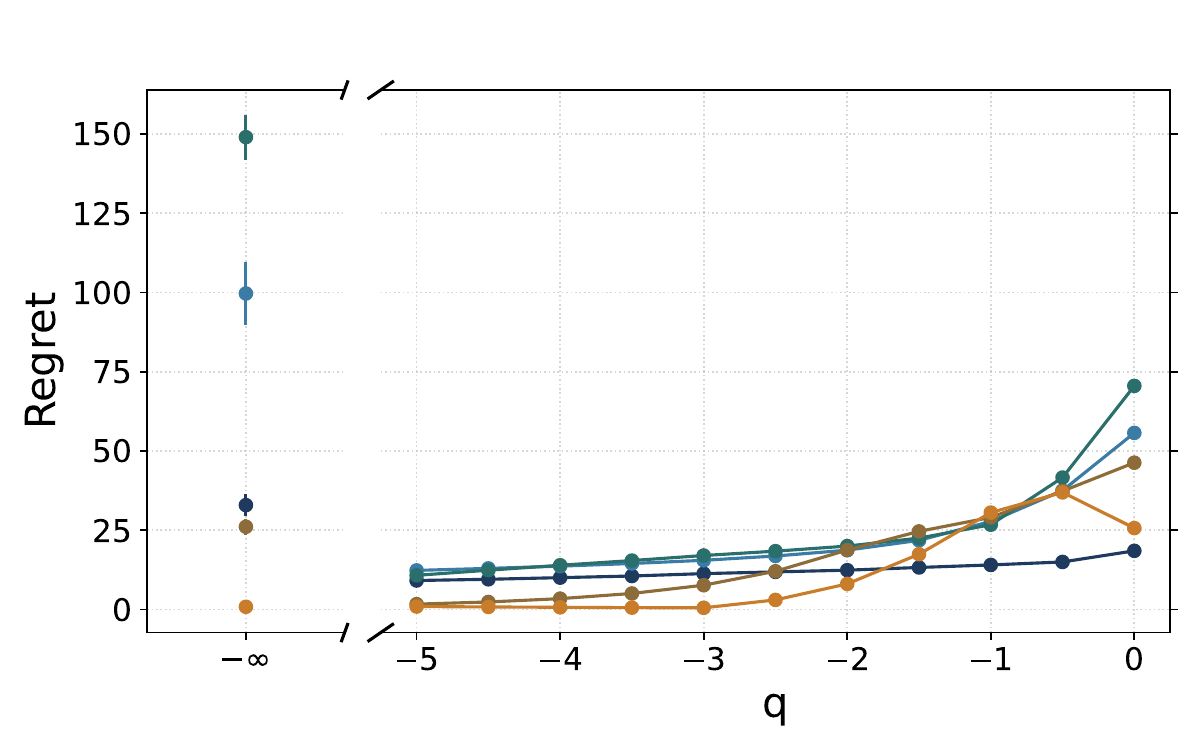}
\caption{Kolm SWF}
\label{app:fig:kolm_trends_pow}
\end{subfigure}
\begin{subfigure}[t]{0.1\linewidth}
\centering
\raisebox{0.5cm}{%
\includegraphics[width=\linewidth]{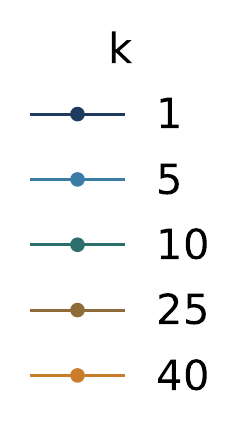}
}
\end{subfigure}

\caption{Trends with varying power value $q$ for WPM and Kolm SWFs with linear decay in weights. $q=-\infty$ corresponds to egalitarian welfare for both, whereas $q=1$ for SWF and $q=0$ for Kolm correspond to utilitarian welfare. We observe that while there is a smooth change in the observed regret, there is significant variability with changing $k$.}
\label{app:fig:trends_pow}
\end{figure}

\paragraph{Varying number of allocated resources $k$.} We plot the observed regret with different number of allocated resources per time step $k$ in Figure \ref{app:fig:trends_k}. For the WPM SWF, there is a more marked increase in regret with $k$ for finite $q$, which is different from the non-monotonic behavior observed in Figure \ref{fig:trends_k}. The Kolm SWF also exhibits similar behavior. The Gini SWF resembles utilitarian welfare, which corresponds to $q=1$ and $q=0$ for WPM SWF and Kolm SWF respectively.

\begin{figure}[t]
\centering
\begin{subfigure}[t]{0.25\linewidth}
\includegraphics[width=\linewidth]{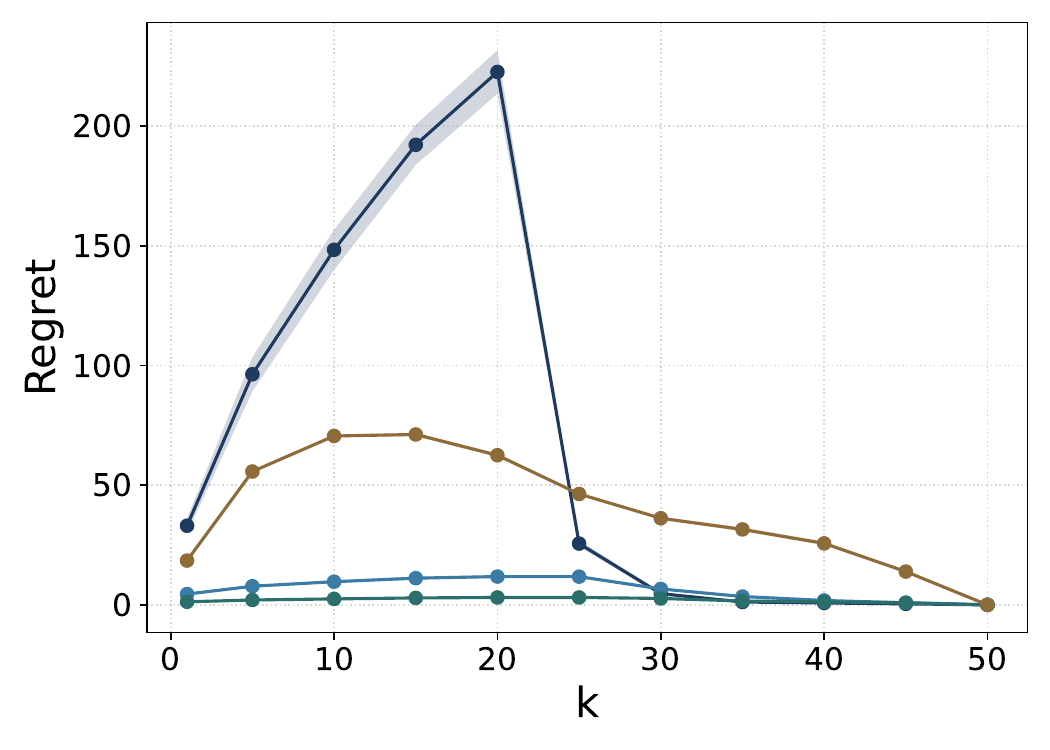}
\caption{WPM SWF}
\label{app:fig:wpm_trends_k}
\end{subfigure}
\begin{subfigure}[t]{0.25\linewidth}
\includegraphics[width=\linewidth]{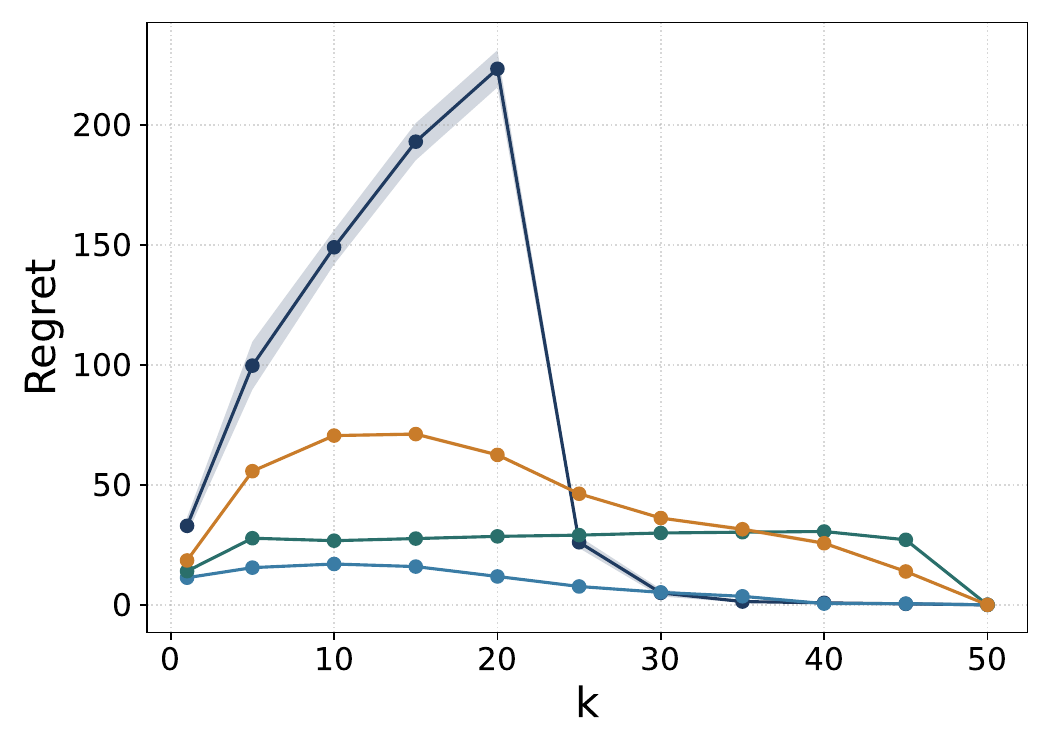}
\caption{Kolm SWF}
\label{app:fig:kolm_trends_k}
\end{subfigure}
\begin{subfigure}[t]{0.25\linewidth}
\includegraphics[width=\linewidth]{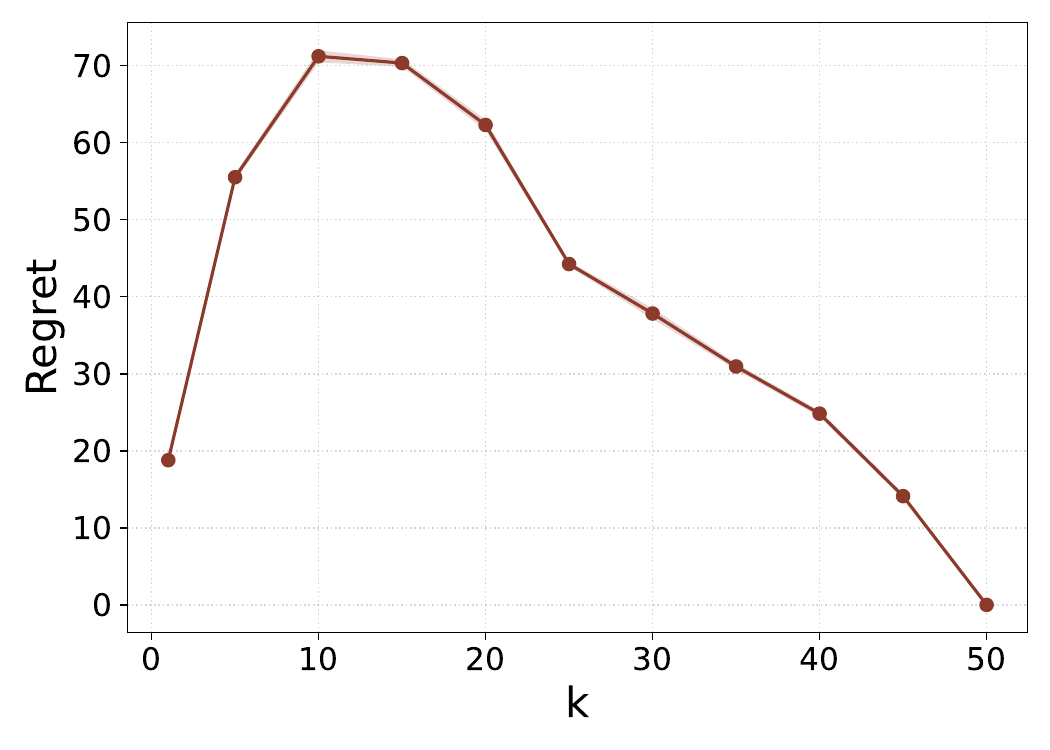}
\caption{Gini SWF}
\label{app:fig:gini_trends_k}
\end{subfigure}
\begin{subfigure}[t]{0.1\linewidth}
\centering
\raisebox{0.1cm}{%
\includegraphics[width=\linewidth]{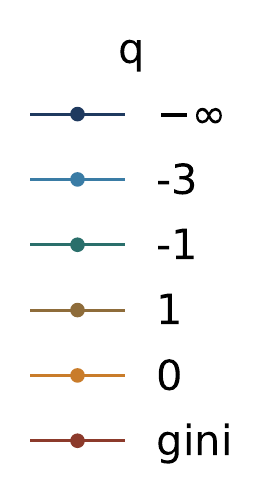}
}
\end{subfigure}

\caption{Trends with varying number of allocated resources $k$ for all three SWFs. We observe that there is a sharp decrease after $k=20$ for egalitarian welfare. The changes with increasing $q$ becomes less gradual for both WPM and Kolm SWFs. Linear weights have some similarity with utilitarian welfare, and we see a similar pattern with varying $k$ for Gini SWF.}
\label{app:fig:trends_k}
\end{figure}

\end{document}